\documentclass[a4paper]{amsart} 

\usepackage{amsmath,amsthm,amssymb,amsfonts,mathrsfs,color,hyperref, mathtools,crop, graphicx, enumitem, todonotes}
\usepackage[]{geometry}

\theoremstyle{plain}
\begingroup
\newtheorem{theorem}{Theorem}[section]
\newtheorem*{theorem*}{Theorem}
\newtheorem*{"theorem"}{``Theorem''}
\newtheorem{corollary}[theorem]{Corollary}

\newtheorem{lemma}[theorem]{Lemma}
\endgroup

\theoremstyle{definition}
\begingroup
\newtheorem{definition}[theorem]{Definition}
\endgroup

\theoremstyle{remark}
\begingroup
\newtheorem{remark}[theorem]{Remark}
\newtheorem{example}[theorem]{Example}
\endgroup 

\numberwithin{equation}{section}
\setcounter{tocdepth}{2}
\setcounter{secnumdepth}{3} 
 

 
\newcommand{\N}{\mathbb N} 
 
\newcommand{\Z}{\mathbb Z} 
\newcommand{\R}{\mathbb R} 
\newcommand{\E}{{\mathbb E}}
\renewcommand{\P}{{\mathbb P}}

\newcommand{\dist}{{\rm dist}}

\newcommand{\spt}{{\mathrm{spt}}}


\renewcommand{\H}{{\mathcal H}}

\renewcommand{\L}{{\mathcal L}}

\newcommand{\F}{{\mathcal F}}

\newcommand{\Risk}{\mathcal{R}}

\newcommand{\LRa} {\Leftrightarrow}
\newcommand{\Ra} {\Rightarrow}

\renewcommand{\d}{\mathrm{d}}

\newcommand{\dx}{\,\mathrm{d}x}
\newcommand{\dy}{\,\mathrm{d}y}
\newcommand{\dw}{\,\mathrm{d}w}
\newcommand{\dz}{\,\mathrm{d}z}


\newcommand{\eps}{\varepsilon}
\newcommand{\average}{{\mathchoice {\kern1ex\vcenter{\hrule height.4pt
width 6pt depth0pt} \kern-9.7pt} {\kern1ex\vcenter{\hrule
height.4pt width 4.3pt depth0pt} \kern-7pt} {} {} }}

\newcommand{\Y}{\mathcal{Y}^\circ}

\DeclareMathOperator*{\argmin}{argmin} 
 
\newcommand{\Rad}{\mathrm{Rad}}
\newcommand{\B}{\mathcal{B}}
\newcommand\showlabel{\addtocounter{equation}{1}\tag{\theequation}}

\allowdisplaybreaks
 
 \makeatletter
\@namedef{subjclassname@2020}{2020 Mathematics Subject Classification}
\makeatother

\begin{document}

\title[A priori estimates for classification problems]{A priori estimates for classification problems using neural networks}

\author{Weinan E}
\address{Weinan E\\
Program for Applied and Computational Mathematics and Department of Mathematics\\
Princeton University\\
Princeton, NJ 08544
}
\email{weinan@math.princeton.edu}

\author{Stephan Wojtowytsch}
\address{Stephan Wojtowytsch\\
Program for Applied and Computational Mathematics\\
Princeton University\\
Princeton, NJ 08544
}
\email{stephanw@princeton.edu}

\date{\today}

\subjclass[2020]{68T07, 
41A30, 
65D40, 
60-08
}
\keywords{Neural network, binary classification, multi-label classification, a priori estimate, Barron space}

\begin{abstract}
We consider binary and multi-class classification problems using hypothesis classes of neural networks. For a given hypothesis class, we use Rademacher complexity estimates and direct approximation theorems to obtain a priori error estimates for regularized loss functionals. 
\end{abstract}

\maketitle


\section{Introduction}

Many of the most prominent successes of neural networks have been in classification problems, and many benchmark problems for architecture prototypes and optimization algorithms are on data sets for image classification. Despite this situation, theoretical results for classification are scarce compared to the more well-studied field of regression problems and function approximation. In this article, we extend the a priori error estimates of \cite{E:2018ab} for regression problems to classification problems.

Compared with regression problems for which we almost always use square loss, there are several different common loss functions for classification problems, which have slightly different mathematical and geometric properties. For the sake of simplicity, we restrict ourselves to binary classification in the introduction.
\begin{enumerate}
\item Square loss can be used also in classification problems with a target function that takes a discrete set of values on the different classes. Such a function coincides with a Barron function $\P$-almost everywhere if the data distribution is such that the different classes have positive distance. In this setting, regression and classification problems are indistinguishable, and the estimates of \cite{E:2018ab} (in the noiseless case) apply. We therefore focus on different models in this article.

\item More often, only one-sided $L^2$-approximation (square hinge loss) or one-sided $L^1$-approx\-imation (hinge loss) is considered, as we only need a function to be large positive/large negative on the different classes with no specific target value. This setting is similar to $L^2$-approximation since minimizers exist, which leads to basically the same a priori estimates as for $L^2$-regression. 

On the other hand, the setting is different in the fact that minimizers are highly non-unique. In particular, {\em any} function which is sufficiently large and has the correct sign on both classes is a minimizer of these risk functionals. No additional regularity is encoded in the risk functional.

\item Loss functionals of cross-entropy type also encourage functions to be large positive or negative on the different classes, but the loss functions are strictly positive on the whole real line (with exponential tails). This means that minimizers of the loss functional do not exist, causing an additional logarithmic factor in the a priori estimates.

On the other hand, these risk functionals regularize minimizing sequences. In a higher order expansion, it can be seen that loss functions with exponential tails encourage maximum margin behaviour, i.e.\ they prefer the certainty of classification to be as high as possible {\em uniformly} over the different label classes. This is made more precise below.
\end{enumerate}

The article is organized as follows. In Section \ref{section binary}, we study binary classification using abstract hypothesis classes and neural networks with a single hidden layer in the case that the two labelled classes are separated by a positive spatial distance. After a brief discussion of the links between correct classification and risk minimization (including the implicit biases of different risk functionals), we obtain a priori estimates under explicit regularization. In particular, we introduce a notion of classification complexity which takes the role of the path-norm in a priori error estimates compared to regression problems. 

The chief ingredients of our analysis are estimates on the Rademacher complexity of two-layer neural networks with uniformly bounded path-norm and a direct approximation theorem for Barron functions by finite neural networks. We briefly illustrate how these ingredients can be used in other function classes, for example those associated to multi-layer networks or deep residual networks in Section \ref{section multi-layer}. The most serious omission in this article are convolutional neural networks, since we are not aware of a corresponding function space theory.

The analysis is extended to multi-label classification in Section \ref{section multi-label}, and to the case of data sets in which different classes do not have a positive spatial separation in Section \ref{section infinite complexity}. The setting where minimizers do not exist is also studied in the case of general $L^2$-regression in Appendix \ref{appendix regression estimates}.

\subsection{Notation}

We denote the support of a Radon measure $\mu$ by $\spt\,\mu$. If $\phi$ is $\mu$ measurable and locally integrable, we denote by $\phi\cdot\mu$ the measure which has density $\phi$ with respect to $\mu$. If $\Phi:X\to Y$ is measurable and $\mu$ is a measure on $X$, we denote by $\Phi_\sharp \mu$ the push-forward measure on $Y$. If $(X,d)$ is a metric space, and $A, B\subseteq X$, we denote the distance between $A$ and $B$ by $\dist(A,B) = \inf_{x\in A, x'\in B} d(x,x')$ and $\dist(x, A) := \dist(\{x\}, A)$.

All measures on $\R^d$ are assumed to be defined on the Borel $\sigma$-algebra (or the larger $\sigma$-algebra which additionally contains all null sets).

\section{Complexity of binary classification}\label{section binary}

\subsection{Preliminaries}
In this section, we introduce the framework in which we will consider classification problems.

\begin{definition}
A {\em binary classification problem} is a triple $(\P, C_+, C_-)$ where $\P$ is a probability distribution on $\R^d$ and $C_+, C_- \subseteq\R^d$ are disjoint $\P$-measurable sets such that $\P(C_+ \cup C_-) = 1$.
\end{definition}

The {\em category function}
\[
y: \R^d\to\R, \qquad y_x = \begin{cases} 1 &x\in C_+\\ -1 &x\in C_-\\ 0 &\text{else}\end{cases}
\]
is $\P$-measurable.

\begin{definition}
We say that a binary classification problem is {\em solvable in a hypothesis class $\H$} of $\P$-measurable functions if there exists $h\in \H$ such that
\begin{equation}
y_x\cdot h(x) \geq 1 \qquad\P-\text{almost everywhere}.
\end{equation}
\end{definition}

We consider the closure of the classes $C_\pm$, which is slightly technical since the classes are currently only defined in the $\P$-almost everywhere sense. Let
\[
\overline C_+ = \bigcap_{A \in \mathcal C_+}A, \qquad \mathcal C_+ = \left\{A\subseteq \R^d \:\big|\: A\text{ is closed and } y \leq 0 \:\:\P-\text{a.e.\ outside of A}\right\}.
\]
The closure of $C_-$ is defined the same way with $-y$ in place of $y$.

\begin{lemma}\label{lemma finite distance}
Assume that every function in $\H$ is Lipschitz continuous with Lipschitz constant at most $L>0$. If $(\P, C_+, C_-)$ is solvable in $\H$, then
\begin{equation}\label{eq lipschitz bound}
\dist(\overline C_+, \overline C_-) \geq \frac2L.
\end{equation}
\end{lemma}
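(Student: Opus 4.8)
The plan is to use the solving function $h \in \H$ to manufacture concrete closed sets that belong to the defining families $\mathcal C_+$ and $\mathcal C_-$, thereby pinning down the essential closures $\overline C_+$ and $\overline C_-$ via the level sets of $h$, and then to invoke the Lipschitz bound. First I would unwind solvability: since $\P(C_+ \cup C_-) = 1$, the category function satisfies $y_x = 1$ $\P$-a.e.\ on $C_+$ and $y_x = -1$ $\P$-a.e.\ on $C_-$, so the hypothesis $y_x\, h(x) \geq 1$ says precisely that $h \geq 1$ holds $\P$-a.e.\ on $C_+$ and $h \leq -1$ holds $\P$-a.e.\ on $C_-$.

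The key step is to promote these almost-everywhere inequalities to genuinely pointwise statements on the closures. Consider the superlevel set $A := \{x \in \R^d : h(x) \geq 1\}$. Because $h$ is Lipschitz, hence continuous, $A$ is closed. Outside $A$ we have $h < 1$; since $h \geq 1$ $\P$-a.e.\ on $C_+$, the set $C_+ \setminus A$ is $\P$-null, so $y \leq 0$ holds $\P$-a.e.\ outside $A$. Thus $A \in \mathcal C_+$, and since $\overline C_+$ is the intersection over all members of $\mathcal C_+$, we obtain $\overline C_+ \subseteq A$, that is, $h \geq 1$ everywhere on $\overline C_+$. Running the symmetric argument with $B := \{x \in \R^d : h(x) \leq -1\}$ against the family $\mathcal C_-$ (using $-y$) gives $h \leq -1$ everywhere on $\overline C_-$.

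Finally I would combine these pointwise bounds with the Lipschitz estimate. For any $x \in \overline C_+$ and $x' \in \overline C_-$ we have $h(x) - h(x') \geq 1 - (-1) = 2$, while $|h(x) - h(x')| \leq L\,|x - x'|$, so $|x - x'| \geq 2/L$. Taking the infimum over all such pairs yields \eqref{eq lipschitz bound}. The main obstacle, and really the only subtle point, is exactly the passage in the second paragraph from the $\P$-almost-everywhere control of $h$ to a pointwise statement on the topological object $\overline C_+$; the trick that resolves it is that the superlevel set $\{h \geq 1\}$ is itself a closed set lying in the admissible family $\mathcal C_+$, so the abstract intersection defining the essential closure is automatically trapped inside a concrete level set of $h$. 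Everything else is routine.
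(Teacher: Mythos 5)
Your proof is correct and follows essentially the same route as the paper: both identify the closed superlevel set $h^{-1}[1,\infty)$ as a member of the family $\mathcal C_+$ (and its analogue for $\mathcal C_-$), conclude $\overline C_\pm$ are trapped in these level sets, and then apply the Lipschitz bound. Your write-up is if anything slightly more explicit about why $y\leq 0$ holds $\P$-a.e.\ outside the superlevel set, but the argument is the same.
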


\begin{proof}
Consider $h\in \H$ such that $h\geq 1$ $\P$-almost everywhere on $C_+$. By continuity, we expect $h\geq 1$ on $\overline C_+$. In the weak setting, we argue as follows: Since $h$ is continuous, we find that $h^{-1}[1,\infty)$ is closed, and since $y \leq 0$ $\P$-almost everywhere outside of $h^{-1}[1,\infty)$, we find that $\overline C_+ \subseteq h^{-1}[1,\infty)$. The same argument holds for $C_-$ and $\overline C_-$.

If $(\P, C_+, C_-)$ is solvable in $\H$, then there exists a function $h\in \H$ such that $y_x\,h(x)\geq 1$ $\P$-almost everywhere, so in particular $h\geq 1$ on $\overline C_+$ and $h\leq -1$ on $\overline C_-$. Since $h$ is $L$-Lipschitz, we find that for any $x\in \overline C_+$ and $x'\in \overline C_-$ we have
\[
2 = \big|h(x) - h(x')\big| \leq L\,|x-x'|.
\]
The result follows by taking the infimum over $x\in \overline C_+$ and $x'\in \overline C_-$. 
\end{proof}

Many hypothesis classes $\H$ are further composed of sub-classes of different complexities (e.g.\ by norm in function spaces or by number of free parameters). In that situation, we can quantify solvability.

\begin{definition}\label{definition complexity classes}
Let $\{\H_Q\}_{Q\in(0,\infty)}$ be a family of hypothesis classes such that $Q < Q' \Ra \H_Q\subseteq \H_{Q'}$. If $\H = \bigcup_{Q>0} \H_Q$, we say that a binary classification problem $(\P, C_1, C_2)$ is {\em solvable in $\H$ with complexity $Q$} if the problem is solvable in $\H_{Q+\eps}$ for every $\eps>0$. We denote
\begin{equation}
Q = Q_\H(\P, C_1, C_2) = \inf \big\{Q'>0 : (\P, C_1, C_2) \text{ is solvable in }\H_{Q'}\big\}.
\end{equation}
\end{definition}

Clearly, a problem is solvable in $\H$ if and only if it is solvable in $\H_Q$ for sufficiently large $Q$, thus if and only if $Q_\H<\infty$.

\begin{example}
If $\H_Q$ is the space of functions which are Lipschitz-continuous with Lipschitz-constant $Q$, then the lower bound \eqref{eq lipschitz bound} is sharp as
\[
h(x) = \frac{\dist(x, \overline C_-) - \dist(x, \overline C_+)}{\delta}
\]
is Lipschitz-continuous with Lipschitz-constant $\leq 2/\delta$ and satisfies $h\geq 1$ on $\overline C_+$, $h\leq -1$ on $\overline C_-$.
\end{example}

\begin{remark}
Assume that $h$ is a hypothesis class of continuous functions. If $\P_n = \frac1n \sum_{i=1}^n\delta_{x_i}$ is the empirical measure of samples drawn from $\P$, then the complexity of $(\P_n, C_-, C_+) $ is lower than that of $(\P, C_-, C_+)$ since a function $h\in \H$ which solves $y_x\cdot h(x)\geq 1$ $\P$-almost everywhere also satisfies $y_x\cdot h(x)\geq 1$ $\P_n$-almost everywhere (with probability $1$ over the sample points $x_1,\dots,x_n$).
\end{remark}

\subsection{Two-layer neural networks}

Let us consider classification using two-layer neural networks (i.e.\ neural networks with a single hidden layer). We consider the ReLU activation function $\sigma(z) = \max\{z,0\}$.

\begin{definition}[Barron space]
Let $\pi$ be a probability measure on $\R^{d+2}$. We set $f_\pi:\R^d\to\R$,
\[
f_\pi(x) = \E_{(a,w,b)\sim \pi} \big[ a\,\sigma(w^Tx+b)\big], \qquad (a,w,b)\in \R\times \R^d\times\R,
\]
assuming the expression is well-defined. We consider the norm
\[
\|f\|_\B = \inf \left\{ \E_{(a,w,b)\sim\pi} \big[|a|\,(|w|+|b|)\big] : f\equiv f_\pi\right\}
\]
and introduce Barron space 
\[
\B = \{f:\R^d\to\R : \|f\|_\B<\infty\}.
\]
\end{definition}

\begin{remark}
For ReLU-activated networks, the infimum in the definition of the norm is attained \cite{barron_new}. To prove this, one can exploit the homogeneity of the activation function and use a compactness theorem in the space of (signed) Radon measures on the sphere.
\end{remark}

\begin{remark}
Like the space of Lipschitz-continuous functions, Barron space itself does not depend on the norm on $\R^d$, but when the norm on $\R^d$ is changed, also the Barron norm is replaced by an equivalent one. Typically, we consider $\R^d$ to be equipped with the $\ell^\infty$-norm. In general, the norm on parameter space ($w$-variables) is always as dual to the one on data space ($x$-variables) such that $|w^Tx| \leq |w|\,|x|$. Usually, this means that the $\ell^1$-norm is considered on parameter space.

The only result in this article which depends on the precise choice of norm on data space is the Rademacher complexity estimate in Lemma \ref{lemma rademacher barron}, which can be traced also into the a priori error estimates. Occasionally in the literature, both data and parameter space are equipped with the Euclidean norm. In that case, the results remain valid and mildly dimension-dependent constants $\log(2d+2)$ can be eliminated. 
\end{remark}

\begin{remark}
Like other function spaces, Barron spaces depend on the domain under consideration. Enforcing the equality $f= f_\pi$ on $\R^d$ is often too rigid and leads to meaningless assignments a large portion of data space. More commonly, we only require that $f= f_\pi$ $\P$-almost everywhere (where $\P$ describes the data distribution) or equivalently $f\equiv f_\pi$ on $\spt\,\P$, e.g.\ in \cite{barron_new,approximationarticle}. This is the smallest sensible definition of the Barron norm, since the infimum is taken over the largest possible class. If $\P$ is a probability distribution and $K$ is a compact set, we may denote
\begin{align*}
\|f\|_{\B(\P)} &= \inf \left\{ \E_{(a,w,b)\sim\pi} \big[|a|\,(|w|+|b|)\big] : f(x) = f_\pi(x) \text{ for $\P$-a.e.\ }x\in\R^d\right\}\\
\|f\|_{\B(K)} &= \inf \left\{ \E_{(a,w,b)\sim\pi} \big[|a|\,(|w|+|b|)\big] : f(x) = f_\pi(x) \text{ for all }x\in K\right\}.
\end{align*}
Always considering $\B(\P)$ leads to the smallest possible constants. Early works on Barron space focused on $\B([0,1]^d)$ to estimate quantities uniformly over unknown distributions $\P$ \cite{E:2018abpub,weinan2019lei}. This uniformity leads to greater convenience. For example, we do not need to distinguish that $f(x) = 1_{\{x>1/2\}}$ is in $\B(\P_1)$ but not $\B(\P_2)$ where $\P_1 = \frac12(\delta_1 + \delta_0)$ and $\P_2$ is the uniform distribution on $(0,1)$.
\end{remark}

The hypothesis class of Barron space is stratified as in Definition \ref{definition complexity classes} by
\[
\B = \bigcup_{R>0} \overline B_R
\]
where $\overline B_R$ denotes the closed ball of radius $R>0$ around the origin in Barron space. We show that if the hypothesis class used for a classification problem is Barron space, a positive distance between classes is not only necessary, but also sufficient for solvability. However, the necessary complexity may be polynomial in the separation and exponential in the dimension.

\begin{theorem}\label{theorem complexity bound}
Assume that $\delta:= \dist(\overline C_+, \overline C_-)>0$ and there exists $R>0$ such that $\spt(\P)\subseteq B_R(0)$. Then there exists a $f\in \B$ such that 
\begin{align*}
\|f\|_{\B} \leq c_d\left(\frac{R+\delta}\delta\right)^d, \qquad fy \equiv 1\quad\P-\text{a.e.}
\end{align*}
where $c_d$ depends only on the volume of the $d$-dimensional unit ball and the properties of a suitable mollifier.
\end{theorem}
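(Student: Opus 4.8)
The plan is to construct $f$ explicitly by mollifying an indicator that separates the two classes, and then to bound its Barron norm by viewing $f$ as a superposition of translated, rescaled mollifiers. First I would set up the geometry. Since $\dist(\overline C_+,\overline C_-)=\delta>0$, the open $\delta/4$-neighborhoods $U_\pm=\{x:\dist(x,\overline C_\pm)<\delta/4\}$ are disjoint and in fact $\dist(U_+,U_-)\ge\delta/2$. I fix a smooth mollifier $\eta$ with $\spt\eta\subseteq B_1(0)$, $\eta\ge0$, $\int\eta=1$, set $\eta_\eps(x)=\eps^{-d}\eta(x/\eps)$ with $\eps=\delta/4$, and define
\[
g=\mathbf 1_{U_+}-\mathbf 1_{U_-},\qquad f=g*\eta_\eps .
\]

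Second, I would verify the classification identity. For $x\in\overline C_+$ the kernel $\eta_\eps(x-\cdot)$ is supported in $B_\eps(x)$, which lies entirely inside $U_+$ (every point within $\eps=\delta/4$ of $\overline C_+$ belongs to $U_+$) and is disjoint from $U_-$; hence $f(x)=\int_{\R^d}g(z)\,\eta_\eps(x-z)\dz=\int\eta_\eps(x-z)\dz=1$. Symmetrically $f\equiv-1$ on $\overline C_-$. As in the proof of Lemma~\ref{lemma finite distance}, $\overline C_\pm\subseteq\overline{B_R}$ because $\P$ assigns no mass outside $B_R$, and $\spt\P\subseteq\overline C_+\cup\overline C_-$ up to a $\P$-null set; therefore $f\,y\equiv1$ $\P$-almost everywhere, which is exactly the required conclusion.

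Third, and this is the heart of the matter, I would estimate $\|f\|_\B$. The structural fact I would exploit is that the Barron norm is subadditive under superposition: if $f=\int f_z\,d\nu(z)$ with $f_z\in\B$, then concatenating the representing measures gives $\|f\|_\B\le\int\|f_z\|_\B\,d|\nu|(z)$. Writing $f(x)=\int_{\R^d}g(z)\,\eta_\eps(x-z)\dz$ displays $f$ as such a superposition of the translates $\eta_\eps(\cdot-z)$ against the bounded density $g$, which is supported in $B_{R+\delta}(0)$. Hence
\[
\|f\|_\B\le\|g\|_{L^1}\sup_{z\in B_{R+\delta}}\|\eta_\eps(\cdot-z)\|_\B\le 2\,|B_{R+\delta}|\sup_{z\in B_{R+\delta}}\|\eta_\eps(\cdot-z)\|_\B .
\]
The factor $\|g\|_{L^1}\le 2|B_{R+\delta}|$ supplies the volume term $\sim(R+\delta)^d$, while the Barron norm of a single rescaled translate is read off from the homogeneity identity $\eta_\eps(x-z)=\eps^{-d}\E_{(a,w,b)\sim\pi}\big[a\,\sigma\big((w/\eps)\cdot x+(b-w\cdot z/\eps)\big)\big]$ for a fixed representation $\eta=f_\pi$, which provides the $\eps^{-d}$-type scaling together with the mollifier-dependent constants $\E_\pi[|a|(|w|+|b|)]$. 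Collecting the factors and recalling $\eps=\delta/4$ yields a bound of the announced shape, with $c_d$ depending only on $|B_1|$ and on $\eta$.

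The main obstacle is precisely this last estimate: pinning down the sharp power of $\delta$ and controlling the shifted biases $b-w\cdot z/\eps$ produced by translating the mollifier across the whole support, since a crude bound over-counts the contribution of neurons centered far from the origin and inflates the exponent. To reach the clean estimate I would either optimize the representation of $\eta$ so that these shifts are reabsorbed into the volume factor, or bypass the superposition bound altogether and invoke a direct representation theorem for the Barron norm of the convolution $g*\eta_\eps$ through its Fourier (or Radon) transform, where the rapid decay of $\hat\eta$ tames the high-frequency content introduced by the interface at scale $\delta$. The geometric construction and the classification identity are routine, and the subadditivity step is immediate from the definition of $\|\cdot\|_\B$; it is the careful bookkeeping of the $\delta$-scaling that fixes both the exponent $d$ and the dimension-dependent constant $c_d$.
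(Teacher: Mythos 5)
Your construction and the verification that $f\,y\equiv 1$ holds $\P$-almost everywhere match the paper's proof in all essentials: the paper mollifies the function $\tilde y = \sign\big(\dist(x,\overline C_-)-\dist(x,\overline C_+)\big)$ (truncated to $B_{R+\delta}$) at scale $\delta/2$ rather than your difference of indicators of $\delta/4$-neighbourhoods, but this difference is cosmetic, and your argument that $\overline C_\pm\subseteq \overline{B_R(0)}$ and that the kernel centred at a point of $\overline C_+$ sees only $U_+$ is sound.

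The gap is exactly where you locate it, and it is a genuine one: the Barron norm estimate --- the entire quantitative content of the theorem --- is never carried out. The superposition route you set up cannot deliver the stated exponent: for a fixed representation $\eta=f_\pi$, the translated and rescaled mollifier satisfies only
\[
\big\|\eta_\eps(\cdot-z)\big\|_\B \;\leq\; \eps^{-d}\,\E_\pi\Big[|a|\big(\eps^{-1}|w|+|b|+\eps^{-1}|w|\,|z|\big)\Big]\;\leq\; C_\eta\,\eps^{-d-1}\big(1+|z|\big),
\]
so after multiplying by $\|g\|_{L^1}\lesssim (R+\delta)^d$ and taking $|z|\leq R+\delta$, $\eps=\delta/4$, you obtain a bound of order $\big((R+\delta)/\delta\big)^{d+1}$ rather than $\big((R+\delta)/\delta\big)^{d}$; the extra power comes from the $\eps^{-1}|w|$ cost of each individual translate and is intrinsic to the method, not to loose bookkeeping. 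Your proposed fallback is precisely what the paper does: it applies Barron's spectral criterion $\|f\|_\B\leq\int_{\R^d}|\hat f|\,|\xi|\,\d\xi$ to $f=\eta_\delta*\tilde y$, factors $|\hat f|=|\widehat{\eta_\delta}|\,|\widehat{\tilde y}|\leq \|\tilde y\|_{L^1(B_{R+\delta})}\,|\widehat{\eta_\delta}|$ to extract the volume factor $c_d(R+\delta)^d$, and rescales the remaining integral $\int|\widehat{\eta_\delta}(\xi)|\,|\xi|\,\d\xi$ to pull out the power of $\delta^{-d}$. That two-line Fourier computation is the whole proof of the norm bound; since you name it only as a possible escape route without performing it (and your primary route provably loses a power), the proposal is incomplete at its central step.
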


\begin{proof}
We define
\[
\tilde y:\R^d\to \R, \qquad \tilde y(x) = \begin{cases} 1 &\text{if }\dist (x, \overline C_+) < \dist (x, \overline C_-)\text{ and }|x|<R+\delta\\ -1 &\text{if }\dist (x, \overline C_-) < \dist (x, \overline C_+)\text{ and }|x|<R+\delta\\ 0 &\text{if }\dist (x, \overline C_+) = \dist (x, \overline C_-)\text{ or }|x|\geq R+\delta\end{cases}, \qquad f = \eta_\delta * \tilde y
\]
where $\eta_\delta$ is a standard mollifier supported on $B_{\delta/2}(0)$. We find that $y\cdot f \equiv 1$ on $\overline C_- \cup \overline C_+$. Additionally, the Fourier transform of $f$ satisfies the inequality
\begin{align*}
\int_{\R^d} |\hat f|\,|\xi|\,\d\xi
	&= \int_{\R^d} |\widehat{\eta_\delta*\tilde y}|\,|\xi|\,\d\xi\\
	&= \int_{\R^d} |\widehat{\eta_\delta}|\,|\widehat{\tilde y}|\,|\xi|\,\d\xi\\
	&\leq \|\widehat{\tilde y}\|_{L^\infty} \int_{\R^d} \left|\delta\,\hat \eta_1\left(\delta\xi\right)\right|\,|\xi|\,\d\xi\\
	&\leq \|\tilde y\|_{L^1(B_{R+\delta})}\,\delta^{-d}\int_{\R^d} \left|\hat \eta_1\left(\delta\xi\right)\right|\,|\delta \xi|\,\delta^d\,\d\xi\\
	&= c_d \left(\frac{R+\delta}\delta\right)^d\,\int_{\R^d}|\hat \eta_1|\,|\xi|\,\d\xi,
\end{align*}
so by Barron's spectral criterion \cite{barron1993universal} we have
\[
\|f\|_\B \leq \int_{\R^d} |\hat f|\,|\xi|\,\d\xi \leq c_d \left(\frac{R+\delta}\delta\right)^d\,\int_{\R^d}|\hat \eta_1|\,|\xi|\,\d\xi.
\]
\end{proof}

The exponential dependence on dimension is a worst case scenario and not expected if we can identify low-dimensional patterns in the classification problem. In particularly simple problems, even the rate obtained in Lemma \ref{lemma finite distance} is optimal.

\begin{example}
If $\overline C_-\subseteq \{x : x_1< -\delta/2\}$ and $\overline C_+\subseteq \{x : x_1> \delta/2\}$, then
\[
f(x) = \frac x\delta = \frac{\sigma(e_1^Tx) - \sigma(-e_1^Tx)}\delta
\]
is a Barron function of norm $2/\delta$ on $\R^d$ which satisfies $y\cdot f\geq 1$ on $\overline C_+\cup \overline C_-$, so $Q_{\B} \leq \frac{2}\delta$.
\end{example}

\begin{example}
Another example with a relatively simple structure are radial distributions. Assume that $C_+= \lambda \cdot S^{d-1}$ and $C_- = \mu\cdot S^{d-1}$ for some $\lambda,\mu>0$. The data distribution $\P$ does not matter, only which sets are $\P$-null. For our convenience, we recognize the radial structure of the problem and equip $\R^d$ with the Euclidean norm, both in the $x$ and $w$ variables.

We assume that $\spt\,\P = C_+\cup C_-$ such that the inequality $y\cdot h \geq 1$ has to be satisfied on the entire set $C_+\cup C_-$. Consider the function
\[
f_{\alpha,\beta}(x) = \alpha + \beta\,|x| = \alpha\,\sigma(0) + c_d\,\beta \int_{S^{d-1}} \sigma(w^Tx)\,\pi^0(\d w)
\]
where $\pi^0$ denotes the uniform distribution on the sphere and $c_d$ is a normalizing factor. We compute that
\[
\int_{S^{d-1}} \sigma(w^Tx)\,\pi^0(\d w) =  \frac{\int_0^1w_1(1-w_1^2)^{d-2}\dw_1}{\int_{-1}^1 (1-w_1^2)^{d-2}\dw_1} |x| = \frac{\frac{1}{2\,\frac{d-2}2+2}}{\frac{\sqrt{\pi}\,\Gamma(\frac{d-2}2+1)}{\Gamma(\frac{d-2}2 + \frac32)}}\,|x| = \frac{\Gamma((d+1)/2)}{\sqrt{\pi}\,d\,\Gamma(d/2)}|x|
\]
The normalizing factor satisfies
\[
c_d = \frac{\sqrt{\pi}\,d\,\Gamma(d/2)}{\Gamma((d+1)/2)} = \sqrt{2\pi d} + O\left(d^{-1/2}\right).
\]
Thus for sufficiently large $d$ we have $\|f_{\alpha,\beta}\|_\B \leq |\alpha| + \sqrt{2\pi d +1} \,|\beta|$ and 
\[
f_{\alpha,\beta} = \alpha + \beta \mu \text{ on } C_-, \qquad f_{\alpha,\beta}\equiv \alpha + \lambda\mu \text{ on }C_+.
\]
Thus $f_{\alpha,\beta}$ satisfies $y\cdot f_{\alpha,\beta}\geq 1$ $\P$-almost everywhere if and only if
\[
\begin{cases} \alpha + \beta\mu &\leq -1\\	\alpha + \beta\lambda &\geq 1\end{cases} \qquad \Ra\qquad \begin{cases}\beta(\lambda-\mu)&\geq 2\\ \alpha + \beta\lambda &\geq 1\end{cases}.
\]
If we assume that $\lambda>\mu$, we recognize that $R= \lambda$ and $\delta= \lambda-\mu$, so we can choose 
\[
\beta = \frac2{\lambda-\mu} = \frac2\delta, \qquad \alpha = 1 - \frac{2\lambda}{\lambda-\mu}= 1 - \frac{2R}\delta
\]
and obtain that 
\begin{align*}
Q_\B(\P, C_+, C_-) \leq 1 + \frac{2R}{\delta} + \frac{4\,\sqrt{2\pi d+1}}\delta.
\end{align*}
A mild dimension-dependence is observed, but only in the constant, not the dependence on $\delta$.
 If the data symmetry is cylindrical instead (i.e. $C_\pm = \lambda_\pm \cdot S^{k-1} \times \R^{d-k}$), the constant $\sqrt{d}$ can be lowered to $\sqrt{k}$ by considering a Barron function which only depends on the first $k$ coordinates.
\end{example}

\begin{remark}
The fact that cylindrical structures can be recognized with lower norm than radial structures is an advantage of shallow neural networks over isotropic random feature models. 

If the data distribution is radial, but the classes are alternating (e.g.\ $C_- = S^{d-1}\cup 3\cdot S^{d-1}$ and $C_+ = 2\cdot S^{d-1}$), it may be useful to use a neural network with at least two hidden layers, the first of which only needs to output the radial function $|x|$. A related observation concerning a regression problem can be found in \cite[Remark 5.9]{barron_new}. Classification using deeper neural networks is discussed briefly in Section \ref{section multi-layer}.
\end{remark}

{
On the other hand, in general the complexity of a classification problem may be exponentially large in dimension.
}

\begin{example}
For $\delta=\frac1N$, let $\P= \frac1{(2N+1)^d} \sum_{x\in \{-N,\dots,N\}^d} \delta_{x/N}$ be the uniform distribution on the grid $(\delta\Z)^d$ inside the hypercube $[-1,1]^d$. In particular, $\P$ is a probability distribution on $[-1,1]^d$. Since any two points in the grid on which $\P$ is supported have distance at least $\delta = 1/N$, we find that for any of the $(2N+1)^d$ binary partitions of $\spt\,\P$, the distance between the classes is $\delta$.
We use Rademacher complexity to show that at least one of these partitions has classification complexity at least
\begin{equation}
Q\geq \left(\frac{2+\delta}{\delta}\right)^{d/2}
\end{equation}
in Barron space. A more thorough introduction to Rademacher complexity is given below in Definition \ref{definition rademacher} and Lemma \ref{lemma rademacher barron} for further background, but we present the proof here in the more natural context.

Let $Q$ be such that for any category function $y:\spt\,\P\to \{-1,1\}$, there exists $h^*$ such that $h^*(x)\cdot y_x\geq 1$ for all $x\in\spt\,\P$ and $\|h^*\|_\B\leq Q$. As usual, $\H_Q$ denotes the ball of radius $Q>0$ in Barron space. Furthermore, we denote by $\psi:\R\to\R$ the $1$-Lipschitz function which is $-1$ on $(-\infty,-1]$ and $1$ on $[1,\infty)$. By assumption, the hypothesis class $\F_Q = \{\psi\circ h : h\in \H_Q\}$ coincides with the class $\F^*$ of all function $h:\spt\,\P\to \{-1,1\}$. Finally, we abbreviate $n_d = (2N+1)^d$ and $\spt\,\P = \{x_1,\dots, x_{n_d} \}$. If $\xi$ are iid random variables which take values $\pm 1$ with equal probability, then
\begin{align*}
1 &= \E_{\xi} \left[\sup_{f\in \F^*} \frac1{n_d}\sum_{i=1}^{n_d}\xi_i\,f(x_i)\right]\\
	&= \Rad (\F^*, \spt\,\P)\\
	&= \Rad(\F_Q, \spt\,\P)\\
	&\leq \Rad(\H_Q, \spt\,\P)\\
	&\leq \frac{Q}{\sqrt{n_d}},
\end{align*}
where we used the contraction lemma for Rademacher complexities, \cite[Lemma 26.9]{shalev2014understanding}.

\end{example}

A particularly convenient property in Barron space is the fact that a finitely parametrized subset has immense approximation power, as formalized below. The scalar-valued $L^2$-version of the following result was proved originally in \cite{E:2018ab} and a weaker version in \cite{barron1993universal}. The proof goes through also in the Hilbert space-valued case and with the slightly smaller constant claimed below.

\begin{theorem}[Direct Approximation Theorem, $L^2$-version]\label{direct approximation theorem barron}
Let $f^* \in \B^k$ be a vector-valued Barron function, i.e.\
\[
f^*(x) = \E_{(a,w,b)\sim\pi} \big[a\,\sigma(w^Tx+b)\big]
\]
where $\pi$ is a parameter distribution on $\R^k\times\R^d\times \R$ and 
\[
\|f^*\|_\B = \E_{(a,w,b)\sim \pi}\big[|a|_{\ell^2}\,\big(|w|+|b|\big)\big].
\]
Let $\P$ be a probability distribution such that $\spt(\P)\subseteq [-R,R]^d$. Then there exists $f_m (x)= \frac1m \sum_{i=1}^m a_i\,\sigma(w_i^Tx+b_i)$ such that
\begin{equation}\label{eq lp approximation}
\frac1m \sum_{i=1}^m |a_i|_{\ell^2}\big[|w_i|+|b_i|\big] \leq \|f^*\|_\B, \qquad \left\|f_m - f^*\right\|_{L^2(\P)} \leq \frac{\|f^*\|_\B\,\max\{1,R\}}{\sqrt{m}}.
\end{equation}
\end{theorem}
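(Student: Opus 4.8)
The plan is to realize $f_m$ as an empirical average of $m$ independent random neurons (a Maurey/Monte-Carlo sampling argument) and then invoke the probabilistic method: I will show that the \emph{expected} squared $L^2(\P)$-error of such an average is at most $\|f^*\|_\B^2\max\{1,R\}^2/m$, so that at least one realization meets the claimed bound. First I would fix a representation $\pi$ realizing $\lambda := \|f^*\|_\B = \E_{(a,w,b)\sim\pi}[|a|_{\ell^2}(|w|+|b|)]$ (the infimum is attained by the remark following the Barron-space definition; an $\eps$-optimal choice would also do, up to replacing $\|f^*\|_\B$ by $\|f^*\|_\B+\eps$). The naive idea of drawing $(a_i,w_i,b_i)$ directly from $\pi$ and averaging $a_i\sigma(w_i^Tx+b_i)$ produces the correct mean $f^*$, but its variance is controlled only by $\E_\pi[|a|_{\ell^2}^2(|w|+|b|)^2]$, which cannot be bounded by $\lambda^2$. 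The crucial step is therefore a reweighting of the parameter measure.

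I would define the probability measure $\tilde\pi$ by $\d\tilde\pi = \lambda^{-1}|a|_{\ell^2}(|w|+|b|)\,\d\pi$ (its total mass is exactly $\lambda/\lambda = 1$), which assigns no mass to the parameters with $|a|_{\ell^2}(|w|+|b|)=0$, since those contribute nothing to $f^*$. One then rewrites
\[
f^*(x) = \E_\pi\big[a\,\sigma(w^Tx+b)\big] = \E_{\tilde\pi}\!\left[\frac{\lambda\,a}{|a|_{\ell^2}(|w|+|b|)}\,\sigma(w^Tx+b)\right] =: \E_{\tilde\pi}\big[g(x;a,w,b)\big].
\]
The point of this normalization is a pointwise bound that is uniform in the parameters: for $x\in\spt\P\subseteq[-R,R]^d$ the duality of the norms gives $\sigma(w^Tx+b)\le |w^Tx+b|\le |w|\,R+|b|\le\max\{1,R\}(|w|+|b|)$, whence $|g(x;a,w,b)|_{\ell^2}\le \lambda\max\{1,R\}$ for every $x\in\spt\P$ and $\tilde\pi$-a.e.\ parameter.

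Next I would draw $(a_i,w_i,b_i)$, $i=1,\dots,m$, iid from $\tilde\pi$ and set $\hat a_i = \lambda a_i/(|a_i|_{\ell^2}(|w_i|+|b_i|))$, so that $f_m(x) = \frac1m\sum_{i=1}^m \hat a_i\,\sigma(w_i^Tx+b_i) = \frac1m\sum_{i=1}^m g(x;a_i,w_i,b_i)$ is an average of iid random functions with mean $f^*$. Using independence and the uniform bound above, the usual bias--variance computation yields
\[
\E\big[\|f_m-f^*\|_{L^2(\P)}^2\big] = \int_{\R^d}\frac1m\,\mathrm{Var}_{\tilde\pi}\big(g(x;\cdot)\big)\,\P(\d x) \le \frac1m\int_{\R^d}\E_{\tilde\pi}\big[|g(x;\cdot)|_{\ell^2}^2\big]\,\P(\d x) \le \frac{\lambda^2\max\{1,R\}^2}{m}.
\]
Hence some realization satisfies $\|f_m-f^*\|_{L^2(\P)}\le \lambda\max\{1,R\}/\sqrt m$, which is the approximation estimate. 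For the path-norm bound, note that by construction $|\hat a_i|_{\ell^2}(|w_i|+|b_i|) = \lambda$ for each $i$, so $\frac1m\sum_{i=1}^m|\hat a_i|_{\ell^2}(|w_i|+|b_i|) = \lambda = \|f^*\|_\B$, which is in fact the sharp form of the inequality claimed in \eqref{eq lp approximation}.

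The main obstacle, and the heart of the argument, is precisely the reweighting that trades the uncontrolled second moment $\E_\pi[|a|_{\ell^2}^2(|w|+|b|)^2]$ for the pointwise bound $|g|_{\ell^2}\le\lambda\max\{1,R\}$; once this is in place everything reduces to the standard variance estimate for an empirical mean of independent vector-valued functions. The only remaining technical care is to guarantee that $g$ is well defined $\tilde\pi$-almost everywhere, which follows since the reweighting density vanishes exactly where the denominator does.
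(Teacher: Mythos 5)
Your proof is correct and is essentially the standard Maurey/Monte--Carlo sampling argument that the paper itself relies on (it does not reprove the theorem but cites \cite{E:2018ab}, where exactly this reweighting-plus-variance computation is carried out, here extended verbatim to the vector-valued case). The two key points --- normalizing $\pi$ to $\tilde\pi$ so that the sampled neurons satisfy the uniform bound $|g|_{\ell^2}\le\lambda\max\{1,R\}$ and observing that the path norm of every realization equals $\lambda$ deterministically, so the two bounds in \eqref{eq lp approximation} hold simultaneously for the good realization --- are both handled properly.
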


A similar result holds in the uniform topology on any compact set. A proof in the scalar-valued case can be found in \cite[Theorem 12]{review_article}, see also \cite[Remark 13]{review_article}. The scalar version is applied component-wise below.

\begin{theorem}[Direct Approximation Theorem, $L^\infty$-version]\label{direct approximation theorem uniform}
Let $K\subset[-R,R]^d$ be a compact set in $\R^d$ and $f^*\in \B^k$ a vector-valued Barron function.
Under the same conditions as above, there exists a two-layer neural network with $km$ parameters $\tilde f_m (x)= \frac1{km} \sum_{i=1}^{km} \tilde a_i\,\sigma(\tilde w_i^Tx+\tilde b_i)$ such that
\begin{equation}\label{eq linfty approximation}
\frac1{km} \sum_{i=1}^{km} |a_i|_{\ell^2}\big[|w_i|+|b_i|\big] \leq \|f^*\|_\B,\qquad \|f_m - f^*\|_{C^0(K)} \leq \|f^*\|_\B \max\{1,R\}\,\sqrt{\frac{k\,(d+1)}m}.
\end{equation}
\end{theorem}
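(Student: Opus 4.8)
The plan is to reduce the vector-valued claim to the scalar $L^\infty$ approximation theorem, which I will treat as a black box, by analysing the $k$ output coordinates separately and recombining them. Write $f^* = (f^*_1,\dots,f^*_k)$ with $f^*_j(x) = \E_{(a,w,b)\sim\pi}[a_j\,\sigma(w^Tx+b)]$. Since $|a_j|\le |a|_{\ell^2}$ pointwise in parameter space, each component satisfies $\|f^*_j\|_\B \le \E_\pi[|a_j|(|w|+|b|)] \le \|f^*\|_\B$; writing $c=|w|+|b|$ and applying Cauchy--Schwarz, one also gets the sharper $\sum_{j=1}^k\|f^*_j\|_\B^2 \le \sum_j(\E_\pi[|a_j|c])^2 \le (\E_\pi[|a|_{\ell^2}c])^2 = \|f^*\|_\B^2$, which records that the $\ell^2$ structure of the output weights is what must be preserved.

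For the construction I would \emph{not} approximate the coordinates by independent networks, but instead share a single hidden layer across all $k$ outputs. Concretely, I run the reweighted Monte-Carlo sampling used to prove the $L^2$ version (Theorem \ref{direct approximation theorem barron}): after normalising $|w|+|b|\equiv 1$ and drawing $(a_i,w_i,b_i)$ from the tilted distribution with density proportional to $|a|_{\ell^2}$, rescaling the output weights to $\hat a_i = \|f^*\|_\B\,a_i/|a_i|_{\ell^2}\in\R^k$ produces neurons $\hat a_i\,\sigma(w_i^Tx+b_i)$ for which $|\hat a_i|_{\ell^2}(|w_i|+|b_i|)\equiv \|f^*\|_\B$. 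The resulting two-layer network $f_m=\frac1m\sum_{i=1}^m \hat a_i\,\sigma(w_i^Tx+b_i)$ (with $km$ scalar output parameters) is an unbiased estimator of $f^*$ and has path norm exactly $\|f^*\|_\B$, giving the first inequality for free.

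The $C^0$ bound is obtained coordinatewise, which is the sense in which the scalar theorem is applied component-wise. From $|f_m(x)-f^*(x)|_{\ell^2}^2=\sum_{j=1}^k|(f_m)_j(x)-f^*_j(x)|^2$ one gets $\|f_m-f^*\|_{C^0(K)}^2\le\sum_j\|(f_m)_j-f^*_j\|_{C^0(K)}^2$. Each coordinate $(f_m)_j=\frac1m\sum_i(\hat a_i)_j\,\sigma(w_i^Tx+b_i)$ is a scalar Monte-Carlo estimator of $f^*_j$ whose summands have scale $|(\hat a_i)_j|(|w_i|+|b_i|)\le|\hat a_i|_{\ell^2}=\|f^*\|_\B$, so the covering-number estimate behind the scalar $L^\infty$ theorem bounds $\E\|(f_m)_j-f^*_j\|_{C^0(K)}^2\le \|f^*\|_\B^2\max\{1,R\}^2(d+1)/m$. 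Summing over $j$ gives $\E\|f_m-f^*\|_{C^0(K)}^2\le\|f^*\|_\B^2\max\{1,R\}^2\,k(d+1)/m$, so some realisation attains $\|f_m-f^*\|_{C^0(K)}\le\|f^*\|_\B\max\{1,R\}\sqrt{k(d+1)/m}$; since the path norm equals $\|f^*\|_\B$ deterministically, that same network satisfies both bounds.

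The step I expect to be the main obstacle is arranging that the path-norm bound $\|f^*\|_\B$ and the $C^0$ rate hold \emph{simultaneously}. Approximating each coordinate by its own network, as a literal reading of ``component-wise'' might suggest, forces the path norms to add in $\ell^1$ and inflates the bound to $\sum_j\|f^*_j\|_\B\le\sqrt k\,\|f^*\|_\B$. Sharing the hidden layer is precisely what lets the weight norm $|a|_{\ell^2}$ be charged once rather than summed, so the factor $\sqrt k$ migrates out of the path norm and reappears only in the error, through the $\ell^2$ recombination of the coordinate sup-norms. The remaining analytic content, namely the scalar $\sqrt{d+1}$ rate for uniform approximation on $K$, is imported wholesale from the cited scalar $L^\infty$ theorem.
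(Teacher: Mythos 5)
Your proposal is correct in substance, but it does \emph{not} follow the paper's route: the paper offers no proof at all here, only the remark that the scalar $L^\infty$ theorem of \cite[Theorem 12]{review_article} ``is applied component-wise.'' Your key observation -- that approximating each coordinate $f^*_j$ by its own $m$-neuron network and stacking gives a path norm bounded only by $\sum_j\|f^*_j\|_\B$, which can be as large as $\sqrt{k}\,\|f^*\|_\B$ (take $\pi$ concentrated on vectors $a$ with equal entries) -- is a genuine point: the literal component-wise construction does not deliver the first inequality in \eqref{eq linfty approximation} as stated, whereas your shared-hidden-layer Monte Carlo construction charges $|a|_{\ell^2}(|w|+|b|)$ once per sampled neuron and yields the path-norm bound $\|f^*\|_\B$ exactly, with the $\sqrt{k}$ surfacing only in the $\ell^2$ recombination of the coordinate sup-norm errors. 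In that sense your argument repairs, rather than reproduces, the paper's one-line justification. The one place where you lean on more than the cited black box is the step $\E\|(f_m)_j-f^*_j\|_{C^0(K)}^2\leq\|f^*\|_\B^2\max\{1,R\}^2(d+1)/m$: the scalar theorem as quoted is an existence statement, and to sum over $j$ and extract a single good realisation you need the in-expectation (second-moment, or at least first-moment via Jensen applied to $\sqrt{\sum_j\|\cdot\|^2}$) form of the scalar bound for the \emph{coupled} coordinate estimators. This is exactly what the symmetrization/covering argument behind the scalar proof produces, so the gap is one of bookkeeping rather than substance, but it should be stated explicitly since the $k$ coordinate estimators share their randomness.
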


\begin{remark}
Theorem \ref{direct approximation theorem uniform} admits a slight dimension-dependent improvement to $\|f_m-f^*\|_{L^\infty} \leq C\,\frac{\log(m)}{m^{1/2 + 1/d}}$ at the expense of a less explicit constant \cite{makovoz1998uniform}.
\end{remark}

In the context of classification problems, this gives us the following immediate application.

\begin{corollary}[Mostly correct classification and correct classification]\label{corollary mostly correct}
Let $m\in \N$ and $(\P, C_+, C_-)$ a binary classification problem which can be solved in Barron space with complexity $Q>0$. If $\spt(\P)\subseteq[-R,R]^d$, there exists a two-layer neural network $h_m$ with $m$ neurons such that
\begin{align}
\P\left(\left\{x\in \R^d : y_x\cdot h_m(x) < 0\right\}\right)& \leq \frac{Q^2\,\max\{1,R\}^2}m,\\
 \P\left(\left\{x\in \R^d : \eps_x\cdot h_m(x) < 1/2\right\}\right) &\leq \frac{4\,Q^2\,\max\{1,R\}^2}m.
\end{align}
Furthermore, if $m\geq Q^2\,\max\{1,R\}^2 (d+1)$, there exists a two-layer neural network with $m$ neurons such that $\P\left(\left\{x\in \R^d : y_x\cdot h_m(x) < 0\right\}\right) = 0$.
\end{corollary}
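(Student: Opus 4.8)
The plan is to deduce all three estimates from approximation bounds for a single solving Barron function, measured in $L^2(\P)$ for the first two inequalities and in the uniform norm on $\spt\,\P$ for the last.

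First I would fix the function to be approximated. Since $(\P,C_+,C_-)$ is solvable with complexity $Q$, for every $\eps>0$ there is $h_\eps\in\B$ with $\|h_\eps\|_\B\le Q+\eps$ and $y_x\,h_\eps(x)\ge 1$ for $\P$-a.e.\ $x$. Passing to the limit $\eps\to0$ using the compactness of bounded sets in $\B$ in the measure representation (as in the remark on attainment of the Barron norm), together with the fact that the pointwise constraint $y_x\,h(x)\ge1$ survives along a convergent subsequence on the compact set $\spt\,\P$, I would extract a genuine solving function $f^*$ with $\|f^*\|_\B\le Q$ and $y_x\,f^*(x)\ge1$ $\P$-a.e. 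This attainment step is the only delicate point and the one I expect to be the main obstacle, since a merely near-optimal norm $Q+\eps$ would degrade the constants in the conclusion from $Q^2$ to $(Q+\eps)^2$; the remaining arguments are soft.

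For the first two bounds I would apply the $L^2$ Direct Approximation Theorem \ref{direct approximation theorem barron} with $k=1$ to obtain a two-layer network $h_m$ with $m$ neurons such that $\|f^*-h_m\|_{L^2(\P)}\le Q\,\max\{1,R\}/\sqrt m$. The key elementary observation is that $|y_x|=1$ $\P$-a.e.\ (because $\P(C_+\cup C_-)=1$), so on the event $\{y_x\,h_m(x)<t\}$ one has $|f^*(x)-h_m(x)|\ge y_x\big(f^*(x)-h_m(x)\big)=y_x f^*(x)-y_x h_m(x)>1-t$. Taking $t=0$ and applying Chebyshev's inequality gives
\[
\P\big(\{y_x h_m<0\}\big)\le \P\big(\{|f^*-h_m|>1\}\big)\le \|f^*-h_m\|_{L^2(\P)}^2\le \frac{Q^2\max\{1,R\}^2}{m},
\]
while the threshold $t=1/2$ produces the extra factor $(1/2)^{-2}=4$, yielding the second inequality (where the confidence is measured against the category value, i.e.\ the symbol $\eps_x$ in the statement should read $y_x$).

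For the exact statement I would instead invoke the $L^\infty$ Direct Approximation Theorem \ref{direct approximation theorem uniform} on the compact set $K=\spt\,\P\subseteq[-R,R]^d$, again with $k=1$, producing a network $h_m$ with $m$ neurons and $\|f^*-h_m\|_{C^0(K)}\le Q\,\max\{1,R\}\sqrt{(d+1)/m}$. As soon as $m\ge Q^2\max\{1,R\}^2(d+1)$ this uniform error is at most $1$, so for $\P$-a.e.\ $x$ we obtain $y_x h_m(x)=y_x f^*(x)-y_x\big(f^*(x)-h_m(x)\big)\ge 1-\|f^*-h_m\|_{C^0(K)}\ge0$, whence the misclassified set $\{y_x h_m<0\}$ is $\P$-null.
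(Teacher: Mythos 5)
Your proposal matches the paper's own proof: both fix a solving Barron function of norm at most $Q$, apply the $L^2$ direct approximation theorem together with Chebyshev's inequality (with thresholds $1$ and $1/2$) for the first two bounds, and invoke the $L^\infty$ version on $\spt\,\P$ for the exact classification statement. Your additional care about whether the complexity infimum is attained (rather than working with $Q+\eps$) is a point the paper silently glosses over, and your observation that $\eps_x$ should read $y_x$ is a correct reading of a typo; neither changes the substance of the argument.
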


\begin{proof}
Let $h\in \B$ such that $h\geq 1$ on $\overline C_+$ and $h\leq -1$ on $\overline C_-$. Set $\Omega =  \overline C_+\cup \overline C_-$ and recall that $\P(\R^d\setminus \Omega) =0$. Let $h_m$ be like in the direct approximation theorem. By Chebysheff's inequality we have
\[
\P\left(\big\{x\in \Omega : y_xh_m(x)<0\big\}\right) \leq \P\left(\big\{x\in\Omega : |h_m(x) - h(x)|>1\right)\leq \frac{\int_\Omega |h_m-h|^2\,\P(\d x)}{1} \leq \frac{Q^2\,\max\{1,R\}^2}m.
\]
The second inequality is proved analogolously. The correct classification result is proved using the $L^\infty$-version of the direct approximation theorem.
\end{proof}

\begin{remark}
The complexity $Q$ of a binary classification problem in $[-1,1]^d$ is a priori bounded by $C_d\,\delta^{-d}$, where $\delta$ is the spatial separation between the two classes, $d$ is the dimension of the embedding space, and $C_d$ is a constant depending only on $d$. While this can be prohibitively large, all such problems are `good' in the sense that we only need to double the number of parameters in order to cut the probability of the misclassified set in half in the a priori bound. While the constant of classification complexity is affected, the rate is not. This is different from the classical curse of dimensionality, where relationship between the number of parameters $m$ and the error $\varepsilon$ would take a form like $\varepsilon \sim m^{-\alpha/d}$ for $\alpha\ll d$ instead of $\varepsilon \sim C\,m^{-1}$ with a potentially large constant. 
\end{remark}

\subsection{Classification and risk minimization}\label{section margin}

Typically, we approach classification by minimizing a risk functional $\Risk$. We focus on a specific type of loss functional
\[
\Risk(h) = \int_{\R^d} L\big(-y_x\,h(x)\big)\,\P(\d x)
\]
where again $\P$ is the data distribution and $y$ is the category function as above. The {\em loss function} $L$ is assumed to be monotone increasing and sufficiently smooth, facilitating alignment between $y$ and $h$. Furthermore, we assume that $\lim_{z\to -\infty} L(z) = 0$ (which is equivalent to the assumption that $L$ is lower-bounded, up to a meaningless translation). 

Risk minimization can be seen as a proxy for classification due to the following observation:
\begin{equation}\label{eq risk is proxy}
\P(\{x:y_xh(x)< 0\}) \leq \int_{\{x:y_xh(x)< 0\}} \frac{L(-y_xh(x))}{L(0)}\,\P(\d x) \leq \frac1{L(0)}\int_{\R^d} L\big(-y_x\,h(x)\big)\,\P(\d x) \leq \frac{\Risk(h)}{L(0)}.
\end{equation}

The most pressing question when minimizing a functional is whether minimizers exist. 

\begin{remark}
In \eqref{eq risk is proxy}, we used that $L\geq 0$. If $L$ is not bounded from below, the loss functional may not encourage correct classification since classification with high certainty on parts of the domain may compensate incorrect classification in others.
\end{remark}

\begin{remark}
An interesting dynamic connection between risk minimization and correct classification is explored in \cite{Berlyand:2020aa} where it is shown that (under conditions) for every $\eps>0$ there exists $\delta>0$ such that the following holds: if at time $t_0\geq 0$ a set of probability at least $1-\delta$ is classified correctly, and network weights are trained using gradient descent, then at all later times, a set of probability at least $1-\eps$ is classified correctly.
\end{remark}

\begin{example}[Hinge-loss]
Let $L(y) = \max\{0, 1+y\}$. Then in particular $L(-y_xh(x)) =0$ if and only if $y_xh(x) \geq 1$. Thus any function $h$ which satisfies $y_xh(x)\geq 1$ is a minimizer of $\Risk$. Since any function which correctly classifies the data is also a risk minimizer (up to rescaling), no additional geometric regularity is imposed. In particular, a minimizer which transitions close to one class instead of well between the classes is just as competitive as a minimizer which gives both classes some margin. This may make minimizers of hinge loss more vulnerable to so-called {\em adversarial examples} where small perturbations to the data lead to classification errors. 

In this situation, we may search for a minimizer of the risk functional with minimal Barron norm to obtain stable classification. Such a minimizer may be found by including an explicit regularizing term as a penalty in the risk functional. This is the approach taken in this article, where we obtain a priori estimates for such functionals. Another approach is that of implicit regularization, which aims to show that certain training algorithms find low norm minimizers. This requires a precise study of training algorithm and initialization.

We note that unlike loss functions with exponential tails (see below), hinge-loss does not have implicit regularization properties encoded in the risk functional.
\end{example}

With different loss functions, minimizers generally do not exist. This is a mathematical inconvenience, but can introduce additional regularity in the problem as we shall see.

\begin{lemma}
Assume that the following conditions are met:
\begin{enumerate}
\item $L(z)>0$ for all $z\in\R$ and $\inf_{z\in\R}L(z) = \lim_{z\to -\infty}L(z) = 0$.
\item The hypothesis class $\H$ is a cone which contains a function $h$ such that $y\cdot h < 0$ $\P$-almost everywhere.
\end{enumerate}
Then $\Risk$ does not have any minimizers.
\end{lemma}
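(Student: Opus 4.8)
The plan is to argue by a dichotomy: I would show that $\inf_{h\in\H}\Risk(h)=0$ while at the same time $\Risk(h)>0$ for every $h\in\H$. Since a minimizer would have to realize the infimum, these two facts together show that no minimizer can exist, and they already explain the mechanism behind non-existence: the infimum is approached only along a sequence that escapes to infinity in $\H$.

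The strict positivity $\Risk(h)>0$ is immediate from hypothesis (1). Because $L(z)>0$ for every $z\in\R$, the integrand $x\mapsto L(-y_x h(x))$ is everywhere strictly positive, it is $\P$-measurable (being the composition of the continuous $L$ with the $\P$-measurable function $-y\cdot h$), and $\P$ is a probability measure, so $\Risk(h)=\int_{\R^d}L(-y_x h(x))\,\P(\d x)>0$. For the infimum, I would use hypothesis (2) to build an explicit minimizing sequence. Assumption (2) supplies a function $g\in\H$ which classifies correctly, i.e.\ $y_x g(x)>0$ for $\P$-a.e.\ $x$ (this is the direction driving the argument of $L$ towards $-\infty$; if the cone $\H$ is taken symmetric under $h\mapsto -h$ one obtains it by negating the function provided). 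Since $\H$ is a cone, $h_t:=t\,g$ lies in $\H$ for every $t>0$, and on the full-measure set $\{y\cdot g>0\}$ we have $-y_x h_t(x)=-t\,y_x g(x)\to-\infty$ as $t\to\infty$, whence $L(-y_x h_t(x))\to 0$ pointwise $\P$-almost everywhere by hypothesis (1).

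To pass this pointwise convergence to the integral I would invoke dominated convergence. The useful observation is that $L$ is monotone increasing and that $-y_x h_t(x)<0$ on the same full-measure set for every $t>0$, so $L(-y_x h_t(x))\le L(0)$, a constant which is trivially $\P$-integrable. Hence $\Risk(h_t)\to 0$, and combined with $\Risk\ge 0$ this gives $\inf_{\H}\Risk=0$. Together with the strict positivity from the previous step, the infimum is not attained, so $\Risk$ has no minimizer.

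The substance of the argument is light, so the two points I would be most careful about are bookkeeping rather than deep obstacles. First, one must scale the \emph{correctly}-classifying direction, sending the argument of $L$ to $-\infty$ rather than $+\infty$; getting this sign right is essential, since scaling the wrong direction would drive the risk up towards $\sup L\ge L(0)>0$ instead of down to $0$. Second, the interchange of limit and integral must be justified, and here the monotonicity of $L$ is exactly what provides the uniform dominating constant $L(0)$, so a single application of dominated (equivalently, monotone) convergence closes the argument without any compactness or existence of a limiting minimizer.
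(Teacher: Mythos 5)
Your proof is correct and follows essentially the same route as the paper's: scale a correctly-classifying function along the cone so that the risk tends to $0$ by a monotone/dominated convergence argument, and observe that $L>0$ everywhere forces $\Risk(h)>0$ for every $h\in\H$, so the infimum $0$ is not attained. Your explicit handling of the sign in hypothesis (2) is if anything more careful than the paper, whose proof silently works with a function satisfying $y\cdot h>0$ rather than $y\cdot h<0$ as literally stated.
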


\begin{proof}
Note that $L(-\lambda\,y_x\,h(x))$ is monotone decreasing in $\lambda$, so by the monotone convergence theorem we have
\[
\lim_{\lambda\to \infty} \Risk(\lambda h) = \int_{\R^d} \lim_{\lambda\to \infty} L\big(\lambda\,y_x\,h(x)\big)\,\P(\d x) = L(-\infty) = 0.
\]
On the other hand, $L(y)>0$ for all $y\in \R$, so a minimizer cannot exist.
\end{proof}

When considering minimizing sequences for such a risk functional, we expect the norm to blow up along the sequence. This motivates us to study the `profile at infinity' of such risk functionals by separating magnitude and shape. Let $\H$ be the closed unit ball in a hypothesis space of Lipschitz-continuous function (e.g.\ Barron space). For simplicity, assume that $L$ is strictly monotone increasing and continuous, thus continuously invertible. We define the functionals
\[
\Risk_\lambda:\H\to (0, \infty), \qquad {\Risk}_\lambda(h) = \Risk(\lambda h) = \int_{\R^d} L\big(-\lambda\,y_x\cdot h(x)\big)\,\P(\d x).
\]
To understand what the limit of $\Risk_\lambda$ is, it is convenient to normalize the functionals as 
\[
L^{-1}\left( \int_{\R^d} L\big(-y_x\cdot h(x)\big)\,\P(\d x)\right).
\]
Since $L$ is strictly monotone increasing, we immediately find that 
\begin{align*}
\lambda\,\min_{x\in\spt(\P)}\big(-y_x\cdot h(x)\big)
&= \min_{x\in\spt(\P)}\big(-y_x\cdot \lambda h(x)\big)\\
	&= L^{-1}\left( \int_{\R^d} L\left(\min_{x\in\spt(\P)}-\lambda\,y_x\cdot h(x)\right)\,\P(\d x)\right)\\
	&\leq L^{-1}\left( \int_{\R^d} L\big(-\lambda\, y_x\cdot h(x)\big)\,\P(\d x)\right)\\\showlabel
	&= L^{-1}\left( \int_{\R^d} L\left(\max_{x\in\spt(\P)}-y_x\cdot \lambda h(x)\right)\,\P(\d x)\right)\\
	&\leq \max_{x\in\spt(\P)}\big(-\lambda \, y_x\cdot h(x)\big)\\
	&= \lambda\,\max_{x\in\spt(\P)}\big(-y_x\cdot h(x)\big).
\end{align*}
Thus to consider the limit, the correct normalization is 
\[
\F_\lambda: \H\to (0,\infty), \qquad \F_\lambda(h) = \frac{L^{-1}\big(\Risk_\lambda(h)\big)}\lambda = \frac1\lambda\,L^{-1}\left( \int_{\R^d} L\big(-\lambda\,y_x\cdot h(x)\big)\,\P(\d x)\right).
\]
Note the following:
\begin{equation}
h\in \argmin_{h'\in \bar B_1}\F_\lambda(h')\qquad\LRa \qquad \lambda h\in \argmin_{h'\in \bar B_\lambda} \Risk(h').
\end{equation}
Thus, if $\F_\lambda$ converges to a limiting functional $\F_\infty$ in the sense of $\Gamma$-convergence (such that minimizers of $\F_\lambda$ converge to minimizers of $\F_\infty$), then studying $\F_\infty$ describes the behavior that minimizers of $\Risk$ have for very large norms. We call $\F_\infty$ the {\em margin functional} of $\Risk$.

\begin{example}[Power law loss]
Assume that $y_x\cdot h(x)\geq \eps$ for $\P$-almost all $x$ for some $\eps>0$ and that $L(y) = |y|^{-\beta}$ for some $\beta>0$ and all $y\leq -\mu<0$. Then
\begin{align*}
\F_\lambda(h) &= -\frac1\lambda\left( \int_{\R^d} \big|-\lambda\,y_x\cdot h(x)\big|^{-\beta}\,\P(\d x)\right)^{-\frac1\beta}\\
	&=- \left( \int_{\R^d} \big|-y_x\cdot h(x)\big|^{-\beta}\,\P(\d x)\right)^{-\frac1\beta}
\end{align*}
for all $\lambda>\frac{\mu}\eps$. Thus if $L$ decays to $0$ at an algebraic rate, the {margin functional}
\begin{equation}
\F_\infty = \lim_{\lambda\to \infty} \F_\lambda = -\left( \int_{\R^d} \big|y_x\cdot h(x)\big|^{-\beta}\,\P(\d x)\right)^{-\frac1\beta}
\end{equation}
is an $L^p$-norm for negative $p$ (and in particular, an integrated margin functional). The limit is attained e.g.\ in the topology of uniform convergence (since the sequence is constant). In particular, the loss functional and the margin functional coincide due to homogeneity. 
\end{example}

Taking $\beta\to \infty$ in a second step, we recover the integrated margin functionals converge to the more classical {\em maximum margin functional}. While $\|f\|_{L^p}\to\|f\|_{L^\infty}$ as $p\to \infty$, the $L^\beta$-``norm'' for large negative $\beta$ approaches the value closest to zero $\|1/f\|_{L^\infty}$< i.e.
\[
\lim_{\beta\to\infty}\left(- \int_{\R^d} \big|y_x\cdot f_{\pi}(x)\big|^{-\beta}\,\P(\d x)\right)^{-\frac1\beta} = -\min_{x\in\spt\P} y_x\cdot f_{\pi}(x).
\]
Minimizing $\F_\infty$ corresponds to maximizing $y_x\cdot h(x)$ uniformly over $\spt\,\P$ under a constraint on the size of . Similar behaviour is attained with exponential tails.

\begin{example}[Exponential loss]
Assume that either
\begin{enumerate}
\item $y_x\cdot h(x)\leq 0$ for $\P$-almost all $x$ and that $L(y) = \exp(y)$ for all $y\leq 0$ or
\item $L(y) = \exp(y)$ for all $y\in \R$.
\end{enumerate}
Then
\[
\lim_{\lambda\to\infty} \F_\lambda(h) = \lim_{\lambda\to \infty}\left[\frac1\lambda \log\left(\int_{\R^d}\exp\left(-\lambda\,y_x\,h(x)\right)\,\P(\d x)\right)\right] = \max_{x\in\spt\P}\big(-y_x\cdot h(x)\big) = -\min_{x\in\spt\,\P}(y_x\,h(x)).
\]
Thus the limiting functional here is the {\em maximum margin functional}. The pointwise limit may be improved under reasonable conditions. More generally, we prove the following.
\end{example}

\begin{lemma}\label{convergence lemma}\label{lemma exponential tail margin}
Let $L$ be a function such that for every $\eps>0$ the functions
\[
g_{\lambda}(z) = \frac{L^{-1}(\eps\,L(\lambda z))}\lambda
\]
converge to $g_\infty(z) = z$ locally uniformly on $\R$ (or on $(0,\infty)$).
Then $\F_\infty(h) = \min_{x\in\spt\,\P}y_x\cdot h(x)$ is the maximum margin functional (at functions which classify all points correctly if the second condition is imposed). If there exists a uniform neighbourhood growth function
\begin{equation}\label{eq uniform growth}
\rho:(0,\infty)\to (0,\infty), \qquad \P\big(B_r(x)\big)\geq \rho(r)\quad\forall\ r>0, \:x\in \spt(\P)
\end{equation}
then the limit is uniform (and in particular in the sense of $\Gamma$-convergence).
\end{lemma}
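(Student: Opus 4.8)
The plan is to bracket $\F_\lambda(h)$ between two quantities and show the gap closes. Write
\[
m(h):=\max_{x\in\spt\P}\big(-y_x\,h(x)\big)=-\min_{x\in\spt\P} y_x\,h(x);
\]
since $\spt\P\subseteq\overline C_+\cup\overline C_-$ is compact (the support lies in $B_R(0)$) and $h$ is continuous, this maximum is attained. The upper bound $\F_\lambda(h)\leq m(h)$, valid for every $h$ and every $\lambda$, is already contained in the sandwich inequality derived above: bound the integrand pointwise by $L(\lambda\,m(h))$, apply the increasing map $L^{-1}$, and divide by $\lambda$. The entire content of the lemma is therefore the matching lower bound $\liminf_{\lambda\to\infty}\F_\lambda(h)\geq m(h)$.

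\emph{Lower bound by localization.} Fix $\eps'>0$ and pick $x_0\in\spt\P$ with $-y_{x_0}h(x_0)>m(h)-\eps'$. Choose $r\in(0,\delta)$ with $\delta=\dist(\overline C_+,\overline C_-)>0$; then $B_r(x_0)\cap\spt\P$ lies in a single class, so $y\equiv y_{x_0}$ on it, and Lipschitz continuity of $h$ gives $-y_x h(x)\geq c:=m(h)-\eps'-\mathrm{Lip}(h)\,r$ for all such $x$. Discarding the rest of the integral and using monotonicity of $L$,
\[
\int_{\R^d}L\big(-\lambda\,y_x\,h(x)\big)\,\P(\d x)\;\geq\; \P\big(B_r(x_0)\big)\,L(\lambda c),
\]
where $p:=\P(B_r(x_0))>0$ because $x_0\in\spt\P$. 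Applying $L^{-1}$ and dividing by $\lambda$ yields precisely $\F_\lambda(h)\geq g_\lambda(c)$ with the constant $\eps=p$, so the hypothesis $g_\lambda\to g_\infty=\mathrm{id}$ gives $\liminf_\lambda\F_\lambda(h)\geq c=m(h)-\eps'-\mathrm{Lip}(h)\,r$. Letting $r\to0$ and then $\eps'\to0$ closes the gap. When only the $(0,\infty)$ convergence is available we need $c>0$, i.e.\ $m(h)>0$; this is exactly the requirement that $h$ fail to classify every point correctly, which explains the parenthetical restriction, whereas convergence on all of $\R$ removes it.

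\emph{Uniform convergence.} To upgrade to uniformity over $\bar B_1$, and hence to $\Gamma$-convergence, I would make each ingredient uniform. On the compact set $\spt\P$, the constraint $\|h\|\leq 1$ forces both a uniform sup-bound $|h|\leq C$ and a uniform Lipschitz bound $\mathrm{Lip}(h)\leq C$; consequently $m(h)\in[-C,C]$ and the localization level $c=c(h)$ stays in a fixed compact interval for all $h\in\bar B_1$. The growth function $\rho$ replaces the $h$-dependent mass $\P(B_r(x_0))$ by the uniform constant $\rho(r)>0$. Fixing $\eps'$ and $r$ (with $\mathrm{Lip}$-term $<\eps'$), the single locally uniform convergence of $g_\lambda$, applied at the fixed $\eps=\rho(r)$ over the compact interval containing every $c(h)$, gives $\F_\lambda(h)\geq m(h)-2\eps'-o_\lambda(1)$ with an error independent of $h$; combined with the uniform upper bound this is uniform convergence.

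The conceptual crux is the localization step: the hypothesis on $g_\lambda$ is engineered so that keeping only a fixed positive fraction $\eps$ of the mass, concentrated at the near-maximal level $c$, already forces $\F_\lambda\gtrsim c$ in the limit. I expect the remaining difficulties to be bookkeeping rather than deep, namely (i) ensuring $y$ is locally constant so the Lipschitz estimate applies, which needs $r<\delta$ and the positive class separation together with compactness of $\spt\P$, and (ii) in the uniform statement, making $\P(B_r(x_0))$, $\mathrm{Lip}(h)$, and the range of $c(h)$ simultaneously uniform so that one locally uniform convergence statement for $g_\lambda$ suffices; everything else is sandwiching.
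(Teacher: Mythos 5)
Your proposal is correct and follows essentially the same route as the paper's proof: the upper bound from the earlier sandwich inequality, then a lower bound by restricting the integral to a small ball around a (near-)maximizer of $-y_xh(x)$, invoking the Lipschitz bound, the mass lower bound $\P(B_r(x_0))>0$ (resp.\ $\rho(r)$ for uniformity), and the hypothesis $g_\lambda\to\mathrm{id}$, before sending $r\to0$. Your explicit choice $r<\delta$ to keep $y$ constant on the ball is a detail the paper's proof leaves implicit, and is a welcome clarification rather than a deviation.
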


\begin{remark}
The same result can be proven in any function class which has a uniform modulus of continuity (e.g.\ functions whose $\alpha$-H\"older constant is uniformly bounded for some $\alpha>0$).

The uniform neighbourhood growth condition holds for example when $\P$ has a reasonable density with respect to Lebesgue measure or the natural measure on a smooth submanifold of $\R^d$, when $\P$ is an empirical measure, and even when $\P$ is the natural distribution on a self-similar fractal. A reasonable density is for example one whose decay to zero admits some lower bound at the edge of its support.
\end{remark}

\begin{remark}
The exponential function satisfies the uniform convergence condition on $\R$ since
\[
\frac{L^{-1}(\eps\,L(\lambda z))}\lambda = \frac{\log(\eps\,\exp(\lambda z))}\lambda = \frac{\log(\eps)}\lambda + z.
\]
The function $L(z) = \log(1+\exp(z))$ satisfies the uniform convergence condition on $(-\infty, 0)$ since
\[
L^{-1}(z) = \log(e^z-1), \qquad L^{-1}(\eps\,L(z)) = \log\left(e^{\eps\,\log(1+e^{\lambda z})}-1\right) = \log\left((1+e^{\lambda z})^\eps-1\right)
\]
as $\lambda\to \infty$, $e^{-\lambda z}$ becomes uniformly close to $0$ when $z$ is bounded away from zero and the first order Taylor expansion
\[
L^{-1}(\eps\,L(z))\approx \log(\eps\,e^{-\lambda z}) = \log(\eps)+\lambda z
\]
becomes asymptotically valid.
This function has analytic advantages over exponential loss since the exponential tail of the loss function is preserved, but the function is globally Lipschitz-continuous.
\end{remark}

\begin{proof}[Proof of Lemma \ref{convergence lemma}]
Recall that $\F_\lambda(h) \leq \max_{x'\in \spt\P}\big[-y_{x'}h(x')\big]$ independently of $L$, so only the inverse inequality needs to be proved.

We observe that $\max_{x'\in \spt\P}[-y_{x'}h(x')]$ is bounded due to the Barron norm bound, so the locally uniform convergence holds. Let $r>0$. Let $\bar x$ be a point where $\max_{x'\in \spt\,\P}[-y_{x'}h(x')]$ is attained. Since all $h\in \H$ are $1$-Lipschitz, we find that $-y_xh(x) \geq \max_{x'\in \spt\P}\big[-y_{x'}h(x')\big] - r$ for all $x\in B_r(\bar x)$. In particular
\begin{align*}
\F_\lambda(h) &= \frac1\lambda\,L^{-1}\left( \int_{\R^d} L\big(-\lambda\,y_x\cdot h(x)\big)\,\P(\d x)\right)\\
	&\geq \frac1\lambda\,L^{-1}\left( \int_{B_r(\bar x)} L\big(-\lambda\,y_x\cdot h(x)\big)\,\P(\d x)\right)\\
	&\geq \frac1\lambda L^{-1}\left(\rho(r)\, L\left(\lambda \max_{x'\in \spt\P}\big[-y_{x'}h(x')\big] - \lambda r\right)\right)\\
	&\geq g_\infty\big(-y_{x'}h(x') -r \big) - e_{\lambda,r}
\end{align*}
where $\lim_{\lambda\to \infty} e_{\lambda, r} = 0$. Thus taking $\lambda$ to infinity first with a lower limit and $r\to0$ subsequently, the theorem is proved.
\end{proof}

The connection between risk minimization and classification hardness thus is as follows: When minimizing $\Risk$ in a hypothesis space space, elements of a minimizing sequence will (after normalization) resemble minimizers of the margin functional $\F_\infty$. If $\F_\infty$ is the maximum margin functional, then 
\[
\min_{h\in \H} \F_\infty(h) = \min_{h\in \H}\max_{x\in\spt\P}\big(-\,y_x\cdot h(x)\big) = -\max_{h\in \H} \min_{x\in\spt\P}\big(y_x\cdot h(x)\big)
\]
i.e.\ the minimizer of $\F_\infty$ maximizes $\min_{x\in\spt\P}\big(y_x\cdot h(x)\big)$ in the unit ball of the hypothesis space. Assuming that 
\[
\max_{h\in \H}\min_{x\in\spt\P}\big(y_x\cdot h(x)\big) >0
\]
we obtain that
\begin{align*}
Q_{\B}(\P, C_+, C_-) \leq Q &\quad  \LRa\quad \exists h\in \overline{B_Q(0)}\subseteq \B\text{ s.t. } \min_{x\in \overline C_+ \cup \overline C_-} y_xh(x)\geq 1\\
	&\quad  \LRa \quad \exists h\in\overline{B_1(0)}\subseteq \B\text{ s.t. } \min_{x\in \overline C_+ \cup \overline C_-} y_xh(x)\geq \frac1Q\\
	&\quad  \LRa \quad \min_{h\in \H} \F_\infty(h) \leq - \frac1Q
\end{align*}
where the dependence on $\P, C_+, C_-$ on the right hand side is implicit in the risk functional by the choice of $\overline C_\pm$ for the margin functional. In particular, exponential loss encodes a regularity margin property which is not present when using hinge loss. 

\begin{remark}
Chizat and Bach show that properly initialized gradient descent training finds maximum margin solutions of classification problems, if it converges to a limit at all \cite{Chizat:2020aa}. The result is interpreted as an implicit regularization result for the training mechanism. We maintain that it is the margin functional instead which drives the shapes to maximum margin configurations, and that gradient flow merely follows the asymptotic energy landscape. We remark that the compatibility result is highly non-trivial since the limit described here is only of $C^0$-type and since the map from the weight of a network to its realization is non-linear.
\end{remark}

\subsection{A priori estimates}\label{section a priori}

In this section, we show that there exist finite neural networks of low risk, with an explicit estimate in the number of neurons. Furthermore, we prove a priori error estimates for regularized empirical loss functionals.

\begin{lemma}[Functions of low risk: Lipschitz loss]\label{lemma low risk Lipschitz}
Assume that $h^*\in \B$ and $L$ has Lipschitz constant $[L]$. Then for every $m\in \N$ there exists a two-layer network $h_m$ with $m$ neurons such that 
\[
\|h_m\|_\B\leq \|h^*\|_\B\qquad\text{and}\qquad
\Risk(h_m) \leq \Risk(h^*) +  \frac{[L]\cdot \|h^*\|_\B\,\max\{1,R\}}{\sqrt{m}}.
\]
In particular, if $(\P, C_+, C_-)$ has complexity $\leq Q$ in Barron space, then for any $\lambda>0$ there exists a two-layer network $h_m$ with $m$ neurons such that
\[\showlabel
\|h_m\|_{\B}\leq \lambda Q, \qquad \Risk(h_m) \leq L(-\lambda) + \frac{[L]\cdot\lambda Q\,\max\{1,R\}}{\sqrt{m}}
\]
\end{lemma}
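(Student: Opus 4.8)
The plan is to combine the $L^2$-version of the Direct Approximation Theorem (Theorem~\ref{direct approximation theorem barron}) with the Lipschitz continuity of the loss, and then to rescale in order to deduce the second statement from the first.

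First I would apply Theorem~\ref{direct approximation theorem barron} to $h^*$: this produces a two-layer network $h_m(x) = \frac1m\sum_{i=1}^m a_i\,\sigma(w_i^Tx+b_i)$ with $\frac1m\sum_{i=1}^m|a_i|(|w_i|+|b_i|)\leq \|h^*\|_\B$ and $\|h_m-h^*\|_{L^2(\P)}\leq \|h^*\|_\B\max\{1,R\}/\sqrt m$. Representing $h_m$ as $f_\pi$ for the empirical parameter measure $\pi = \frac1m\sum_{i=1}^m\delta_{(a_i,w_i,b_i)}$ shows that $\|h_m\|_\B \leq \frac1m\sum_{i=1}^m|a_i|(|w_i|+|b_i|)\leq \|h^*\|_\B$, which is the first claimed bound.

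Next I would estimate the risk difference. Since $L$ is $[L]$-Lipschitz and the category function satisfies $|y_x|\leq 1$ pointwise, for every $x$ we have $|L(-y_xh_m(x)) - L(-y_xh^*(x))| \leq [L]\,|y_x|\,|h_m(x)-h^*(x)| \leq [L]\,|h_m(x)-h^*(x)|$. Integrating against $\P$ and using that $\P$ is a probability measure (so that $\|\cdot\|_{L^1(\P)}\leq\|\cdot\|_{L^2(\P)}$ by Cauchy--Schwarz) gives $\Risk(h_m)-\Risk(h^*) \leq [L]\,\|h_m-h^*\|_{L^1(\P)} \leq [L]\,\|h_m-h^*\|_{L^2(\P)}$, and inserting the approximation bound finishes the first part. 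For the second part I would take a solving function: by the assumption that $(\P,C_+,C_-)$ has complexity at most $Q$ there is $h^*\in\B$ with $\|h^*\|_\B\leq Q$ and $y_xh^*(x)\geq 1$ $\P$-almost everywhere. Applying the first part to the rescaled function $\lambda h^*$ (whose Barron norm is $\lambda\|h^*\|_\B\leq\lambda Q$) yields a network $h_m$ with $\|h_m\|_\B\leq\lambda Q$ and $\Risk(h_m)\leq \Risk(\lambda h^*) + [L]\,\lambda Q\max\{1,R\}/\sqrt m$. Finally, monotonicity of $L$ together with $-y_x\,\lambda h^*(x)\leq -\lambda$ (valid $\P$-a.e.) gives $\Risk(\lambda h^*)=\int_{\R^d} L(-y_x\lambda h^*(x))\,\P(\d x)\leq L(-\lambda)$, which is the claim.

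The computation is essentially routine once the Direct Approximation Theorem is in hand; the two points requiring care are the harmless absorption of the factor $y_x$ (using $|y_x|\leq 1$) and the passage from the $L^2$-approximation bound to the $L^1$-control needed for a Lipschitz loss, which is exactly where working on a probability space is used. A minor subtlety is the existence of a solving $h^*$ of Barron norm exactly $\leq Q$ rather than $\leq Q+\eps$; this relies on the weak-$*$ compactness and lower semicontinuity properties of Barron balls noted earlier (one extracts a weak-$*$ limit of near-solutions $h_{\eps}$ and checks that the constraint $y_xh\geq 1$ survives on the intersection of the corresponding full-measure sets), but one may alternatively prove the bound with $Q+\eps$ and let $\eps\to 0$.
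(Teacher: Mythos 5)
Your proposal is correct and follows essentially the same route as the paper: obtain $h_m$ from the $L^2$ direct approximation theorem, pass to $L^1(\P)$ via Cauchy--Schwarz on the probability space, use the Lipschitz bound on $L$ (absorbing $|y_x|\leq 1$), and rescale a solving function $\lambda h^*$ together with monotonicity of $L$ for the second claim. Your closing remark about replacing $Q$ by $Q+\eps$ (or invoking attainment of the infimum in the Barron norm) is in fact slightly more careful than the paper, which simply asserts the existence of a solving $h^*$ with $\|h^*\|_\B = Q$.
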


\begin{proof}
Let $h_m$ such that $\|h_m\|\leq \|h^*\|_\B$ and $\|h^*-h_m\|_{L^1} \leq \frac{\|h^*\|_\B\max\{1,R\}}{\sqrt m}$. Then
\begin{align*}
\Risk(h_m) &= \int_{\R^d} L(- y_xh_m(x))\,\P(\d x)\\
	&= \int_{\R^d}L\big(y_xh^*(x) + y_x [h^*(x) -h_m(x)]\big)\,\P(\d x)\\
	&\leq\int_{\R^d} L\big(- y_xh^*(x)\big) + \big|h^*(x) -h_m(x)\big|\,\P(\d x)\\
	&= \Risk(h^*) + \|h^*-h_m\|_{L^1(\P)}.
\end{align*}
For the second claim consider $\lambda h^*$ where $h^*\in \B$ such that $\|h^*\|_\B=Q$ and $y_x\cdot h^*(x)\geq 1$. 
\end{proof}

The rate can be improved for more regular loss functions.

\begin{lemma}[Functions of low risk: Smooth loss]\label{lemma low risk smooth}
Assume that $L\in W^{2,\infty}(\R)$ and for $\lambda>0$ denote $\delta_\lambda:= \max\{|L'|(\xi) : \xi<-\lambda\}$.
Assume further that $(\P, C_+, C_-)$ has complexity $\leq Q$ in Barron space. Then for any $\lambda>0$ there exists a two-layer network $h_m$ with $m$ neurons such that
\[\showlabel
\|h_m\|_{\B}\leq \lambda \big[1+\delta_\lambda\big] Q, \qquad \Risk(h_m) \leq L(-\lambda) +  \left[1 + \delta_\lambda\right]^2\frac{\|L''\|_{L^\infty(Q)}\,(\lambda Q)^2\max\{1,R\}^2}{2m}
\]
\end{lemma}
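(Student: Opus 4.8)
The plan is to combine a scaling argument with a second-order Taylor expansion of $\Risk$, and to exploit the \emph{unbiasedness} of the Monte-Carlo approximation underlying the Direct Approximation Theorem so that the dangerous first-order term drops out. First I would fix, for arbitrarily small $\eps>0$, a function $h^*\in\B$ with $\|h^*\|_\B\le Q$ and $y_x\,h^*(x)\ge 1$ for $\P$-a.e.\ $x$, which exists because the problem has complexity $\le Q$. Setting the continuous target $F=\lambda h^*$, we have $-y_xF(x)\le-\lambda$ on $\spt\,\P$, so monotonicity of $L$ gives $\Risk(F)=\int L(-y_xF)\,\P(\d x)\le L(-\lambda)$, which will supply the leading term.

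Rather than invoking Theorem \ref{direct approximation theorem barron} as a black box, I would open up its probabilistic construction. Importance sampling $m$ neurons from a near-optimal representing measure of $h^*$ and applying the standard rescaling of the weights (so that each sampled neuron carries equal path-norm) yields a random network $h_m^0=\frac1m\sum_i a_i\sigma(w_i^Tx+b_i)$ which (i) has empirical path-norm $\le Q$ \emph{deterministically}, (ii) is pointwise unbiased, $\E[h_m^0(x)]=h^*(x)$, and (iii) obeys the variance bound $\E\,\|h_m^0-h^*\|_{L^2(\P)}^2\le Q^2\max\{1,R\}^2/m$. Putting $h_m=\lambda h_m^0$ then gives $\|h_m\|_\B\le\lambda Q$ for every realization and $\E[h_m]=F$.

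Now I would expand the loss pointwise. For fixed $x\in\spt\,\P$, writing $g=h_m-F$ and using $y_x^2=1$,
\begin{align*}
\E\big[L(-y_xh_m(x))\big]
&= L(-y_xF(x)) - y_x\,\E[g(x)]\,L'(-y_xF(x)) + \tfrac12\,\E\big[g(x)^2\,L''(\zeta_x)\big]\\
&\le L(-y_xF(x)) + \tfrac{\|L''\|_{L^\infty(Q)}}{2}\,\E[g(x)^2],
\end{align*}
since the first-order term vanishes by unbiasedness. Integrating in $x$ and inserting (iii) yields $\E[\Risk(h_m)]\le L(-\lambda)+\|L''\|_{L^\infty(Q)}(\lambda Q)^2\max\{1,R\}^2/(2m)$. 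As the path-norm bound holds for every realization, some realization attains risk at most this expectation, proving the assertion. (In fact this already gives the bound without the factors $1+\delta_\lambda$; to reproduce the stated constants verbatim one simply expands around $F=(1+\delta_\lambda)\lambda h^*$, whose margin $\ge(1+\delta_\lambda)\lambda$ forces $|L'|\le\delta_\lambda$ at the base point and scales the path-norm and variance by $1+\delta_\lambda$ and $(1+\delta_\lambda)^2$.)

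The crux is exactly the first-order term. With only the $L^2$-estimate of Theorem \ref{direct approximation theorem barron} and no sign information on $g$, the best one could say is $\big|\int y_x g\,L'(-y_xF)\,\P(\d x)\big|\le\delta_\lambda\,\|g\|_{L^1(\P)}\le\delta_\lambda\,\lambda Q\max\{1,R\}/\sqrt m$, which is of order $m^{-1/2}$ and would destroy the quadratic rate; indeed this is precisely why the weaker Lipschitz estimate of Lemma \ref{lemma low risk Lipschitz} only reaches $m^{-1/2}$. The whole point is therefore to make this term disappear, which is impossible for a generic realization and forces one to use that $\E[g(x)]=0$ at \emph{every} point. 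The regularity $L\in W^{2,\infty}$ is then what converts the surviving second moment into the $m^{-1}$ rate. A secondary and routine point—ensuring the path-norm and risk bounds hold for a \emph{common} realization—is immediate here, because the normalization makes the empirical path-norm deterministic, so no union bound over the randomness is needed.
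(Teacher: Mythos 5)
Your argument is correct, and it reaches the stated conclusion by a genuinely different mechanism than the paper. The paper keeps the Direct Approximation Theorem as a black box: it picks a deterministic $\tilde h_m$ with $\|h^*-\tilde h_m\|_{L^2(\P)}\le \|h^*\|_\B\max\{1,R\}/\sqrt m$, observes that the first-order Taylor term $c_m$ is then only $O(m^{-1/2})$, and kills it by shifting the network by a constant of size $|c_m|$ (a constant is one extra ReLU neuron); this correction is exactly what produces the $[1+\delta_\lambda]$ inflation of the norm and the $[1+\delta_\lambda]^2$ inflation of the risk bound in the statement. You instead reopen the Monte--Carlo construction behind the theorem and use pointwise unbiasedness, $\E[h_m^0(x)]=h^*(x)$, so that the first-order term vanishes in expectation; combined with the deterministic path-norm normalization and $\E[X]\le B\Rightarrow\P(X\le B)>0$, this yields a realization satisfying \emph{both} bounds, and in fact the cleaner conclusion $\|h_m\|_\B\le\lambda Q$ and $\Risk(h_m)\le L(-\lambda)+\|L''\|_{L^\infty}(\lambda Q)^2\max\{1,R\}^2/(2m)$, which implies the lemma since $1+\delta_\lambda\ge 1$. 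Your final parenthetical about recovering the factors $1+\delta_\lambda$ by expanding around $(1+\delta_\lambda)\lambda h^*$ is unnecessary (and not really how those factors arise in the paper); you should simply note that your constants dominate the stated ones. The trade-off between the two routes: yours gives sharper constants and a cleaner cancellation, but requires the unbiased sampling construction rather than only the $L^2$ error bound of Theorem~\ref{direct approximation theorem barron}; the paper's version is reusable verbatim for any hypothesis class satisfying an abstract $L^2$ direct approximation property (as exploited in Section~\ref{section multi-layer}), at the cost of the $\delta_\lambda$-factors. Your diagnosis of why the first-order term is the crux, and why Lemma~\ref{lemma low risk Lipschitz} is stuck at $m^{-1/2}$, matches the paper's reasoning exactly.
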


\begin{proof}
Note that for $z, z^*\in \R$ the identity
\begin{align*}
L(z) - L(z^*) &= L'(z^*)\,(z-z^*) + \big[L(z) - L(z^*) - L'(z^*)\,(z-z^*) \big]\\
	&= L'(z^*)\,(z-z^*) + \big[L(z) - L(z^*) - L'(z^*)\,(z-z^*) \big]\\
	&= L'(z^*)\,(z-z^*)  + \int_{z^*}^z L''(\xi)\,(z-\xi)\d\xi
\end{align*}
holds. Let $h^*\in \B$. For any function $h_m$, note that 
\begin{align*}
\Risk(h_m) - \Risk(h^*) &= \int_{\R^d} L'(y_x\cdot h^*(x))\,y_x(h_m-h^*)(x) \P(\d x) + \int_{\R^d}\int_{y_x\cdot h^*(x)}^{y_x\cdot h_m(x)} L''(\xi)\,(y_x\,h_m(x)-\xi)\d\xi\,\P(\d x)\\
	&\leq \int_{\R^d} L'(y_x\cdot h^*(x))\,y_x(h_m-h^*)(x) \P(\d x) + \frac{\|L''\|_{L^\infty(\R)}}2 \int_{\R^d}\big|h_m- h^*\big|^2(x)\,\P(\d x)
\end{align*}
Now let $\tilde h_m$ such that $\|h_m\|\leq \|h^*\|_\B$ and $\|h^*-h_m\|_{L^2} \leq \frac{\|h^*\|_\B\max\{1,R\}}{\sqrt m}$. Then in particular
\[
|c_m|:= \left|\int_{\R^d} L'(y_x\cdot h^*(x))\,y_x(h_m-h^*)(x) \P(\d x)\right| \leq \max_{x\in\spt\,\P}\big|L'(y_x\cdot h^*(x))\big|\,\frac{\|h^*\|_\B\max\{1,R\}}{\sqrt m}.
\]
Note that $h_m:= \tilde h_m + c_m$ is a two-layer neural network satisfying
\begin{align*}
\|h_m\|_\B &\leq \|h^*\|_\B \left[1 + \frac{\max\{1,R\}\,\max_{x\in\spt\,\P}\big|L'(y_x\cdot h^*(x))\big|}{\sqrt{m}}\right]\\
 \|h_m - h^*\|_{L^2(\P)} &\leq  \|\tilde h_m - h^*\|_{L^2(\P)} + |c_m| \leq \left[1 + \max_{x\in\spt\,\P}\big|L'(y_x\cdot h^*(x))\big|\right]\frac{\|h^*\|_\B\max\{1,R\}}{\sqrt m}.
\end{align*}
In particular
\begin{align*}
\Risk(h_m)
	&\leq \Risk(h^*) + \int_{\R^d} L'(y_x\cdot h^*(x))\,y_x(h_m-h^*)(x) \P(\d x) + \frac{\|L''\|_{L^\infty(\R)}}2 \int_{\R^d}\big|h_m- h^*\big|^2(x)\,\P(\d x)\\
	&\leq \Risk(h^*) + 0 + \left[1 + \frac{\|L''\|_{L^\infty(\R)}}2\max_{x\in\spt\,\P}\big|L'(y_x\cdot h^*(x))\big|\right]^2\frac{\|h^*\|_\B^2\max\{1,R\}^2}{m}.
\end{align*}
The Lemma follows by considering $\lambda h^*$ where $h^*\in \B$ such that $\|h^*\|_\B=Q$ and $y_x\cdot h^*(x)\geq 1$. 
\end{proof}

We use this result to present the a priori error estimate for a regularized model. As we want to derive an estimate with relevance in applications, we need to bound the difference between the (population) risk functional and the empirical risk which we access in practical situations by means of a finite data sample. A convenient tool to bound this discrepancy is Rademacher complexity.

\begin{definition}\label{definition rademacher}
Let $\H$ be a hypothesis class and $S = \{x_1,\dots, x_N\}$ a data set. The Rademacher complexity of $\H$ with respect to $S$ is defined as
\[\showlabel
\Rad(\H; S) = \E_{\xi}\left[\sup_{h\in\H} \frac1N\sum_{i=1}^N \xi_i\,h(x_i)\right],
\]
where $\xi_i$ are iid random variables which take the values $\pm 1$ with probability $1/2$.
\end{definition}

The Rademacher complexity is a convenient tool to decouple the size of oscillations from their sign by introducing additional randomness. It can therefore be used to estimate cancellations and is a key tool in controlling generalization errors. 

\begin{lemma}\cite{bach2017breaking, E:2018ab}\label{lemma rademacher barron}
Let $\H$ be the unit ball in Barron space. Then 
\[
\Rad(\H;S)\leq 2\,\sqrt{\frac{\log(2d+2)}N}
\]
 for any sample set $S\subseteq [-1,1]^d$ with $N$ elements.
\end{lemma}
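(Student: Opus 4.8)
The plan is to express the Barron unit ball $\H$ as a symmetric convex hull of normalized single-neuron functions, remove the activation via the contraction lemma, and finish with a maximal inequality for the resulting class of affine functions.

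First I would normalize the neurons. Using the positive homogeneity of $\sigma$, each atom $a\,\sigma(w^Tx+b)$ arising from some $f$ with $\|f\|_\B\le1$ can be written as $a(|w|+|b|)\,\sigma(\hat w^Tx + \hat b)$ with $|\hat w|+|\hat b| = 1$, while the total mass $\E_\pi[|a|(|w|+|b|)]$ stays $\le 1$. Hence every $h\in\H$ lies in the closed convex hull of
\[
\G = \left\{\pm\,\sigma(w^Tx+b)\ :\ \|w\|_{\ell^1}+|b|\le 1\right\},
\]
where, following the paper's convention, the $\ell^1$-norm on parameter space is dual to the $\ell^\infty$-norm on $[-1,1]^d$. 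Since $h\mapsto\frac1N\sum_i\xi_i h(x_i)$ is linear, its supremum over $\H$, over $\G$, and over the closed convex symmetric hull of $\G$ all agree, so $\Rad(\H;S)\le\Rad(\G;S)$.

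Next I would strip off the activation. Absorbing the bias by the lift $x\mapsto(x,1)\in[-1,1]^{d+1}$ and $(w,b)\mapsto\tilde w$, the class becomes $\{\pm\sigma(\tilde w^T(x,1)) : \|\tilde w\|_{\ell^1}\le1\}$, i.e.\ $\sigma$ composed with affine readouts of the data. Because $\G$ is symmetric, $\sup_{g\in\G}\frac1N\sum_i\xi_i g(x_i)$ is a supremum of an absolute value; splitting this into its two one-sided pieces and using the symmetry of the $\xi_i$ costs a bounded factor and leaves a one-sided supremum over $\|\tilde w\|_{\ell^1}\le1$. To this I apply the contraction lemma \cite[Lemma 26.9]{shalev2014understanding}: since $\sigma$ is $1$-Lipschitz with $\sigma(0)=0$, the outer $\sigma$ can be dropped, reducing the problem to the Rademacher complexity of the affine class $\L = \{x\mapsto\tilde w^T(x,1) : \|\tilde w\|_{\ell^1}\le1\}$.

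Finally, for the affine class $\ell^1$–$\ell^\infty$ duality gives
\[
\Rad(\L;S) = \E_\xi\left[\sup_{\|\tilde w\|_{\ell^1}\le1}\tilde w^T\left(\frac1N\sum_{i=1}^N\xi_i\,(x_i,1)\right)\right] = \E_\xi\left\|\frac1N\sum_{i=1}^N\xi_i\,(x_i,1)\right\|_{\ell^\infty}.
\]
This is the expected maximum of the $2(d+1)$ signed coordinate averages $\pm\frac1N\sum_i\xi_i(x_i,1)_j$, each an average of independent increments bounded by $1$ because $S\subseteq[-1,1]^d$. Massart's finite-class lemma (the sub-Gaussian maximal inequality with variance proxy $1/N$ per coordinate and $2(d+1)=2d+2$ functionals) then bounds it by a constant multiple of $\sqrt{\log(2d+2)/N}$, and assembling the three steps yields the stated estimate $\Rad(\H;S)\le2\sqrt{\log(2d+2)/N}$.

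The step I expect to be most delicate is the contraction: one must first invoke homogeneity to put the parameters on the unit $\ell^1$-sphere before $\sigma$ sees an affine argument of the raw data, and one must handle the fact that symmetry of $\H$ turns the relevant quantity into a supremum of $|\cdot|$, for which the one-sided contraction lemma applies only after a symmetrization argument. Tracking the constants contributed by this symmetrization, by the contraction, and by Massart's $\sqrt{2\log(\cdot)}$ is exactly what pins down the numerical factor; the dimension enters only through the $\log(2d+2)$, which is the precise point where the $\ell^\infty$-geometry of the data meets the $\ell^1$-geometry of the weights.
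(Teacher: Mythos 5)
The paper does not prove this lemma; it is quoted from \cite{bach2017breaking, E:2018ab}, so there is no internal proof to compare against. Your argument is precisely the standard one behind those references: use positive homogeneity to place each neuron on the unit $\ell^1$-sphere of parameter space, observe that the Barron unit ball lies in the closed convex hull of the symmetric single-neuron class (so the Rademacher complexity, being a supremum of a linear functional, does not increase), strip the activation by the one-sided contraction lemma, and bound the remaining linear class by $\ell^1$--$\ell^\infty$ duality plus Massart's finite-class lemma over the $2(d+1)=2d+2$ signed coordinate functionals. The structure is correct, and the remark following the lemma in the paper (the $\log(2d+2)$ comes from pairing the $\ell^\infty$-geometry of the data with the $\ell^1$-geometry of the weights) confirms this is the intended route.

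The one point you should not gloss over is exactly the one you flag at the end: the constants. The symmetrization step (reducing the sup over the symmetric class $\G$ to a one-sided sup over $\G_+=\{\sigma(w^Tx+b):\|w\|_{\ell^1}+|b|\le1\}$, using that $0\in\G_+$ so both one-sided suprema are nonnegative and $\E\max(A,A')\le \E A+\E A'=2\E A$) costs a factor of exactly $2$, the contraction costs nothing, and Massart contributes $\sqrt{2\log(2d+2)/N}$ since each of the $2d+2$ signed coordinate averages is sub-Gaussian with variance proxy at most $1/N$. Assembling these three steps therefore yields $2\sqrt{2\log(2d+2)/N}$, not the stated $2\sqrt{\log(2d+2)/N}$; the extra $\sqrt2$ does not cancel against anything in your outline. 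So either you need a sharper bookkeeping of the sign (which the standard contraction machinery does not obviously provide), or you should state the bound with the constant $2\sqrt2$ --- which is harmless for every use of the lemma in the paper, as it only rescales the generalization terms. A minor additional note: the one-sided contraction lemma \cite[Lemma 26.9]{shalev2014understanding} does not actually require $\sigma(0)=0$, so that hypothesis in your second step is not needed.
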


\begin{remark}
The proof relies on the $\ell^\infty$-norm being used for $x$-variables and thus the $\ell^1$-norm being used for $w$-variables. If the $\ell^2$-norm was used on both hypothesis classes instead, the factor $\log(2d+2)$ could be replaced by a dimension-independent factor $1$.

The proof can easily be modified to show that $\Rad(\H; S)\leq 2\,\max\{1,R\}\,\sqrt{\frac{\log(2d+2)}N}$ if $\spt\P\subseteq [-R,R]^d$. While the bound is not sharp and becomes dimension-independent as $R\to 0$, it does not approach zero since the class of constant functions has non-trivial Rademacher complexity.
\end{remark}

The {\em uniform} bound on the Rademacher complexity of $\H$ with respect to any training set in the unit cube in particular implies a bound on the expectation. 
This estimate is particularly useful for Lipschitz-continuous loss functions. Recall the following result.

\begin{lemma}\cite[Theorem 26.5]{shalev2014understanding}\label{lemma rademacher bound}
Assume that $L(-y_x\cdot h(x))\leq \bar c$ for all $h\in \H$ and $x\in \spt(\P)$. Then with probability at least $1-\delta$ over the choice of set $S\sim \P^n$ we have
\[
\sup_{h\in\H} \left[\int_{\R^d}L(-y_x\cdot h(x))\,\P(\d x) - \frac1n\sum_{i=1}^n L(-y_{x_i}\cdot h(x_i)) \right] \leq 2\,\E_{S'\sim\P^n} \Rad(\H;S') + \bar c\,\sqrt{\frac{2\,\log(2/\delta)}n}.
\]
\end{lemma}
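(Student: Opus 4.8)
The plan is to follow the standard two-step route to a Rademacher generalization bound: first concentrate the supremum of the generalization gap around its expectation by a bounded-differences inequality, and then control that expectation by a symmetrization argument. To set up, abbreviate $\ell_h(x) = L(-y_x\cdot h(x))$ and define, for a sample $S = \{x_1,\dots,x_n\}$,
\[
\Phi(S) = \sup_{h\in\H}\left[\int_{\R^d}\ell_h(x)\,\P(\d x) - \frac1n\sum_{i=1}^n \ell_h(x_i)\right],
\]
so that the left-hand side of the claimed inequality is exactly $\Phi(S)$ and it suffices to bound $\Phi(S)$ with high probability.

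First I would establish concentration. Recall that $L\geq 0$, so by hypothesis $0\le \ell_h \le \bar c$ for every $h\in\H$ on $\spt(\P)$; hence replacing a single sample point $x_i$ by any $x_i'$ changes $\frac1n\sum_i \ell_h(x_i)$ by at most $\bar c/n$ uniformly in $h$, and since a supremum of functions with a common bounded-differences constant inherits that constant, $\Phi$ itself has bounded differences $\bar c/n$ in each coordinate. McDiarmid's inequality then gives $\P\big(\Phi(S) \ge \E_S[\Phi(S)] + t\big) \le \exp(-2nt^2/\bar c^2)$, and solving for the threshold at confidence $1-\delta$ produces the deviation term $\bar c\,\sqrt{2\log(2/\delta)/n}$ after routine bookkeeping of constants.

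The second and more delicate step is to bound $\E_S[\Phi(S)]$ by $2\,\E_{S'}\Rad(\H;S')$ via symmetrization. I would introduce an independent ghost sample $S' = \{x_1',\dots,x_n'\}\sim\P^n$, write the population integral as $\E_{S'}\!\big[\frac1n\sum_i \ell_h(x_i')\big]$, and use Jensen's inequality to pull the ghost expectation outside the supremum, obtaining
\[
\E_S[\Phi(S)] \le \E_{S,S'}\sup_{h\in\H}\frac1n\sum_{i=1}^n\big(\ell_h(x_i') - \ell_h(x_i)\big).
\]
Because the pairs $(x_i,x_i')$ are i.i.d.\ and exchangeable, swapping $x_i\leftrightarrow x_i'$ leaves the joint distribution invariant, so each difference may be multiplied by an independent Rademacher sign $\xi_i$ without changing the expectation; splitting the signed sum and taking a supremum over each half bounds the expression by $2\,\E_{S',\xi}\sup_{h}\frac1n\sum_i\xi_i\,\ell_h(x_i')$, which is $2\,\E_{S'}\Rad(\,\cdot\,;S')$ of the loss-composed class. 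Combining this with the concentration estimate yields the lemma.

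The main obstacle is this symmetrization step, specifically justifying the insertion of the Rademacher signs from exchangeability and identifying the resulting object with the advertised $\Rad(\H;S')$: strictly speaking the argument produces the Rademacher complexity of the composed class $\{x\mapsto \ell_h(x) : h\in\H\}$, and passing from it to $\Rad(\H;S')$ uses the composition/contraction structure of $\ell_h = L(-y_x\cdot h(x))$, as in the contraction lemma \cite[Lemma 26.9]{shalev2014understanding}. The concentration half is routine once the bounded-differences constant $\bar c/n$ has been identified.
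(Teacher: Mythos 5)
The paper gives no proof of this lemma, deferring entirely to the citation \cite[Theorem 26.5]{shalev2014understanding}; your argument is precisely the standard proof of that cited result (McDiarmid's bounded-differences inequality for concentration around the mean, then ghost-sample symmetrization with Rademacher signs for the expectation), and it is correct — indeed McDiarmid yields a deviation term slightly smaller than the stated $\bar c\sqrt{2\log(2/\delta)/n}$, so the claimed bound follows a fortiori. You also correctly flag the one genuine subtlety: symmetrization produces the Rademacher complexity of the loss-composed class $\{x\mapsto L(-y_x\cdot h(x)):h\in\H\}$, and passing to $\Rad(\H;S')$ as written requires the contraction lemma together with the fact that $z\mapsto L(-y_xz)$ is $1$-Lipschitz, which holds for every loss to which the paper applies this lemma.
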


We consider the Barron norm as a regularizer for risk functionals. Note that for a finite neural network $f_m(x) = \frac1m\sum_{i=1}^m a_i\,\sigma(w_i^Tx+b_i)$, the estimates
\[
m\,\|f_m\|_\B \leq \sum_{i=1}^m |a_i|\,\big[|w_i|+|b_i|\big] \leq \frac12 \sum_{i=1}^m \left[ |a_i|^2 + |w_i|^2 + |a_i|^2 + |b_i|^2\right] = \sum_{i=1}^m \left[|a_i|^2 + \frac{|w_i|^2 + |b_i|^2}{2}\right]
\]
hold. In fact, choosing optimal parameters $(a_i, w_i, b_i)$ and scaling such that $|a_i| = |w_i| = |b_i|$ (using the homogeneity of the ReLU function), the left and right hand side can be made equal.

\begin{theorem}[A priori estimate: Hinge loss]\label{theorem a priori hinge}
Let $L(z) = \max\{0, 1+z\}$.
Assume that $\P$ is a data distribution such that $\spt\,\P\subseteq [-R,R]^d$.
Consider the regularized (empirical) risk functional
\[
\widehat \Risk_{n,\lambda} (a, w,b) = \frac1n\sum_{i=1}^n L\big(- y_{x_i}\,f_{(a,w,b)}(x_i)\big) + \frac{\lambda}m\,\sum_{i=1}^m|a_i|\,\big[|w_i| + |b_i|\big].
\]
where $\lambda = \frac{\max\{1,R\}}{\sqrt{m}}$ and $f_{(a,w,b)} = f_m$ with weights $a= (a_i)_{i=1}^m, w = (w_i)_{i=1}^m, (b_i)_{i=1}^m$.
For any  $\delta \in (0, 1)$, with probability at least $1-\delta$ over the choice of iid data points $x_i$ sampled from $\P$, the minimizer $(\hat a, \hat w, \hat b)$ satisfies
\[
\Risk(f_{(\hat a, \hat w, \hat b)}) \leq \frac{2Q\,\max\{1,R\}}{\sqrt{m}} + 2\,\max\{1,R\}\sqrt{\frac{\log(2d+2)}n} + 2Q\,\max\{1,R\}\sqrt{\frac{\log (2/\delta)}n}.\showlabel
\]
Since $L(0)=1$, the estimate \eqref{eq risk is proxy} implies furthermore that 
\[
\P(\{x : y_x\cdot f_{(\hat a, \hat w, \hat b)}(x)<0\}) \leq \Risk(f_{(\hat a, \hat w, \hat b)}).
\]

\end{theorem}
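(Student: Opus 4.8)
The plan is to combine a single good competitor coming from the Barron complexity of the problem with the optimality of the regularized minimizer and a uniform generalization estimate. First I would produce the competitor: since $(\P, C_+, C_-)$ has complexity $\leq Q$, there is $h^*\in\B$ with $\|h^*\|_\B = Q$ and $y_x h^*(x)\geq 1$ $\P$-a.e. Applying the $L^2$ Direct Approximation Theorem (Theorem \ref{direct approximation theorem barron}) yields an $m$-neuron network $g$ whose path-norm regularizer $\frac1m\sum_i|a_i^g|(|w_i^g|+|b_i^g|)$ is at most $Q$ and with $\|g-h^*\|_{L^2(\P)}\leq Q\max\{1,R\}/\sqrt m$. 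Because the hinge loss is $1$-Lipschitz and vanishes wherever $y_xh^*(x)\geq 1$, this immediately gives $\Risk(g)\leq \|g-h^*\|_{L^1(\P)}\leq Q\max\{1,R\}/\sqrt m$, exactly as in Lemma \ref{lemma low risk Lipschitz} with $\lambda=1$.

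Next I would exploit optimality. Since the weights of $g$ are admissible for $\widehat\Risk_{n,\lambda}$ and $(\hat a,\hat w,\hat b)$ is the minimizer,
\[
\widehat\Risk_n(f_{(\hat a,\hat w,\hat b)}) + \frac\lambda m\sum_i|\hat a_i|\big(|\hat w_i|+|\hat b_i|\big) \;\leq\; \widehat\Risk_n(g) + \lambda Q,
\]
where $\widehat\Risk_n$ denotes the empirical risk. Dropping the nonnegative empirical risk on the left bounds the regularizer of the minimizer, hence confines $f_{(\hat a,\hat w,\hat b)}$ to a path-norm ball of radius of order $Q$ (once $\widehat\Risk_n(g)$ is controlled, see below); dropping the nonnegative regularizer bounds $\widehat\Risk_n(f_{(\hat a,\hat w,\hat b)})$ by $\widehat\Risk_n(g)+\lambda Q$. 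With $\lambda = \max\{1,R\}/\sqrt m$ the term $\lambda Q$ equals the approximation scale $Q\max\{1,R\}/\sqrt m$, so these two contributions assemble into the first summand $2Q\max\{1,R\}/\sqrt m$ of the claim.

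The remaining work is to convert empirical risks into population risks. I would apply the generalization estimate of Lemma \ref{lemma rademacher bound}, using that the hinge loss is $1$-Lipschitz (so that, after contraction, the Rademacher complexity of the loss class is controlled by that of the network class via Lemma \ref{lemma rademacher barron}) to bound $\Risk(f_{(\hat a,\hat w,\hat b)})-\widehat\Risk_n(f_{(\hat a,\hat w,\hat b)})$ uniformly over the localizing path-norm ball, and likewise to trade $\widehat\Risk_n(g)$ back for $\Risk(g)\leq Q\max\{1,R\}/\sqrt m$ on the same high-probability event. Chaining these inequalities,
\[
\Risk(f_{(\hat a,\hat w,\hat b)}) \;\leq\; \widehat\Risk_n(f_{(\hat a,\hat w,\hat b)}) + (\text{gap}) \;\leq\; \Risk(g) + \lambda Q + (\text{gaps}),
\]
produces the approximation term together with the Rademacher term $\propto \sqrt{\log(2d+2)/n}$ and the confidence term $\propto Q\sqrt{\log(2/\delta)/n}$, the latter carrying the factor $Q$ through the uniform bound $\bar c$ on the loss over the path-norm ball. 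The final misclassification estimate is then immediate from \eqref{eq risk is proxy} since $L(0)=1$.

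The main obstacle is that the minimizer is data-dependent, so Lemma \ref{lemma rademacher bound} cannot be applied to it directly. One must first use the regularizer and the optimality inequality to pin $f_{(\hat a,\hat w,\hat b)}$ inside a path-norm ball of radius controlled by $Q$, and only then invoke the uniform Rademacher estimate over that ball. Care is also needed to ensure the two generalization gaps (for the minimizer and for the fixed competitor $g$) hold simultaneously, and to track how the loss bound $\bar c$ — and hence the factor $Q$ — enters the confidence term.
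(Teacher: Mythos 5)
Your architecture is the same as the paper's: build a competitor from the complexity-$Q$ Barron function via the direct approximation theorem, use optimality of the regularized minimizer to bound both its empirical risk and its path norm by quantities of order $\lambda Q=Q\max\{1,R\}/\sqrt m$, localize to that path-norm ball, and then apply the Rademacher generalization bound of Lemma \ref{lemma rademacher bound} with Lemma \ref{lemma rademacher barron}. The one substantive divergence is in how the competitor is handled. You approximate $h^*$ in $L^2(\P)$ and therefore control $\Risk(g)$, which forces you to trade $\widehat\Risk_n(g)$ back for $\Risk(g)$ by a second concentration estimate; this costs either a union bound over two events or an additional term of order $\bar c\sqrt{\log(1/\delta)/n}$, so your chain would not reproduce the stated constants exactly (only the same rate). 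The paper avoids this entirely by applying Lemma \ref{lemma low risk Lipschitz} to the \emph{empirical} measure $\P_n=\frac1n\sum_i\delta_{x_i}$ (which is admissible in the direct approximation theorem since the samples lie in $[-R,R]^d$ almost surely) and by noting that $\widehat\Risk_n(h^*)=0$ exactly, because $y_xh^*(x)\geq 1$ holds $\P$-a.e.\ and hence at every sample point with probability one. This gives $\widehat\Risk_n(h_m)\leq Q\max\{1,R\}/\sqrt m$ deterministically conditional on the sample, so only one application of the uniform generalization bound -- to the data-dependent minimizer over the ball of radius $2Q$ -- is needed. You correctly identified the localization issue for the data-dependent minimizer; if you replace the $L^2(\P)$ approximation by $L^2(\P_n)$ approximation, your argument becomes exactly the paper's proof.
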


\begin{proof}
The penalty term is nothing else than the Barron norm in a convenient parametrization. We can therefore just treat it as $\|h_m\|_\B$ and work on the level of function spaces instead of the level of parameters.

{\bf Step 1.} Let $h^*$ be a function such that $y_x\cdot h^*(x) \geq 1$ on $\spt\,\P$ and $\|h^*\|_\B\leq Q$. Since the random sample points $x_i$ lie in $\spt\,\P$ with probability $1$, we find that $\Risk_n(h^*) =0$ for any sample. Using Lemma \ref{lemma low risk Lipschitz} for the empirical distribution $\P_n = \frac1n\sum_{i=1}^n \delta_{x_i}$, we know that there exists a network $h_m$ with $m$ neurons such that
\[
\|h_m\|_\B\leq \|h^*\|_\B, \qquad \widehat\Risk_n(h_m) \leq \widehat \Risk_n(h^*) + \frac{\|h^*\|_\B\,\max\{1,R\}}{\sqrt{m}} \leq \frac{Q\,\max\{1,R\}}{\sqrt{m}}.
\] 
Thus
\[
\widehat\Risk_{n,\lambda}(\hat a, \hat w, \hat b) \leq \frac {\max\{1,R\}Q}{\sqrt{m}} + \,\lambda Q = 2\lambda Q.
\]
In particular,
\[
\sum_{i=1}^m |\hat a_i|\,\big[|\hat w_i|+|\hat b_i|\big] \leq 2Q.
\]

{\bf Step 2.} Note that $\|f\|_{L^\infty((-R,R)^d)} \leq \|f\|_{\B}\max\{1,R\}$ and that $L(z)\leq |z|$. Using the Rademacher risk bound of Lemma \ref{lemma rademacher bound}, we find that with probability at least $1-\delta$, the estimate
\begin{align*}
\Risk(\hat a, \hat w, \hat b) &\leq \widehat \Risk_n(\hat a, \hat w, \hat b) + 2\,\max\{1,R\}\,\sqrt{\frac{\log(2d+2)}n} + 2Q\,\max\{1,R\}\,\sqrt{\frac{\log (2/\delta)}n}\\
	&\leq \frac{2Q\,\max\{1,R\}}{\sqrt{m}} + 2\,\max\{1,R\}\sqrt{\frac{\log(2d+2)}n} + 2Q\,\max\{1,R\}\sqrt{\frac{\log (2/\delta)}n}
\end{align*}
holds.
\end{proof}

\begin{remark}\label{remark a priori hinge}
The rate is $\frac1{\sqrt{m}}+\frac1{\sqrt{n}}$. The rate is worse in $m$ compared to $L^2$-approximation, since $L^1$-estimates were used and $L^1$-loss behaves like a norm whereas $L^2$-loss behaves like the square of a norm. On the other hand, the classification complexity $Q$ (which takes the place of the norm) only enters linearly rather than quadratically. 

While Lemma \ref{lemma low risk smooth} does not apply directly to hinge-loss, a better rate can be obtained directly: Choose $h^*\in \B$ such that $\|h^*\|_\B\leq Q$ and $y_x\cdot h^*(x)\geq 1$ almost everywhere. Use the direct approximation theorem to approximate $2\,h^*$ by $h_m$. Then
\[
\big|L(y_x\cdot h^*(x)) - L(y_x\cdot h_m(x))\big| \leq \big|h_m - h^*\big|^2(x)\qquad\forall\ x\in \spt \P
\]
since $L(y_x\cdot h^*(x)) = L(y_x\cdot h_m(x))$ if $|h_m-h^*|(x)\leq 1$. Considering the differently regularized loss functional
\[\showlabel
\widehat\Risk_{n,\lambda}^*(a,w,b) = \frac1n\sum_{i=1}^n L(-y_x\cdot h_{(a,w,b)}(x)) + \left(\frac\lambda m\sum_{i=1}^m|a_i|\big[|w_i|+|b_i|\big]\right)^2
\] 
for the same $\lambda = \frac{\max\{1,R\}}{\sqrt{m}}$, we obtain the a priori estimate
\[
\Risk(h_{(a,w,b)}) \leq \frac{4Q^2\,\max\{1,R\}^2}{{m}} + 2\,\max\{1,R\}\sqrt{\frac{\log(2d+2)}n} + 4Q^2\,\max\{1,R\}^2\sqrt{\frac{\log (2/\delta)}n}\showlabel
\]
for the empirical risk minimizer with probability at least $1-\delta$ over the choice of training sample $x_1,\dots,x_n$.
\end{remark}

Since loss functions with exponential tails have implicit regularizing properties, they might be preferred over hinge loss. We therefore look for similar estimates for such loss functions.
The exponential function has a convenient tail behaviour, but inconvenient fast growth, which makes it locally but not globally Lipschitz. While this can be handled using the $L^\infty$-version of the direct approximation theorem, the estimates remain highly unsatisfactory due to the dimension-dependent constant.
The loss function $L(y) = \log(1+\exp(y))$ combines the best of both worlds, as both powerful a priori estimates and implicit regularization are available.

The proof is only slightly complicated by the fact that unlike for hinge loss, minimizers do not exist. Note that the risk decay rate $\log m\big[m^{-1/2} + n^{-1/2}\big]$ is almost the same as in the case where minimizers exist, due to the fast decrease of the exponential function.

\begin{theorem}[A priori estimates: log-loss, part I]\label{theorem a priori logistic}
Let $L(z) = \log\big(1+\exp(z)\big)$.
Consider the regularized (empirical) risk functional
\[
\widehat \Risk_{n,\lambda} (a, w,b) = \frac1n\sum_{i=1}^n L\big(- y_{x_i}\,f_{(a,w,b)}(x_i)\big) + \frac{\lambda}m\,\sum_{i=1}^m|a_i|\,\big[|w_i| + |b_i|\big].
\]
where $\lambda = \frac{\max\{1,R\}}{\sqrt{m}}$.
With probability at least $1-\delta$ over the choice of iid data points $x_i$ sampled from $\P$, the minimizer $(\hat a, \hat w, \hat b)$ satisfies
\begin{align*}\showlabel
\Risk (\hat h_m) &\leq  2Q\,\max\{1,R\} \left(1 + \left|\log\left(\frac{2Q\,\max\{1,R\}}{\sqrt m}\right)\right|\right)\cdot\left(\frac1{\sqrt m} + \sqrt{\frac{2\,\log(2/\delta)}n}\right)\\
	&\hspace{7cm} + 2\,\max\{1,R\} \sqrt{\frac{2\,\log(2d+2)}n}.
\end{align*}
{By \eqref{eq risk is proxy}, this implies estimates for the measure of mis-classiified objects.}
\end{theorem}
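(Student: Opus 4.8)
The plan is to mirror the two-step structure of the hinge-loss estimate in Theorem \ref{theorem a priori hinge}, with one essential new ingredient: because $L(z)=\log(1+e^z)$ is strictly positive, no exact minimizer exists and a zero-risk competitor is unavailable, so I must introduce and then optimize a scaling parameter $\lambda_0>0$ for the competitor. The logarithmic factor in the statement will be precisely the cost of this optimization. Throughout I use that $L$ is globally $1$-Lipschitz, since $L'(z)=e^z/(1+e^z)\in(0,1)$, and that $L(-\lambda_0)=\log(1+e^{-\lambda_0})\le e^{-\lambda_0}$.

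\textbf{Step 1 (competitor and norm control).} Since the problem has Barron complexity $\le Q$, I pick $h^*\in\B$ with $\|h^*\|_\B=Q$ and $y_xh^*(x)\ge 1$ on $\spt\P$; the sample points lie in $\spt\P$ almost surely. Applying Lemma \ref{lemma low risk Lipschitz} (with $[L]=1$) to the empirical measure $\P_n$ and the scaled competitor $\lambda_0h^*$ yields a network $h_m$ with $\|h_m\|_\B\le\lambda_0 Q$ and $\widehat\Risk_n(h_m)\le L(-\lambda_0)+\lambda_0 Q\max\{1,R\}/\sqrt m$. With $\lambda=\max\{1,R\}/\sqrt m$, the regularized value is $\widehat\Risk_{n,\lambda}(h_m)\le e^{-\lambda_0}+2\lambda_0 Q\max\{1,R\}/\sqrt m$, and since the minimizer $\hat h_m$ can only do better, both its empirical risk and (after dividing by $\lambda$) its Barron norm are bounded by the same quantity:
\[
\widehat\Risk_n(\hat h_m)\le e^{-\lambda_0}+\frac{2\lambda_0 Q\max\{1,R\}}{\sqrt m},\qquad \|\hat h_m\|_\B\le \frac{\sqrt m\,e^{-\lambda_0}}{\max\{1,R\}}+2\lambda_0 Q.
\]
\textbf{Step 2 (balancing $\lambda_0$).} I then choose $\lambda_0=\log\!\big(\sqrt m/(2Q\max\{1,R\})\big)$, so that $e^{-\lambda_0}=2Q\max\{1,R\}/\sqrt m$. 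This forces $\sqrt m\,e^{-\lambda_0}/\max\{1,R\}=2Q$, hence $\hat Q:=\|\hat h_m\|_\B\le 2Q(1+\lambda_0)=2Q\big(1+|\log(2Q\max\{1,R\}/\sqrt m)|\big)$ and simultaneously $\widehat\Risk_n(\hat h_m)\le \hat Q\max\{1,R\}/\sqrt m$. This is exactly the first summand of the claimed bound and explains the appearance of the logarithm.

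\textbf{Step 3 (generalization).} Finally I transfer from empirical to population risk as in Theorem \ref{theorem a priori hinge}, applying the Rademacher bound of Lemma \ref{lemma rademacher bound} on the ball $\H_{\hat Q}$. Two inputs are needed. First, a uniform bound $\bar c$ on the loss over $\H_{\hat Q}$: since $|h(x)|\le\|h\|_\B\max\{1,R\}$ on $\spt\P$ and $L(z)\le\max\{z,0\}+\log 2$, I get $\bar c\le L(\hat Q\max\{1,R\})\le \hat Q\max\{1,R\}+\log 2$, which produces the concentration term $\bar c\sqrt{2\log(2/\delta)/n}$ and thereby the second part of the first factor. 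Second, the expected Rademacher term, obtained from the $1$-Lipschitz contraction lemma together with the unit-ball estimate of Lemma \ref{lemma rademacher barron}, which contributes the $\log(2d+2)$ term. Combining these with the empirical bound from Step 2 yields the stated inequality, and the classification consequence then follows directly from \eqref{eq risk is proxy}.

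The main obstacle is the coupling created in Step 2: the scaling $\lambda_0$ that drives the empirical risk down simultaneously inflates the norm bound $\hat Q$, and $\hat Q$ then reappears both in the sup-loss constant $\bar c$ (here the crucial difference from hinge loss, whose contribution is bounded, is that log-loss grows linearly) and in the generalization gap. The resolution is that $\hat Q$ depends on $\lambda_0$, and hence on $m$, only logarithmically, so optimizing $\lambda_0$ costs merely the factor $1+|\log(2Q\max\{1,R\}/\sqrt m)|$ while leaving the underlying $m^{-1/2}+n^{-1/2}$ rate otherwise intact.
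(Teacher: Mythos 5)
Your proposal is correct and follows essentially the same route as the paper: scale the complexity-$Q$ competitor by a parameter, apply the Lipschitz low-risk lemma (with $[L]=1$) to the empirical measure, balance $e^{-\lambda_0}$ against $2\lambda_0 Q\max\{1,R\}/\sqrt m$ by the same logarithmic choice of $\lambda_0$, read off the norm bound of the minimizer from the regularization term, and close with the Rademacher generalization bound. The only (harmless) deviations are cosmetic: you keep the $+\log 2$ in the sup-loss constant where the paper drops it, and you invoke the Lipschitz lemma explicitly where the paper's text nominally cites the smooth-loss lemma but in fact uses the $m^{-1/2}$ Lipschitz rate.
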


\begin{proof}
There exists $h^*\in \B$ such that $\|h^*\|_\B\leq Q$ and $y_x\cdot h^*(x)\geq 1$ for $\P$-almost every $x$. With probability $1$ over the choice of points $x_1,\dots, x_n$ we have $\widehat\Risk_n(\mu h^*) \leq L(-\mu) \leq \exp(-\mu)$. In particular,
Using Lemma \ref{lemma low risk smooth} for the empirical risk functional, there exists a finite neural network $h_m$ with $m$ neurons such that 
\[
\|h_m\|_\B\leq Q, \qquad \widehat\Risk_n(\mu h_m) \leq \exp(-\mu) +  \frac{\mu Q\,\max\{1,R\}}{\sqrt m} + \lambda\,\mu Q = \exp(-\mu) + \mu\,\frac{2\,Q\,\max\{1,R\}}{\sqrt{m}}
\]
The optimal value of $\mu = -\log\left(\frac{2Q\,\max\{1,R\}}{\sqrt m}\right)$ is easily found, so if $(\hat a, \hat w, \hat b) \in \R^m\times\R^{md}\times \R^m$ is the minimizer of the regularized risk functional, then
\[
\widehat \Risk_{n,\lambda}(\hat a, \hat w, \hat b) \leq \left(1 + \left|\log\left(\frac{2Q\,\max\{1,R\}}{\sqrt m}\right)\right|\right)\,\frac{2\,Q\,\max\{1,R\}}{\sqrt{m}}.
\]
In particular, using that $\lambda =\frac{\max\{1,R\}}{\sqrt{m}}$ we find that $\hat h_m = h_{(\hat a, \hat w, \hat b)}$ satisfies the norm bound
\[
\|\hat h_m\|_\B \leq \frac{\widehat\Risk_{n,\lambda}(\hat h_m)}\lambda \leq 2Q \left(1 + \left|\log\left(\frac{2Q\,\max\{1,R\}}{\sqrt m}\right)\right|\right).
\]
Since $\|h\|_{C^0}\leq \|h\|_\B\,\max\{1,R\}$, we find that with probability at least $1-\delta$ over the choice of sample points, we have
\begin{align*}
\Risk (\hat h_m) &\leq \widehat \Risk_n(\hat h_m) + 2\,\max\{1,R\}\,\sqrt{\frac{2\,\log(2d+2)}n}\\
	&\hspace{2cm} + \left[2Q\, \left(1 + \left|\log\left(\frac{2Q\,\max\{1,R\}}{\sqrt m}\right)\right|\right)\right]\max\{1,R\}\sqrt{\frac{2\,\log(2/\delta)}n}\\
	&= 2Q\,\max\{1,R\} \left(1 + \left|\log\left(\frac{2Q\,\max\{1,R\}}{\sqrt m}\right)\right|\right)\cdot\left(\frac1{\sqrt m} + \sqrt{\frac{2\,\log(2/\delta)}n}\right)\\
	&\hspace{2cm} + 2\,\max\{1,R\} \sqrt{\frac{2\,\log(2d+2)}n}.
\end{align*}
\end{proof}

We carry out a second argument for different regularization. Note that for $L(z) = \log\big(1+\exp(z)\big)$ we have
\[
L'(z) = \frac{\exp(z)}{1+\exp(z)} = \frac{1}{1+\exp(-z)}, \qquad L''(z) = \frac{\exp(-z)}{(1+\exp(-z))^2} = \frac{1}{\big(e^{z/2} + e^{-z/2}\big)^2}
\]
so $0\leq L''(z) \leq \frac14$ and $\delta_\lambda \leq \exp(-\lambda)$.

\begin{theorem}[A priori estimates: log-loss, part II]\label{theorem a priori logistic two}
Let $L(z) = \log\big(1+\exp(z)\big)$.
Consider the regularized (empirical) risk functional
\[
\widehat \Risk_{n,\lambda}^* (a, w,b) = \frac1n\sum_{i=1}^n L\big(- y_{x_i}\,f_{(a,w,b)}(x_i)\big) + \left(\frac \lambda m\sum_{i=1}^m|a_i|\,\big[|w_i| + |b_i|\big]\right)^2.
\]
where $\lambda = \frac{\max\{1,R\}}{\sqrt{m}}$.
With probability at least $1-\delta$ over the choice of iid data points $x_i$ sampled from $\P$, the minimizer $(\hat a, \hat w, \hat b)$ satisfies
\begin{align*}\showlabel
\Risk (\hat h_m) &\leq \left(1 + 2\,\left|\log\left(\frac{Q^2\,\max\{1,R\}^2}{4m}\right)\right|^2\right)\,\frac{Q^2\,\max\{1,R\}^2}{4m}\\
	&\hspace{2cm} + 2Q\,\max\{1,R\} \left(1 + \left|\log\left(\frac{4Q^2\,\max\{1,R\}^2}m\right)\right|\right) \sqrt{\frac{2\,\log(2/\delta)}n}\\
	&\hspace{2cm} + 2\,\max\{1,R\} \sqrt{\frac{2\,\log(2d+2)}n}.
\end{align*}
{By \eqref{eq risk is proxy}, this implies estimates for the measure of mis-classiified objects.}
\end{theorem}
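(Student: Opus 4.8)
The plan is to follow the template of Theorem \ref{theorem a priori logistic} almost verbatim, replacing the Lipschitz-loss construction by the smooth-loss estimate of Lemma \ref{lemma low risk smooth} and accounting for the fact that the regularizer is now the \emph{square} of the Barron norm. First I would fix $h^*\in\B$ with $\|h^*\|_\B\le Q$ and $y_x\cdot h^*(x)\ge 1$ $\P$-a.e., and apply Lemma \ref{lemma low risk smooth} to the empirical measure $\P_n=\frac1n\sum_{i=1}^n\delta_{x_i}$ with a free dilation parameter $\mu>0$ in place of its $\lambda$ (the empirical problem inherits complexity $\le Q$ with probability one, since $y_{x_i}h^*(x_i)\ge1$ almost surely). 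Using $\|L''\|_{L^\infty}\le\frac14$ and $\delta_\mu\le e^{-\mu}$ (both recorded just before the statement), this produces a network $h_m$ with $\|h_m\|_\B\le\mu[1+e^{-\mu}]Q$ and $\widehat\Risk_n(h_m)\le e^{-\mu}+[1+e^{-\mu}]^2\frac{(\mu Q)^2\max\{1,R\}^2}{8m}$. Evaluating the squared-regularized functional at $h_m$, using $\lambda^2=\max\{1,R\}^2/m$ and the norm bound on $h_m$, adds a penalty term of the same order, so that
\[
\widehat\Risk_{n,\lambda}^*(h_m)\le e^{-\mu}+\tfrac98\,[1+e^{-\mu}]^2\,\frac{\mu^2Q^2\max\{1,R\}^2}{m}.
\]

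The second step is to optimize in $\mu$. Writing $A=Q^2\max\{1,R\}^2/m$, the bound has the shape $e^{-\mu}+c\,\mu^2A$ (with $[1+e^{-\mu}]^2\to1$), which I would balance by the essentially optimal choice $\mu=\bigl|\log(A/4)\bigr|$, turning the right-hand side into a quantity of order $A\,(1+|\log A|^2)$. The key structural difference from Theorem \ref{theorem a priori logistic} appears precisely here: because the smooth-loss estimate contributes $(\mu Q)^2$ rather than $\mu Q$, the logarithm enters \emph{squared}, which is the source of the $|\log(\cdots)|^2$ in the first summand of the claimed bound. This gives a deterministic (probability-one) bound on $\widehat\Risk_{n,\lambda}^*(h_m)$, hence on the minimal value $\widehat\Risk_{n,\lambda}^*(\hat a,\hat w,\hat b)$.

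Next I would extract a norm bound on the minimizer. Since the penalty equals $\lambda^2\|\hat h_m\|_\B^2$ in the optimal parametrization (as noted before Theorem \ref{theorem a priori hinge}), minimality of $(\hat a,\hat w,\hat b)$ gives $\lambda^2\|\hat h_m\|_\B^2\le\widehat\Risk_{n,\lambda}^*(\hat h_m)\le\widehat\Risk_{n,\lambda}^*(h_m)$, whence $\|\hat h_m\|_\B\le\frac1\lambda\sqrt{\widehat\Risk_{n,\lambda}^*(h_m)}=:\rho^*$, a \emph{deterministic} bound of order $Q\,(1+|\log A|)$. This deterministic control is what avoids circularity: although $\hat h_m$ depends on the sample, it lies in the fixed ball $\overline B_{\rho^*}\subseteq\B$ with probability one, so the uniform generalization estimate of Lemma \ref{lemma rademacher bound} may be applied to that fixed class. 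From here the argument is identical to the proof of Theorem \ref{theorem a priori logistic}: the Rademacher-complexity contribution is controlled by Lemma \ref{lemma rademacher barron} (giving the third summand $2\max\{1,R\}\sqrt{2\log(2d+2)/n}$), the uniform loss bound $\bar c\le\rho^*\max\{1,R\}$ on $\spt\P$ (using $\|h\|_{C^0}\le\|h\|_\B\max\{1,R\}$) yields the deviation term $\rho^*\max\{1,R\}\sqrt{2\log(2/\delta)/n}$ (the second summand), and $\widehat\Risk_n(\hat h_m)\le\widehat\Risk_{n,\lambda}^*(\hat h_m)$ together with Step 2 supplies the first summand. A final simplification of the constants and of the arguments of the logarithms matches the stated form.

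I expect the main obstacle to be bookkeeping rather than anything conceptual: carrying the $[1+e^{-\mu}]^2$ factors and the precise argument of the logarithm through the $\mu$-optimization so that the squared-log term lands with the stated constant, and making the deterministic-versus-random split clean enough that the uniform Rademacher bound legitimately applies to the data-dependent minimizer. The one genuinely new feature compared with Theorem \ref{theorem a priori logistic} is the appearance of $|\log(\cdots)|^2$, and keeping the two distinct logarithmic arguments aligned — $A/4$ inside the approximation term and $4A$ inside the generalization term — requires tracking the slack in the elementary inequalities $\sqrt{1+2x^2}\le\sqrt2\,(1+|x|)$ and $[1+e^{-\mu}]^2\le$ const consistently with the claim.
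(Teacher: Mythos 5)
Your proposal follows essentially the same route as the paper's own proof: apply the smooth-loss approximation lemma to the empirical measure with a free dilation parameter $\mu$, choose $\mu = \bigl|\log\bigl(Q^2\max\{1,R\}^2/(4m)\bigr)\bigr|$ to balance $e^{-\mu}$ against the quadratic term, extract a deterministic Barron-norm bound on the minimizer from the squared penalty, and close with the Rademacher generalization bound. If anything, you are more careful than the paper in accounting for the penalty term $\lambda^2\|h_m\|_\B^2$ of the competitor (the paper absorbs it silently into its stated constant), and your flagged bookkeeping concerns about the $[1+e^{-\mu}]^2$ factors and the logarithm's argument are exactly where the paper's own constants are loosest.
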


\begin{proof}
There exists $h^*\in \B$ such that $\|h^*\|_\B\leq Q$ and $y_x\cdot h^*(x)\geq 1$ for $\P$-almost every $x$. With probability $1$ over the choice of points $x_1,\dots, x_n$ we have $\widehat\Risk_n(\mu h^*) \leq L(-\mu) \leq \exp(-\mu)$. In particular,
Using Lemma \ref{lemma low risk Lipschitz}, for given $\mu>0$ there exists a finite neural network $h_m$ with $m$ neurons such that $\|h_m\|_{\B}\leq \mu \big[1+\delta_\mu\big]Q\leq 2\mu\,Q$ and
\[
\widehat\Risk_n(h_m) \leq L(-\mu) + \left[1 + \delta_\mu\right]^2\frac{(\mu Q)^2\max\{1,R\}^2}{8m}
\leq \exp(-\mu) + \frac{(\mu Q)^2\max\{1,R\}^2}{4m}
\]
for sufficiently large $\mu$. We choose specifically $\mu = -\log\left(\frac{Q^2\,\max\{1,R\}^2}{4m}\right)$ to simplify expressions. If $(\hat a, \hat w, \hat b) \in \R^m\times\R^{md}\times \R^m$ is the minimizer of the regularized risk functional, then
\[
\widehat \Risk_{n,\lambda}(\hat a, \hat w, \hat b) \leq \left(1 + 2\,\left|\log\left(\frac{Q^2\,\max\{1,R\}^2}{4m}\right)\right|^2\right)\,\frac{Q^2\,\max\{1,R\}^2}{4m}.
\]
In particular, using that $\lambda =\frac{\max\{1,R\}}{\sqrt{m}}$ we find that $\hat h_m = h_{(\hat a, \hat w, \hat b)}$ satisfies the norm bound
\[
\|\hat h_m\|_\B^2 \leq \frac{\widehat\Risk_{n,\lambda}(\hat h_m)}{\lambda^2} \leq Q^2 \left(1 + 2\,\left|\log\left(\frac{4Q^2\,\max\{1,R\}^2}{m}\right)\right|^2\right).
\]
Since $\|h\|_{C^0}\leq \|h\|_\B\,\max\{1,R\}$, we find that with probability at least $1-\delta$ over the choice of sample points, we have
\begin{align*}
\Risk (\hat h_m) &\leq \widehat \Risk_n(\hat h_m) + 2\,\max\{1,R\}\,\sqrt{\frac{2\,\log(2d+2)}n}\\
	&\hspace{2cm} + \left(1 + 2\,\left|\log\left(\frac{Q^2\,\max\{1,R\}^2}{4m}\right)\right|^2\right)\,\frac{Q^2\,\max\{1,R\}^2}{4m}\\
	&\hspace{2cm} +\left(1 + 2\,\left|\log\left(\frac{Q^2\,\max\{1,R\}^2}{4m}\right)\right|^2\right)\,Q^2\max\{1,R\}^2\, \sqrt{\frac{2\,\log(2/\delta)}n}\\
	&= \left(1 + 2\,\left|\log\left(\frac{Q^2\,\max\{1,R\}^2}{4m}\right)\right|^2\right)\,{Q^2\,\max\{1,R\}^2}\left[\frac1{4m} + \sqrt{\frac{2\,\log(2/\delta)}n}\right]\\
	&\hspace{2cm} + 2\,\max\{1,R\} \sqrt{\frac{2\,\log(2d+2)}n}.
\end{align*}
\end{proof}

\subsection{General hypothesis classes: Multi-layer neural networks and deep residual networks}\label{section multi-layer}

Most parts of the discussion above are not specific to two-layer neural networks and generalize to other function classes. In the proofs, we used the following ingredients. 
 
\begin{enumerate}
\item The fact that every sufficiently smooth function is Barron was necessary to prove that a binary classification problem is solvable using two-layer neural networks if and only if the classes have finite distance. 
\item The direct approximation theorem in Barron space was used to obtain the mostly correct classification result Corollary \ref{corollary mostly correct} and in Section \ref{section a priori} to obtain a priori estimates.
\item The Rademacher complexity of the unit ball of Barron space entered in the a priori estimates to bound the discrepancy between empirical risk and population risk uniformly on the class of functions with bounded Barron norm.
\end{enumerate}

All function classes discussed below are such that Barron space embeds continuously into them, and they embed continuously into the space of Lipschitz-continuous functions. Thus the first of the three properties is trivially satisfied. In particular, the set of solvable classification problems in all function spaces is the same -- the important question is how large the associated classification complexity $Q$ is. For binary classification problems in which the classes have a positive spatial separation, the question of choosing an appropriate architecture really is a question about variance reduction.

We consider a general hypothesis class $\H$ which can be decomposed in two ways: $\H = \bigcup_{Q>0}\H_Q$ (as a union of sets of low complexity $\H_Q = Q\cdot \H_1$) and $\H = \overline{\bigcup_{m=1}^\infty \H^m}$ (as the union of finitely parametrized sub-classes). The closure is to be considered in $L^1(\P)$. Set $\H_{m,Q} = \H^m\cap \H_Q$. We make the following assumptions:

\begin{enumerate}
\item $\P$ is a data distribution on a general measure space $\Omega$.
\item $y_x:\Omega\to\{-1,1\}$ is $\P$-measurable.
\item $\H\subseteq L^2(\P)$ and there exists $h\in \H_Q$ for some $Q>0$ such that $y_x\cdot h(x)\geq 1$ for $\P$-almost all $x\in \Omega$.
\item The hypothesis class $\H$ has the following direct approximation property: If $f\in \H_Q$, then for every $m\in\N$ there exists $f_m\in \H_{m,Q}$ such that $\|f_m-f\|_{L^2(\P)}\leq c_1\,Q\,m^{-1/2}$.
\item The hypothesis class $\H$ satisfies the Rademacher bound $\E_{S\sim\P^n}\Rad(\H_Q;S)\leq c_2\,Q\,n^{-1/2}$.
\item The hypothesis class $\H$ satisfies $\|h\|_{L^\infty(\P)} \leq c_3\,Q$ for all $h\in \H_Q$.
\end{enumerate}

Then Corollary \ref{corollary mostly correct} and Theorems \ref{theorem a priori hinge}, \ref{theorem a priori logistic} and \ref{theorem a priori logistic two} are valid also for $\H$ with slightly modified constants depending on $c_1, c_2, c_3$. Estimates for classes of multi-layer networks can be presented in this form.

From the perspective of approximation theory, multi-layer neural networks are elements of tree-like function spaces, whereas deep (scaled) residual networks  are elements of a flow-induced function space.
Estimates for the Rademacher complexity and the direct approximation theorem in the flow-induced function and tree-like function spaces can be found in \cite{weinan2019lei} and \cite{deep_barron} respectively. 

\begin{remark}
All currently available function spaces describe fully connected neural networks (FNNs) or residual networks with fully connected residual blocks (ResNets). Usually for classification of images, convolutional neural networks (CNNs) are used. The class of convolutional neural networks is a subset of the class of fully connected networks, so all estimates on the function space level equally apply to convolutional networks. We conjecture that sharper estimates should be available for CNNs.
\end{remark}

\section{Multi-label classification}\label{section multi-label}

We now consider the more relevant case of multi-label classification. Many arguments from binary classification carry over directly, and we follow the outline of the previous sections.

\subsection{Preliminaries}

Binary classification problems are particularly convenient mathematically since two identical classes can be easily included on the real line by the labels $\pm 1$. Three or more classes will have different properties with respect to one another since the adjacency relations.

\begin{definition}
A {\em multi-label classification problem} is a collection $(\P, C_1,\dots, C_k,y_1,\dots, y_k)$ where $\P$ is a probability distribution on $\R^d$, $C_1, \dots , C_k \subseteq\R^d$ are disjoint $\P$-measurable sets such that $\P(\bigcup_{j=1}^kC_j) = 1$ and the vectors $y_i\in \R^k$ are the labels of the classes.
\end{definition}

The {\em category function}
\[
y: \R^d\to\R, \qquad y_x = \begin{cases} y_j &x\in C_j\\ 0 &\text{else}\end{cases}\showlabel
\]
is $\P$-measurable. Furthermore, we consider the complementary set-valued {\em excluded category selection}
\[
\Y:\R^d\to (\R^d)^{k-1}, \qquad \Y(x) = \begin{cases}\{y_1,\dots,y_{j-1},y_{j+1},\dots,y_k\} &x\in C_j\text{ for some }j\\ \{0,\dots,0\} &\text{else}\end{cases}.
\]

\begin{definition}
We say that a multi-label classification problem is {\em solvable in a hypothesis class $\H$} of $\P$-measurable functions if there exists $h\in \H$ such that
\begin{equation}
\langle h(x), y_j\rangle \geq \max_{y\in \Y(x)} \langle h(x), y\rangle+1 \qquad\P-\text{almost everywhere}.
\end{equation}
If $\H = \bigcup_{Q>0}\H_Q$, we again set
\begin{equation}
Q_\H(\P,C_1,\dots,C_k) = \inf\{Q' : \text{ $(\P, C_1,\dots, C_k)$ is solvable in $\H_{Q'}$}\}.
\end{equation}
\end{definition}

\begin{remark}
If all classes are assumed to be equally similar (or dissimilar), then $y_i\equiv e_i$. If some classes are more similar than others (and certain misclassifications are worse than others), it is possible to encode these similarities in the choice of the category function $y$ by making the vectors not orthonormal.
\end{remark}

We define $\overline C_j$ as before. 

\begin{lemma}
Let $\mathcal H$ be a hypothesis class of $L$-Lipschitz functions from $\R^d$ to $\R^k$. If a multi-label classification problem is solvable in $\mathcal H$, then
\[
\dist(\overline C_i, \overline C_j) \geq \frac2{L\,|y_i-y_j|}
\]
for all $i\neq j$.
\end{lemma}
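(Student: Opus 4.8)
The plan is to reduce the multi-label statement to the binary Lemma \ref{lemma finite distance} by projecting onto the direction $y_i - y_j$. Fix $i \neq j$ and let $h \in \H$ be a solution of the classification problem. I would introduce the scalar function
\[
g(x) = \langle h(x),\, y_i - y_j\rangle.
\]
For $\P$-almost every $x \in C_i$ we have $y_j \in \Y(x)$, so the solvability condition gives $\langle h(x), y_i\rangle \geq \langle h(x), y_j\rangle + 1$, i.e.\ $g(x) \geq 1$; symmetrically $g \leq -1$ holds $\P$-a.e.\ on $C_j$. Thus $g$ plays exactly the role of the separating scalar function from the binary case.

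Next I would record the Lipschitz bound for $g$. For any $x, x'$, Cauchy--Schwarz together with the $L$-Lipschitz property of $h$ yields
\[
|g(x) - g(x')| = \big|\langle h(x) - h(x'),\, y_i - y_j\rangle\big| \leq |h(x) - h(x')|\,|y_i - y_j| \leq L\,|y_i - y_j|\,|x - x'|,
\]
so $g$ is Lipschitz with constant $L\,|y_i - y_j|$. This is the only place where the factor $|y_i-y_j|$ enters.

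The remaining step is the passage to the closures, which I would carry out verbatim as in the proof of Lemma \ref{lemma finite distance}. Since $g$ is continuous, the superlevel set $g^{-1}[1,\infty)$ is closed, and because $g \geq 1$ $\P$-a.e.\ on $C_i$ we have $\P\big(C_i \setminus g^{-1}[1,\infty)\big) = 0$; by the defining minimality of $\overline C_i$ this forces $\overline C_i \subseteq g^{-1}[1,\infty)$, so $g \geq 1$ on all of $\overline C_i$. Applying the same reasoning to $-g$ and $C_j$ gives $g \leq -1$ on $\overline C_j$. Consequently $|g(x) - g(x')| \geq 2$ for every $x \in \overline C_i$ and $x' \in \overline C_j$, and combining this with the Lipschitz bound gives $2 \leq L\,|y_i - y_j|\,|x - x'|$. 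Taking the infimum over $x \in \overline C_i$ and $x' \in \overline C_j$ produces the claimed inequality $\dist(\overline C_i, \overline C_j) \geq \frac{2}{L\,|y_i-y_j|}$.

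The only genuinely delicate point is the closure argument, and it is subtle for the same reason as in the binary case: the classes are prescribed only in the $\P$-a.e.\ sense, so I must upgrade the a.e.\ sign information on $C_i$ to a pointwise bound on the topologically defined set $\overline C_i$. Everything else is routine, and no idea beyond the binary lemma is needed once the scalar projection $g$ has been isolated.
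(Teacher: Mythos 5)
Your proof is correct and follows essentially the same route as the paper: the paper also pairs $h(x_i)-h(x_j)$ with $y_i-y_j$, bounds $\langle h(x_i)-h(x_j),\,y_i-y_j\rangle\geq 2$ and applies Cauchy--Schwarz together with the Lipschitz bound, which is exactly your computation packaged through the scalar function $g$. If anything, you are slightly more careful than the paper, which starts directly from points of $\overline C_i$ and $\overline C_j$ and leaves the passage from the $\P$-a.e.\ condition on $C_i$ to the pointwise bound on $\overline C_i$ implicit, whereas you spell out the closure argument from the binary lemma.
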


\begin{proof}
Let $x_i \in \overline C_i$ and $x_j\in \overline C_j$. Then 
\[
\langle h(x_i), y_i\rangle \geq \langle h(x_j), y_i\rangle +1, \quad \langle h(x_j), y_j\rangle \geq \langle h(x_i), y_j\rangle +1.
\]
so
\begin{align*}
2 &\leq \langle h(x_i) - h(x_j), y_i\rangle - \langle h(x_i) - h(x_j), y_j\rangle\\
	&= \langle h(x_i) - h(x_j), y_i - y_j\rangle\\
	&\leq |h(x_i) - h(x_j)|\,|y_i-y_j|\\
	&\leq L\,|x_i-x_j|\,|y_i-y_j|.
\end{align*}
Taking the infimum over $x_i, x_j$ establishes the result.
\end{proof}

To be able to solve classification problems, we need to assume that for every $i$ there exists $z_i\in \R^d$ such that $\langle z_i, y_i\rangle  > \max_{j\neq i} \langle z_i, y_j\rangle$. Then up to normalization, there exists a set of vectors $\{z_1, \dots, z_k\}\subseteq\R^k$ such that 
\begin{equation}
\langle z_i, y_i\rangle \geq \max_{j\neq i} \langle z_i, y_j\rangle +1.
\end{equation}

\begin{lemma}
We consider the hypothesis class given by Barron space.  Then the classification problem $(\P, C_1, \dots, C_k)$ is solvable if and only if $\delta:= \inf_{i\neq j}\dist(\overline C_i,\overline C_j)>0$ and if $\spt(\P)\subseteq \overline{B_R(0)}$, then
\[
Q_\B(\P,C_1,\dots,C_k,y_1,\dots,y_k) \leq c_d\,\sqrt{k}\,\left(\frac{R+\delta}\delta\right)^d\max_{1\leq i\leq k}|z_i|.
\]
 \end{lemma}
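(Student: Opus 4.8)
The plan is to mirror the proof of Theorem~\ref{theorem complexity bound}, but to assemble a single $\R^k$-valued Barron function whose output equals one of the separating vectors $z_i$ near each class, and to extract the $\sqrt k$ (rather than $k$) dependence from the $\ell^2$-geometry of the output. First the forward implication: if the problem is solvable in $\B$, then a solution $h$ is Lipschitz (with constant controlled by $\|h\|_\B$), so the Lipschitz distance bound just established gives $\dist(\overline C_i,\overline C_j)\geq 2/(L|y_i-y_j|)>0$ for each of the finitely many pairs $i\neq j$; hence $\delta=\inf_{i\neq j}\dist(\overline C_i,\overline C_j)>0$. For the converse and the quantitative bound I assume $\delta>0$ and fix $z_1,\dots,z_k\in\R^k$ with $\langle z_i,y_i\rangle\geq\max_{j\neq i}\langle z_i,y_j\rangle+1$.

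For the construction, for each $i$ let
\[
V_i=\Big\{x\in\R^d:\ |x|<R+\delta\ \text{ and }\ \dist(x,\overline C_i)<\dist(x,\overline C_j)\ \ \forall\, j\neq i\Big\}
\]
be the truncated nearest-class cell, set $\phi_i=\eta_\delta*1_{V_i}$ with $\eta_\delta$ a standard mollifier supported on $B_{\delta/2}(0)$, and write $\Phi=(\phi_1,\dots,\phi_k):\R^d\to\R^k$. The $\delta$-separation shows that the $\delta/2$-neighbourhood of $\overline C_i$ lies inside $V_i$ and misses every $V_j$, $j\neq i$: if $\dist(x,\overline C_i)<\delta/2$ then $\dist(x,\overline C_j)\geq\delta-\delta/2=\delta/2$ for $j\neq i$. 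Hence for $x\in\overline C_i$ the entire ball $B_{\delta/2}(x)$ lies in $V_i$, so $\phi_i(x)=\int\eta_\delta=1$ while $\phi_j(x)=0$ for $j\neq i$. Consequently $h:=\sum_{i=1}^k\phi_i\,z_i=Z\Phi$, with $Z=[z_1|\cdots|z_k]$, equals $z_j$ on $\overline C_j$. Since $\delta>0$ forces the closures $\overline C_j$ to be pairwise disjoint and $\P$ is concentrated on $\bigcup_j\overline C_j$, the defining inequality $\langle h(x),y_j\rangle=\langle z_j,y_j\rangle\geq\max_{y\in\Y(x)}\langle z_j,y\rangle+1$ holds $\P$-almost everywhere, so $h$ solves the problem.

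The norm estimate is the main point. From $Z\Phi(x)=\E_\pi[Za\,\sigma(w^Tx+b)]$ for any representing measure $\pi$ of $\Phi$, together with $|Za|_{\ell^2}\leq\|Z\|_{op}|a|_{\ell^2}$, I obtain $\|h\|_\B\leq\|Z\|_{op}\,\|\Phi\|_\B$, and $\|Z\|_{op}\leq\|Z\|_{F}=(\sum_i|z_i|^2)^{1/2}\leq\sqrt k\,\max_i|z_i|$. It then remains to bound $\|\Phi\|_\B$ with no $k$-dependence, which is where disjointness enters. By the Hilbert-space-valued form of Barron's spectral criterion (the same extension as in Theorem~\ref{direct approximation theorem barron}), $\|\Phi\|_\B\leq\int_{\R^d}|\hat\Phi(\xi)|_{\ell^2}\,|\xi|\,\d\xi$, and $\hat\phi_i=\widehat{\eta_\delta}\,\widehat{1_{V_i}}$ with $|\widehat{1_{V_i}}(\xi)|\leq|V_i|$. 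Since the $V_i$ are pairwise disjoint and contained in $B_{R+\delta}$,
\[
|\hat\Phi(\xi)|_{\ell^2}=\Big(\sum_i|\widehat{1_{V_i}}(\xi)|^2\Big)^{1/2}\leq\sum_i|V_i|=\Big|\bigcup_iV_i\Big|\leq c_d\,(R+\delta)^d
\]
uniformly in $\xi$. The scaling $\widehat{\eta_\delta}(\xi)=\widehat{\eta_1}(\delta\xi)$ now reproduces the computation of Theorem~\ref{theorem complexity bound} verbatim and yields $\|\Phi\|_\B\leq c_d\,((R+\delta)/\delta)^d$, so $Q_\B\leq\|h\|_\B\leq c_d\sqrt k\,((R+\delta)/\delta)^d\max_i|z_i|$.

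The only genuinely new difficulty relative to the binary case is securing $\sqrt k$ in place of $k$. A naive triangle inequality $\|h\|_\B\leq\sum_i|z_i|\,\|\phi_i\|_\B$ would lose a full factor $\sqrt k$. Treating $\Phi$ as a single $\R^k$-valued function and pushing the disjointness of the cells through the $\ell^2$-norm of $\hat\Phi$ — so that $\|\hat\Phi\|_{L^\infty(\ell^2)}$ is controlled by the total volume $|\bigcup_iV_i|$ rather than by $\sqrt k$ times a single cell volume — is exactly what recovers the factor, with the operator norm $\|Z\|_{op}$ supplying the remaining $\sqrt k$. I would therefore take care to carry the vector-valued spectral bound explicitly rather than reduce to the scalar statement componentwise.
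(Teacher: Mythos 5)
Your proposal is correct and follows essentially the same route as the paper: the paper also builds a piecewise-constant $\R^k$-valued field equal to $z_i$ on a $\delta/2$-neighbourhood of $\overline C_i$, mollifies at scale $\delta/2$, and invokes the (vector-valued) spectral criterion, with necessity of $\delta>0$ coming from Lipschitz continuity of Barron functions. The paper states the norm bound $c_d\sqrt{k}\,((R+\delta)/\delta)^d\max_i|z_i|$ without derivation, and your factorization $h=Z\Phi$ with the disjointness-of-cells argument for $\|\Phi\|_\B$ is a correct way to supply the omitted $\sqrt{k}$ bookkeeping.
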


\begin{proof}
The finite separation condition is necessary, since every Barron function is Lipschitz. To prove that it is also sufficient, we proceed like for binary classification.

Set $\bar h(x) = z_i$ if $\dist(x, \overline C_i) < \delta/2$ and $0$ if there exists no $i$ such that $\dist(x, \overline C_i)<\delta/2$. By the definition of $\delta$, this uniquely defines the function. Again, we take the mollification $h:= \eta_{\delta/2}*\bar h$ and see that 
\begin{enumerate}
\item $h(x) = \bar h(x) = z_i$ on $\bar C_i$ for all $i$ and
\item $\|h\|_\B \leq c_d\,\sqrt{k}\,\left(\frac{R+\delta}\delta\right)^d\max_{1\leq i\leq k}|z_i|$.
\end{enumerate}
\end{proof}

\begin{lemma}[Mostly correct classification]
Assume that the multi-label classification problem $(\P, C_1,\dots,C_k, y_1,\dots, y_k)$ has complexity at most $Q$ in Barron space. Then there exists a neural network $h_m(x) = \frac1m\sum_{i=1}^m a_i\,\sigma(w_i^Tx+b_i)$ such that
\[
\|f_m\|_\B\leq Q,\qquad \P\left(\{x : \langle h(x), y_x\rangle \leq \max_{y\in\Y(x)} \langle h(x), y\rangle\}\right)\leq \frac{Q^2}m
\]
\end{lemma}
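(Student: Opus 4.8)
The plan is to mirror the binary \emph{mostly correct classification} result (Corollary \ref{corollary mostly correct}) almost verbatim, replacing the scalar sign comparison by a vector-valued margin comparison. First I would use solvability at complexity $\leq Q$ to fix a vector-valued Barron function $h^*\in\B^k$ with $\|h^*\|_\B\leq Q$ satisfying the margin-$1$ separation
\[
\langle h^*(x), y_x\rangle \geq \max_{y\in\Y(x)}\langle h^*(x), y\rangle + 1 \qquad \P\text{-a.e.}
\]
(as in the binary corollary, I treat the infimum defining the complexity as attained; otherwise one works with $Q+\eps$ and lets $\eps\to 0$). Applying the vector-valued $L^2$ Direct Approximation Theorem (Theorem \ref{direct approximation theorem barron}) to $h^*$ then yields a finite network $h_m(x)=\frac1m\sum_{i=1}^m a_i\,\sigma(w_i^Tx+b_i)$ with $\|h_m\|_\B\leq\|h^*\|_\B\leq Q$ and
\[
\|h_m - h^*\|_{L^2(\P)} \leq \frac{\|h^*\|_\B\,\max\{1,R\}}{\sqrt m} \leq \frac{Q\,\max\{1,R\}}{\sqrt m}.
\]

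The heart of the argument is to turn a classification error into a pointwise lower bound on $|h_m-h^*|$. If $h_m$ misclassifies $x$, then by definition of the misclassified set there is an excluded label $y\in\Y(x)$ with $\langle h_m(x), y_x\rangle \leq \langle h_m(x), y\rangle$, i.e.\ $\langle h_m(x), y_x-y\rangle \leq 0$, whereas the margin property of $h^*$ forces $\langle h^*(x), y_x-y\rangle \geq 1$. Subtracting and applying Cauchy--Schwarz,
\[
1 \leq \langle h^*(x)-h_m(x),\, y_x-y\rangle \leq |h^*(x)-h_m(x)|\,|y_x-y| \leq \Big(\max_{i\neq j}|y_i-y_j|\Big)\,|h^*(x)-h_m(x)|.
\]
Hence every misclassified point satisfies $|h^*(x)-h_m(x)|\geq c^{-1}$ with $c:=\max_{i\neq j}|y_i-y_j|$, and Chebyshev's inequality gives
\[
\P\big(\{x: \langle h_m(x),y_x\rangle \leq \max_{y\in\Y(x)}\langle h_m(x),y\rangle\}\big) \leq c^2\,\|h^*-h_m\|_{L^2(\P)}^2 \leq \frac{c^2\,\max\{1,R\}^2\,Q^2}{m},
\]
which is the stated bound $Q^2/m$ up to the geometric constant $c^2\max\{1,R\}^2$ (equal to $2$ for orthonormal labels $y_i=e_i$ and support in the unit cube) absorbed into the statement.

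The only genuine novelty over the binary case is this vector-valued margin comparison: where the binary misclassified set was simply $\{|h_m-h^*|>1\}$ about a single scalar threshold, here the decision depends on a $\max$ over the excluded-label set $\Y(x)$. I expect this to be the one delicate point. The clean resolution is the pairing with $y_x-y$ used above, which avoids estimating the two maxima $\max_{y\in\Y(x)}\langle h^*(x),y\rangle$ and $\max_{y\in\Y(x)}\langle h_m(x),y\rangle$ separately; alternatively, using that $z\mapsto\max_{y\in\Y(x)}\langle z,y\rangle$ is Lipschitz one can bound the discrepancy between the exact and approximate margins by $2\,(\max_j|y_j|)\,|h^*-h_m|$, at the cost of a slightly larger constant. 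Either route reduces the claim to the scalar Chebyshev estimate already carried out in Corollary \ref{corollary mostly correct}.
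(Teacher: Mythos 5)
The paper states this lemma without a written proof; your argument is precisely the intended one — fix a margin-$1$ solution $h^*$ with $\|h^*\|_\B\leq Q$, apply the vector-valued $L^2$ direct approximation theorem, and conclude by Chebyshev, exactly mirroring Corollary \ref{corollary mostly correct}, with the pairing against $y_x-y$ being the correct way to reduce the max over the excluded-label set to a scalar threshold. Your closing observation is also accurate: the argument genuinely produces the constant $\bigl(\max_{i\neq j}|y_i-y_j|\bigr)^2\max\{1,R\}^2$ in front of $Q^2/m$, so the paper's bare bound $Q^2/m$ silently absorbs this factor (just as it drops the $\max\{1,R\}^2$ that appears in the binary statement), and this is a defect of the statement rather than of your proof.
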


\subsection{Risk minimization}

While different loss functions are popular in binary classification, almost all works on multi-label classification use {\em cross-entropy loss}. In this setting, the output of a classifier function $h(x)$ is converted into a probability distribution on the set of classes, conditional on $x$. The loss function is the cross-entropy/Kullback-Leibler (KL) divergence of this distribution with the distribution which gives probability $1$ to the true label. The most common way of normalizing the output to a probability distribution leads to a loss function with exponential tails and comparable behavior as in Theorems \ref{theorem a priori logistic} and \ref{theorem a priori logistic two}.

Due to its practical importance, we focus on this case. Other schemes like vector-valued $L^2$-approximation, or approximation of hinge-loss type with loss function
\[
L(x) = \min\big\{ 0, 1 - \min_{y\in \Y(x)} \langle h(x), y_x-y\rangle\big\}
\]
are also possible and lead to behavior resembling results of Theorem \ref{theorem a priori hinge} and Remark \ref{remark a priori hinge}.

Consider the cross-entropy risk functional 
\[
\Risk(h) = -\int_{\R^d} \log\left( \frac{\exp(\langle h(x),y_x\rangle)}{\sum_{i=1}^k\exp(\langle h(x), y_i\rangle)}\right)\,\P(\d x) = \int_{\R^d} L(h(x), y_x)\,\P(\d x)
\]
where $L(z,y) =-\log\left( \frac{\exp(\langle z,y\rangle)}{\sum_{i=1}^k\exp(\langle z, y_i\rangle)}\right)$. This is the most commonly used risk functional in multi-class classification. The quantities 
\[
p_j(z,t) = \frac{\exp(\langle z,y\rangle)}{\sum_{i=1}^k\exp( \langle z, y_i\rangle)}
\]
are interpreted as the probability of the event $y = y_i$ predicted by the model, conditional on $x$. If $\langle z, y_j\rangle > \max_{i\neq j}\langle z, y_i\rangle$ for some $j$, then $\lim_{\lambda\to \infty} p_i(\lambda z, y_j) = \delta_{ij}$ otherwise the probability is shared equally between all categories which have the same (maximal) inner product as $\lambda\to\infty$.

\begin{lemma}[Risk minimization and correct classification]\label{lemma multi-label proxy loss}
Assume that $\Risk(h) < \eps$. Then 
\[\showlabel
\P\left(\bigcup_{i=1}^k \{x\in \overline C_i : \langle h(x), y_i\rangle \leq \max_{j\neq i} \langle h(x), y_j\rangle\}\right) \leq \frac{\eps}{\log 2}.
\]
\end{lemma}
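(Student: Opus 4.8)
This is a direct analogue of the binary "risk is proxy" estimate \eqref{eq risk is proxy}, now for cross-entropy loss. The plan is to show that on the bad set—where some $x\in\overline C_i$ is misclassified in the sense $\langle h(x),y_i\rangle\leq\max_{j\neq i}\langle h(x),y_j\rangle$—the integrand $L(h(x),y_x)$ is bounded below by a fixed positive constant, namely $\log 2$. Once this pointwise lower bound is established, a Chebyshev-type argument immediately gives the measure bound, exactly as in the binary case.

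**The pointwise lower bound.** Fix $x\in\overline C_i$ lying in the bad set, so that $y_x=y_i$ and there exists $j\neq i$ with $\langle h(x),y_j\rangle\geq\langle h(x),y_i\rangle$. Writing $z=h(x)$, the loss is
\[
L(z,y_i)=-\log\left(\frac{\exp(\langle z,y_i\rangle)}{\sum_{\ell=1}^k\exp(\langle z,y_\ell\rangle)}\right)=\log\left(\sum_{\ell=1}^k\exp(\langle z,y_\ell\rangle-\langle z,y_i\rangle)\right).
\]
The $\ell=i$ term contributes $\exp(0)=1$, and the $\ell=j$ term contributes $\exp(\langle z,y_j\rangle-\langle z,y_i\rangle)\geq\exp(0)=1$ by the misclassification hypothesis. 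Since all remaining summands are strictly positive, the argument of the logarithm is at least $2$, whence $L(z,y_i)\geq\log 2$ on the bad set.

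**Concluding with Chebyshev.** Denote the bad set by $A=\bigcup_{i=1}^k\{x\in\overline C_i:\langle h(x),y_i\rangle\leq\max_{j\neq i}\langle h(x),y_j\rangle\}$. Since $L\geq 0$ everywhere (cross-entropy is nonnegative) and $L(h(x),y_x)\geq\log 2$ on $A$, I would estimate
\[
\log 2\cdot\P(A)\leq\int_A L(h(x),y_x)\,\P(\d x)\leq\int_{\R^d}L(h(x),y_x)\,\P(\d x)=\Risk(h)<\eps,
\]
and dividing by $\log 2$ yields the claimed bound $\P(A)\leq\eps/\log 2$. Here I use implicitly that $\P(\R^d\setminus\bigcup_i\overline C_i)=0$, so that the category function $y_x$ equals $y_i$ for $\P$-almost every $x\in\overline C_i$ and the decomposition over classes is exhaustive up to a null set.

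**Main obstacle.** The argument is essentially mechanical; the only genuinely substantive point is recognizing that the excluded-maximum formulation $\max_{j\neq i}\langle h(x),y_j\rangle\geq\langle h(x),y_i\rangle$ forces at least two summands in the log-sum-exp to be $\geq 1$, giving the clean constant $\log 2$ rather than an $x$-dependent bound. I would take care that the bound does not degrade when several classes tie at the maximum, but since extra positive summands only increase the log argument, the estimate $\geq\log 2$ is robust to ties and requires no case analysis.
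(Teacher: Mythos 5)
Your proposal is correct and follows essentially the same route as the paper: a pointwise lower bound $L(h(x),y_x)\geq\log 2$ on the misclassified set (you bound the log-sum-exp from below by $\log 2$; the paper equivalently bounds the softmax probability from above by $1/2$), followed by the same Chebyshev-type integration. No gaps.
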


\begin{proof}
If $x\in \overline C_i$ and $\langle h(x), y_i\rangle \leq \max_{j\neq i} \langle h(x), y_j\rangle$, then
\[
\frac{\exp(\langle h(x),y_x\rangle)}{\sum_{i=1}^k\exp(\langle h(x), y_i\rangle)} \leq \frac{\exp(\langle h(x),y_x\rangle)}{\exp(\langle h(x),y_x\rangle)+ \exp(\max_{1\leq j\leq k}\langle  h(x),y_j\rangle)} \leq \frac12
\]
so
\[
L(h(x), x) \geq -\log(1/2) = \log(2).
\]
Thus 
\begin{align*}
\eps &\geq \int_{\R^d}L(h(x),y_x)\,\P(\d x)\\
	&\geq \int_{\bigcup_{i=1}^k \{x\in \overline C_i : \langle h(x), y_i\rangle \leq \max_{j\neq i} \langle h(x), y_j\rangle\}} L(h(x),y_x)\,\P(\d x)\\
	&\geq \int_{\bigcup_{i=1}^k \{x\in \overline C_i : \langle h(x), y_i\rangle \leq \max_{j\neq i} \langle h(x), y_j\rangle\}} \log(2)\,\P(\d x)\\
	&=\log(2)\,\P\left(\bigcup_{i=1}^k \{x\in \overline C_i : \langle h(x), y_i\rangle \leq \max_{j\neq i} \langle h(x), y_j\rangle\}\right).
\end{align*}
\end{proof}

By a similar argument as before, the cross-entropy functional does not have minimizers. We compute the margin functionals on general functions and functions which are classified correctly.

\begin{lemma}[Margin functional] 
Let $\H$ be the unit ball in Barron space. Then
\begin{enumerate} 
\item \[
\lim_{\lambda\to \infty}\frac{\Risk(\lambda h)}\lambda = \int_{\R^d}\max_{1\leq i\leq k} \langle h(x), y_i\rangle - \langle h(x), y_x\rangle \,\P(\d x).
\]
The convergence is uniform over the hypothesis class $\H$.
\item
If $\langle h(x), y_x\rangle \geq \max_{y\in \Y(x)} \langle h(x), y\rangle + \eps$ $\P$-almost everywhere for some $\eps>0$, then 
\[\showlabel
\lim_{\lambda\to \infty} \frac{\log\big(\Risk(\lambda h)\big)}\lambda = -\min_{x\in \spt\P}\left[\langle h(x), y_x\rangle - \max_{y\in \Y(x)}\langle h(x), y\rangle\right].
\]
If $\P$ satisfies the uniform neighbourhood growth condition \eqref{eq uniform growth}, then for any $\eps>0$ the convergence is uniform over the class $\H^c_\eps = \{h\in \H : \langle h(x), y_x\rangle \geq \max_{y\in \Y(x)} \langle h(x), y\rangle + \eps\}$.
\end{enumerate}
\end{lemma}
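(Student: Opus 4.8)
The plan is to reduce both identities to the asymptotics of the log-sum-exp function, and in the second part to a Laplace principle. Writing the cross-entropy loss explicitly,
\[
L(\lambda h(x), y_x) = \log\Big(\sum_{i=1}^k \exp\big(\lambda\langle h(x), y_i\rangle\big)\Big) - \lambda\langle h(x), y_x\rangle,
\]
so everything is governed by how the log-sum-exp term concentrates on the maximal inner product as $\lambda\to\infty$.

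For part (1) I would use the elementary pointwise bound
\[
\lambda\max_{1\le i\le k}\langle h(x),y_i\rangle \le \log\Big(\sum_{i=1}^k \exp\big(\lambda\langle h(x),y_i\rangle\big)\Big) \le \lambda\max_{1\le i\le k}\langle h(x),y_i\rangle + \log k ,
\]
valid for every $x$. Dividing by $\lambda$, subtracting $\langle h(x),y_x\rangle$, and integrating against the probability measure $\P$ gives
\[
0 \le \frac{\Risk(\lambda h)}\lambda - \int_{\R^d}\Big(\max_{1\le i\le k}\langle h(x),y_i\rangle - \langle h(x),y_x\rangle\Big)\,\P(\d x) \le \frac{\log k}\lambda .
\]
Since the remainder $\log k/\lambda$ is independent of $h$, this simultaneously yields the limit and its uniformity over the unit ball $\H$; part (1) is therefore the easy half.

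Part (2) is the substantial one. Under the hypothesis the true label attains the maximal inner product, so writing the margin $m(x) = \langle h(x),y_x\rangle - \max_{y\in\Y(x)}\langle h(x),y\rangle \ge \eps$ and factoring out the dominant exponential gives $L(\lambda h(x),y_x) = \log\big(1 + S_\lambda(x)\big)$ with $S_\lambda(x)=\sum_{y\in\Y(x)} \exp\!\big(-\lambda[\langle h(x),y_x\rangle - \langle h(x),y\rangle]\big)$. Each exponent is at most $-\lambda m(x)$, so $e^{-\lambda m(x)}\le S_\lambda(x)\le (k-1)e^{-\lambda m(x)}$, and once $\lambda\ge \eps^{-1}\log(k-1)$ one has $S_\lambda\le 1$; using $s\log 2\le \log(1+s)\le s$ on $[0,1]$ I obtain the pointwise sandwich $(\log 2)\,e^{-\lambda m(x)} \le L(\lambda h(x),y_x) \le (k-1)\,e^{-\lambda m(x)}$. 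Integrating and applying $\tfrac1\lambda\log(\cdot)$ absorbs the constants into an $O(1/\lambda)$ error, reducing the claim to the Laplace-type limit
\[
\lim_{\lambda\to\infty}\frac1\lambda\log\int_{\R^d} e^{-\lambda m(x)}\,\P(\d x) = -\min_{x\in\spt\P} m(x),
\]
whose right-hand side is exactly the asserted value. The upper bound is immediate from $m\ge \min_{x\in\spt\P}m(x)$.

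The lower bound, and above all its uniformity, is the step I expect to be the main obstacle. I would pick $\bar x\in\spt\P$ attaining the minimum (say $\bar x\in\overline C_{i_0}$) and restrict the integral to a ball $B_r(\bar x)$. Two points need care. First, $m$ is only piecewise defined through $y_x$, but the margin hypothesis forces positive separation of the $\overline C_i$: the same Cauchy–Schwarz computation behind Lemma \ref{lemma finite distance} gives $\dist(\overline C_i,\overline C_j)\ge 2\eps/(L\,|y_i-y_j|)$, where $L$ is the uniform Lipschitz bound on $\H$. Choosing $r$ below this separation forces $B_r(\bar x)\cap\spt\P$ into the single class $\overline C_{i_0}$, where $m$ coincides with the genuinely Lipschitz function $x\mapsto\langle h(x),y_{i_0}\rangle-\max_{j\neq i_0}\langle h(x),y_j\rangle$, so $m(x)\le \min_{x'\in\spt\P}m(x') + 2(\max_j|y_j|)\,r$ on the ball. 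Second, to control the mass I would invoke the uniform neighbourhood growth condition \eqref{eq uniform growth}, giving $\P(B_r(\bar x))\ge \rho(r)$, whence
\[
\frac1\lambda\log\int_{\R^d} e^{-\lambda m(x)}\,\P(\d x) \ge -\min_{x'\in\spt\P}m(x') - 2(\max_j|y_j|)\,r + \frac{\log\rho(r)}\lambda .
\]
Here both the separation threshold and $\rho(r)$ depend only on $\eps$, the labels, and $L$, not on the individual $h\in\H^c_\eps$. Sending $\lambda\to\infty$ and then $r\to0$ gives the limit; keeping the error explicit as $2(\max_j|y_j|)r + |\log\rho(r)|/\lambda + O(1/\lambda)$ and choosing first $r$ small and then $\lambda$ large delivers uniform convergence over $\H^c_\eps$. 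The pointwise statement is the same argument with $\rho(r)$ replaced by the fixed positive number $\P(B_r(\bar x))$, which is nonzero precisely because $\bar x\in\spt\P$.
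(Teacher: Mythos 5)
Your proposal is correct and follows essentially the same route as the paper: part (1) via the elementary log-sum-exp sandwich with the $\lambda^{-1}\log k$ error giving uniformity, and part (2) by factoring out the dominant exponential and then running the Laplace-principle argument of Lemma \ref{convergence lemma} (restrict to $B_r(\bar x)$, use the uniform Lipschitz bound and the neighbourhood growth condition \eqref{eq uniform growth}). If anything, you are more careful than the paper, which simply defers to Lemma \ref{convergence lemma}; your observation that the margin hypothesis forces a positive separation of the classes, so that $B_r(\bar x)\cap\spt\,\P$ lies in a single class and the margin is genuinely Lipschitz there, fills in a detail the paper leaves implicit.
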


\begin{proof}
{\bf First claim.}
Note that
\[
\frac{\exp(\lambda\,\langle h(x), y_x\rangle)}{k\,\max_{1\leq j\leq k} \exp(\lambda\,\langle h(x), y_j\rangle)} \leq \frac{\exp(\lambda\,\langle h(x), y\rangle)}{\sum_{j=1}^k\exp(\lambda\,\langle h(x), y_j\rangle)} \leq \frac{\exp(\lambda\,\langle h(x), y\rangle)}{\max_{1\leq j\leq k}\exp(\lambda\,\langle h(x), y_j\rangle)}
\]
so 
\[
\lambda\left[\langle h(x), y_x\rangle - \max_{1\leq j\leq k} \langle h(x), y_j\rangle\right] - \log(k) \leq \log\left(\frac{\exp(\lambda\,\langle h(x), y\rangle)}{\sum_{j=1}^k\exp(\lambda\,\langle h(x), y_j\rangle)}\right) \leq\lambda\left[\langle h(x), y_x\rangle - \max_{1\leq j\leq k} \langle h(x), y_j\rangle\right].
\]
We compute
\begin{align*}
\lim_{\lambda\to\infty} \frac{\Risk(\lambda h)}\lambda &= -\lim_{\lambda\to\infty} \frac1\lambda\int_{\R^d}\log\left( \frac{\exp(\lambda\,\langle h(x),y_x\rangle)}{\sum_{i=1}^k\exp(\lambda\,\langle h(x), y_i\rangle)}\right)\,\P(\d x)\\
	&= \int_{\R^d} \max_{1\leq i\leq k}\langle h(x), y_i\rangle - \langle h(x),y_x\rangle \,\P(\d x)
\end{align*}
The integrand is non-negative, so the functional is minimized if and only if everything is classified correctly.

{\bf Second claim.}
Assume that $\langle h(x), y_x\rangle \geq \max_{y\in \Y(x)} \langle h(x), y\rangle+\eps$ on $\spt\,\P$ for some $\eps>0$. Then
\begin{align*}
-\log\left( \frac{\exp(\lambda\,\langle h(x),y_x\rangle)}{\sum_{i=1}^k\exp(\lambda\,\langle h(x), y_i\rangle)}\right) &= -\log\left(\frac{1}{1+ \sum_{y\in \Y(x)}\exp\big(\lambda\,\big[\langle h(x), y\rangle-\langle h(x), y_x\rangle\big]\big)}\right)\\
	&= \sum_{y\in \Y(x)}\exp\big(\lambda\,\big[\langle h(x), y\rangle-\langle h(x), y_x\rangle\big]\big) + O(\exp(-2\lambda\eps))
\end{align*}
by Taylor expansion. The proof now follows that of Lemma \ref{lemma exponential tail margin}.
\end{proof}

If $\|h\|_\B\leq 1$ and $\lambda\gg 1$, we find that
\[
\Risk(\lambda h) \approx \begin{cases} \lambda\int_{\R^d} \max_{1\leq i\leq k}\langle h(x), y_i\rangle - \langle h(x),y_x\rangle \,\P(\d x) &\text{in general}\\
	e^{-\lambda}\,\exp\left(-\min_{x\in \spt\P}\left[\langle h(x), y_x\rangle - \max_{y\in \Y(x)}\langle h(x), y\rangle\right]\right) &\text{if everything is classified correctly.}
	\end{cases}
\]
Primarily, the functional strives for correct classification with a (very weak) drive towards maximum margin within correct classification. We briefly note that the function
\[\showlabel\label{eq multilabel loss}
L:\R^{k}\times\R^d\to (0,\infty), \qquad L(z,x) = -\log\left(\frac{\exp(\langle y_x, z\rangle)}{\sum_{i=1}^k\exp(\langle y_i, z\rangle)}\right) = \langle y_x, z\rangle - \log\left(\sum_{i=1}^k \exp(\langle y_i, z\rangle)\right)
\]
is Lipschitz-continuous in $z$ since 
\begin{align}
\nabla_zL(z,x) &= y_x - \sum_{i=1}^k \frac{\exp(\langle y_i,z\rangle)}{\sum_{j=1}^k\exp(\langle y_j,z\rangle)}y_i
\end{align}
is uniformly bounded in the $\ell^2$-norm by $\max_{1\leq i \leq k}|y_i|$. To use the direct approximation theorem, only the continuity in the $z$-variables is needed.
The proofs for the following results follow as in Lemma \ref{lemma low risk Lipschitz} and Theorem \ref{theorem a priori logistic}.

\begin{lemma}[Functions of low risk]
\begin{itemize}
\item Assume that $h^*\in \B$. Then for every $m\in \N$ there exists a two-layer network $h_m$ with $m$ neurons such that 
\[
\|h_m\|_\B\leq \|h^*\|_\B\qquad\text{and}\qquad
\Risk(h_m) \leq \Risk(h^*) +  \frac{ \|h^*\|_\B\,\max\{1,R\}\,\max_{1\leq i \leq k}|y_i|}{\sqrt{m}}.
\]
\item
In particular, if $(\P, C_1,\dots, C_k)$ has complexity $\leq Q$ in Barron space, then for any $\lambda>0$ there exists a two-layer network $h_m$ with $m$ neurons such that
\[
\|h_m\|_{\B}\leq \lambda Q, \qquad \Risk(h_m) \leq \exp(-\lambda) + \frac{\lambda Q\,\max\{1,R\}\,\max_{1\leq i \leq k}|y_i|}{\sqrt{m}}
\]
\item Specifying $\lambda = -\log \left(\frac{ Q\,\max\{1,R\}\,\max_{1\leq i \leq k}|y_i|}{\sqrt{m}}\right)$, we find that there exists $h_m$ such that
\begin{align*}
\|h_m\|_{\B} &\leq \left|\log \left(\frac{ Q\,\max\{1,R\}\,\max_{1\leq i \leq k}|y_i|}{\sqrt{m}}\right)\right| Q,\\
\Risk(h_m) &\leq \left[1+ \left|\log \left(\frac{ Q\,\max\{1,R\}\,\max_{1\leq i \leq k}|y_i|}{\sqrt{m}}\right)\right|\right]\,\frac{ Q\,\max\{1,R\}\,\max_{1\leq i \leq k}|y_i|}{\sqrt{m}}.\showlabel
\end{align*}
\end{itemize}
\end{lemma}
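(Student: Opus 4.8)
The plan is to follow the proofs of Lemma \ref{lemma low risk Lipschitz} and Theorem \ref{theorem a priori logistic} almost verbatim, replacing the scalar direct approximation theorem by its $\B^k$-valued version (Theorem \ref{direct approximation theorem barron}) and using the Lipschitz bound $|\nabla_z L(z,x)| \leq \max_{1\leq i\leq k}|y_i|$ established just above. For the first claim, given $h^*\in\B$ I would first apply Theorem \ref{direct approximation theorem barron} to produce a two-layer network $h_m$ with $\|h_m\|_\B\leq\|h^*\|_\B$ and $\|h_m-h^*\|_{L^2(\P)}\leq\|h^*\|_\B\max\{1,R\}/\sqrt m$. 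Since $z\mapsto L(z,x)$ is Lipschitz with constant $\max_i|y_i|$ uniformly in $x$,
\[
\Risk(h_m)-\Risk(h^*) \leq \max_{1\leq i\leq k}|y_i|\int_{\R^d}|h_m(x)-h^*(x)|\,\P(\d x) \leq \max_{1\leq i\leq k}|y_i|\,\|h_m-h^*\|_{L^2(\P)},
\]
where the last step is Cauchy--Schwarz on the probability space $\P$; this gives the first bullet immediately.

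For the second bullet I would fix $h^*\in\B$ with $\|h^*\|_\B=Q$ realizing the solvability margin $\langle h^*(x),y_x\rangle\geq\max_{y\in\Y(x)}\langle h^*(x),y\rangle+1$ and scale it by $\lambda$. Writing the cross-entropy as a log-sum-exp, the margin condition yields
\[
L(\lambda h^*(x),y_x) = \log\Big(1+\sum_{y\in\Y(x)}\exp\big(\lambda[\langle h^*(x),y\rangle-\langle h^*(x),y_x\rangle]\big)\Big) \leq \log\big(1+(k-1)e^{-\lambda}\big),
\]
so that $\Risk(\lambda h^*)\leq\exp(-\lambda)$ (absorbing the harmless factor $k-1$ as in the statement) while $\|\lambda h^*\|_\B=\lambda Q$. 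Applying the first bullet to $\lambda h^*$ then produces $h_m$ with $\|h_m\|_\B\leq\lambda Q$ and $\Risk(h_m)\leq\exp(-\lambda)+\lambda Q\max\{1,R\}\max_i|y_i|/\sqrt m$, which is the second bullet.

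Finally, writing $A:=Q\max\{1,R\}\max_{1\leq i\leq k}|y_i|/\sqrt m$, the bound becomes $\Risk(h_m)\leq e^{-\lambda}+\lambda A$, and minimizing the right-hand side over $\lambda$ gives $e^{-\lambda}=A$, i.e.\ $\lambda=|\log A|$ when $A<1$; substituting yields $\Risk(h_m)\leq(1+|\log A|)A$ and $\|h_m\|_\B\leq|\log A|\,Q$, exactly the third bullet. The only step requiring genuine care---rather than a true obstacle---is the risk estimate for $\lambda h^*$: one must feed the multi-label margin condition into the log-sum-exp and keep the softmax tail under control, tracking the number-of-classes factor $k-1$ that the statement folds into the clean $\exp(-\lambda)$ decay. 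Everything else is a direct transcription of the binary arguments, with the scalar approximation theorem replaced by its $\B^k$-valued counterpart and the loss Lipschitz constant replaced by $\max_i|y_i|$.
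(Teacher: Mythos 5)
Your proposal is correct and follows exactly the route the paper intends: the paper gives no separate proof here, stating only that the argument ``follows as in'' Lemma \ref{lemma low risk Lipschitz} and Theorem \ref{theorem a priori logistic}, which is precisely your transcription using the vector-valued direct approximation theorem and the Lipschitz bound $|\nabla_z L|\leq\max_i|y_i|$. Your observation that the margin condition actually yields $\log\bigl(1+(k-1)e^{-\lambda}\bigr)\leq(k-1)e^{-\lambda}$ rather than $e^{-\lambda}$ is a fair catch --- the stated bound silently absorbs the factor $k-1$, which only shifts the optimal $\lambda$ by $\log(k-1)$ and does not affect the rate.
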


Before establishing a priori estimates for multi-label classification with cross-entropy loss, let us recall the following vector-valued version of the `contraction lemma' for Rademacher complexity.

\begin{lemma}\cite[Corollary 1]{maurer2016vector}
Let $S = \{x_1,\dots, x_n\}\subseteq \R^d$, $\H$ be a class of functions $h :\R^d\to\R^k$ and let $G:\R^k\to \R$ have Lipschitz constant $[G]$ with respect to the Euclidean norm. Then
\[
\E\left[ \sup_{h\in \H} \frac1n \sum_{i=1}^n \xi_i\, G(f (x_i))\right]  \leq \sqrt{2}[G]\E\left[\sup_{h\in \H} \frac1n \sum_{i=1}^n\sum_{j=1}^k\xi_{ij}\,f_j (x_i)\right]
\]
where $\xi_i, \xi_{ij}$ are iid Rademacher variables and $f_j (x_i)$ is the $j$-th component of $f(x_i)$.
\end{lemma}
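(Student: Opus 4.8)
The plan is to reduce the vector-valued statement to a single-coordinate contraction step that is then applied $n$ times by induction. The crux is an inequality which replaces one scalar term $\xi_i\,G(h(x_i))$ by a vectorial term $\sqrt2\,[G]\sum_{j=1}^k\xi_{ij}\,h_j(x_i)$ while only increasing the Rademacher average, and whose prefactor $\sqrt2\,[G]$ is \emph{the same} for every index $i$, so that it can ultimately be pulled out in front of the entire double sum. Concretely, I would first isolate the following single-coordinate lemma: for any fixed numbers $a_h\in\R$ (collecting all terms not involving the coordinate under consideration) and any vectors $v_h\in\R^k$ indexed by $h\in\H$ (whose components we write $v_{h,1},\dots,v_{h,k}$),
\[
\E_{\xi}\sup_{h\in\H}\big[a_h + \xi\,G(v_h)\big] \;\le\; \E_{\eta}\sup_{h\in\H}\Big[a_h + \sqrt2\,[G]\sum_{j=1}^k \eta_j\,v_{h,j}\Big],
\]
where $\xi$ is a single Rademacher variable and $\eta_1,\dots,\eta_k$ are fresh independent Rademacher variables.

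To prove this single-coordinate lemma I would proceed as follows. Averaging over the single sign $\xi$ turns the left-hand side into $\tfrac12\sup_{h,h'}\big[a_h+a_{h'}+G(v_h)-G(v_{h'})\big]$, and the Lipschitz bound gives $G(v_h)-G(v_{h'})\le[G]\,|v_h-v_{h'}|$ in the Euclidean norm. The decisive step is to bound the Euclidean norm by a Rademacher average via the sharp lower Khintchine (Szarek) inequality $|w|\le\sqrt2\,\E_{\eta}\big|\sum_j\eta_j w_j\big|$; this is exactly where the constant $\sqrt2$ enters, and its sharpness is why the final constant is not improvable along this route. After substituting $w=v_h-v_{h'}$, I would move the now-$\eta$-dependent quantity outside the supremum using $\sup\E\le\E\sup$, and then exploit the antisymmetry of $\sum_j\eta_j(v_{h,j}-v_{h',j})$ under the swap $h\leftrightarrow h'$ (together with the $\eta\to-\eta$ symmetry of the expectation) both to drop the absolute value and to split the joint supremum over $(h,h')$ into two identical single suprema, recovering the stated right-hand side.

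With the single-coordinate lemma in hand, I would run an induction on the number of peeled coordinates. Define
\[
M_\ell = \E\sup_{h\in\H}\Big[\sqrt2\,[G]\sum_{i=1}^{\ell}\sum_{j=1}^k \xi_{ij}\,h_j(x_i) + \sum_{i=\ell+1}^{n}\xi_i\,G(h(x_i))\Big],
\]
so that $M_0$ is $n$ times the left-hand side of the claim and $M_n$ is $n\sqrt2\,[G]$ times the right-hand side. To pass from $M_\ell$ to $M_{\ell+1}$ I would condition on all randomness except the variables attached to index $\ell+1$, set $a_h$ equal to the sum of the remaining ($\ell$ already-vectorized and $n-\ell-1$ still-scalar) terms, and apply the single-coordinate lemma with $v_h=h(x_{\ell+1})$, its output variables $\eta_j$ playing the role of $\xi_{\ell+1,j}$. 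The essential bookkeeping point is that the lemma produces precisely a block $\sqrt2\,[G]\sum_j\xi_{\ell+1,j}\,h_j(x_{\ell+1})$ matching the already-vectorized blocks, so that the prefactor $\sqrt2\,[G]$ is common to all terms and never compounds. Chaining $M_0\le M_1\le\cdots\le M_n$ and dividing by $n$ yields the assertion.

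I expect the main obstacle to lie entirely in the single-coordinate lemma, and to be twofold: first, invoking the Khintchine inequality with the \emph{optimal} constant $1/\sqrt2$ rather than a cruder bound, since this is what produces the clean factor $\sqrt2$; and second, the careful symmetrization that removes the absolute value and decouples the joint supremum over $(h,h')$ back into a single supremum. The surrounding induction and the $\sup\E\le\E\sup$ exchanges are routine once this step is in place.
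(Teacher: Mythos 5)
The paper does not prove this lemma at all --- it is quoted verbatim from Maurer \cite[Corollary 1]{maurer2016vector} --- so there is no internal proof to compare against. Your argument is a correct and essentially faithful reconstruction of Maurer's original proof: the averaging over the single sign to get a symmetrized supremum over pairs $(h,h')$, the Lipschitz bound, the lower Khintchine inequality with Szarek's optimal constant $1/\sqrt{2}$, the $\sup\E\le\E\sup$ exchange, the antisymmetry trick to drop the absolute value and decouple the pair, and the coordinate-by-coordinate induction are exactly the steps of the cited source.
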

 
In particular
\begin{align*}
\E\left[ \sup_{h\in \H} \frac1n \sum_{i=1}^n \xi_i\, G(f (x_i))\right]  &\leq \sqrt{2}[G]\E\left[\sup_{h\in \H} \frac1n \sum_{i=1}^n\sum_{j=1}^k \xi_{ij}\,f_j (x_i)\right]\\
	&\leq \sqrt{2}[G]\E\left[\sum_{j=1}^k\sup_{h\in \H} \frac1n \sum_{i=1}^n\xi_{ij}\,f_k (x_i)\right]\\
	&= \sqrt{2}\,[G]\,\sum_{j=1}^k\E\left[\sup_{h\in \H} \frac1n \sum_{i=1}^n\xi_{ij}\,f_k (x_i)\right]\\
	&= \sqrt{2}\,[G]\,k \,\Rad(\H; S).
\end{align*}

\begin{theorem}[A priori estimates]\label{theorem a priori cross-entropy lipschitz}
Consider the regularized empirical risk functional
\[
\widehat \Risk_{n,\lambda} (a, w, b) = \frac1n\sum_{i=1}^n L\big(- y_{x_i}\,f_{(a,w,b)}(x_i)\big) + \frac{\lambda}m\,\sum_{i=1}^m|a_i|\,\big[|w_i| + |b_i|\big].
\]
where $\lambda = \frac{\max\{1,R\}}{\sqrt{m}}$.
For any $\delta \in (0, 1)$, with probability at least $1-\delta$ over the choice of iid data points $x_i$ sampled from $\P$, the minimizer $(\hat a, \hat w, \hat b)$ satisfies
\begin{align*}
\Risk (\hat h_m) &\leq  2Q\,\max\{1,R\}\,\max_{1\leq i\leq k}|y_i|\, \left(1 + \left|\log\left(\frac{2Q\,\max\{1,R\}\,\max_{1\leq i\leq k}|y_i|}{\sqrt m}\right)\right|\right)\cdot\left(\frac1{\sqrt m} + \sqrt{\frac{2\,\log(2/\delta)}n}\right)\\\showlabel
	&\hspace{7cm} + 4\,k\max_{1\leq i\leq k}|y_k|\,\max\{1,R\} \sqrt{\frac{2\,\log(2d+2)}n}.
\end{align*}
\end{theorem}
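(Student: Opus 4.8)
The plan is to follow the proof of Theorem~\ref{theorem a priori logistic} almost verbatim, replacing the scalar log-loss by the cross-entropy loss $L(z,x)$ of \eqref{eq multilabel loss} and the scalar contraction lemma by its vector-valued counterpart. The penalty term is again exactly the Barron norm $\|h_m\|_\B$ written in the homogeneous parametrization, so I would work on the level of function spaces rather than parameters throughout. The one structural input that makes this possible is that $L(\cdot,x)$ is globally Lipschitz in $z$ with constant $\max_{1\le i\le k}|y_i|$, as recorded in the gradient computation preceding the statement; this is what lets the direct approximation machinery be applied to the composed loss even though, as usual for exponential tails, no minimizer of $\Risk$ exists.

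First I would produce a cheap competitor on the empirical measure. Since the classification complexity of $(\P_n,C_1,\dots,C_k)$ is at most that of $(\P,C_1,\dots,C_k)\le Q$ with probability one over the sample, the multi-label ``Functions of low risk'' lemma, applied to $\P_n$, furnishes for every $\mu>0$ a network $h_m$ with $m$ neurons satisfying $\|h_m\|_\B\le\mu Q$ and $\widehat\Risk_n(h_m)\le \exp(-\mu)+\frac{\mu Q\,\max\{1,R\}\,\max_i|y_i|}{\sqrt m}$. I would then optimize in $\mu$, choosing $\mu=-\log\!\big(\tfrac{Q\,\max\{1,R\}\,\max_i|y_i|}{\sqrt m}\big)$ (up to the harmless factor of $2$ coming from the regularizer) to balance the exponential term against the $1/\sqrt m$ term, which bounds the regularized empirical risk of the actual minimizer by $\widehat\Risk_{n,\lambda}(\hat h_m)\lesssim 2Q\,\max\{1,R\}\,\max_i|y_i|\,\big(1+|\log(\cdots)|\big)/\sqrt m$.

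Because the loss is non-negative and the penalty equals $\lambda\|\hat h_m\|_\B$ with $\lambda=\max\{1,R\}/\sqrt m$, I would next read off the a posteriori norm bound $\|\hat h_m\|_\B\le \widehat\Risk_{n,\lambda}(\hat h_m)/\lambda\lesssim Q\big(1+|\log(\cdots)|\big)$, exactly as in Theorem~\ref{theorem a priori logistic}. This is the step that pins down the class in which the minimizer lives and hence supplies the uniform loss bound $\bar c=\|\hat h_m\|_\B\max\{1,R\}$ required for the generalization estimate. The measure-of-misclassification consequence then follows formally from Lemma~\ref{lemma multi-label proxy loss} once the risk bound is in place.

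Finally I would close the generalization gap with Lemma~\ref{lemma rademacher bound}: with probability at least $1-\delta$, $\Risk(\hat h_m)\le \widehat\Risk_n(\hat h_m)+2\,\E_{S'}\Rad(\,\cdot\,;S')+\bar c\sqrt{2\log(2/\delta)/n}$. The only genuinely new point, and the step I expect to be the main obstacle, is estimating the Rademacher complexity of the composed cross-entropy loss class: the scalar contraction lemma no longer applies, since $L$ sees all $k$ output coordinates of $h$ simultaneously. Here I would invoke the vector-valued contraction lemma of Maurer, which for a $[G]$-Lipschitz scalar map on $\R^k$ gives $\E\big[\sup_h \frac1n\sum_i\xi_i\,G(h(x_i))\big]\le \sqrt2\,[G]\,k\,\Rad(\H;S)$; with $[G]=\max_i|y_i|$ and the Barron bound $\Rad(\H;S)\le 2\max\{1,R\}\sqrt{\log(2d+2)/n}$ for the unit ball $\H$, this produces exactly the term $4k\max_i|y_i|\max\{1,R\}\sqrt{2\log(2d+2)/n}$. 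Substituting the empirical-risk bound and the norm bound from the previous steps into the remaining two terms then yields the claimed estimate; the bookkeeping is routine once the vector-valued contraction has supplied the factor $k\max_i|y_i|$.
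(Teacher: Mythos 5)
Your proposal is correct and follows exactly the route the paper intends: the paper itself only sketches this proof by reference ("the proofs \dots follow as in Lemma \ref{lemma low risk Lipschitz} and Theorem \ref{theorem a priori logistic}"), after setting up precisely the two new ingredients you identify — the Lipschitz bound $\max_{1\le i\le k}|y_i|$ on the cross-entropy loss and Maurer's vector-valued contraction lemma supplying the factor $\sqrt{2}\,[G]\,k$. Your bookkeeping, including the choice of $\mu$, the a posteriori norm bound via $\widehat\Risk_{n,\lambda}(\hat h_m)/\lambda$, and the constant $4k\max_i|y_i|\max\{1,R\}\sqrt{2\log(2d+2)/n}$, reproduces the stated estimate.
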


{Due to Lemma \ref{lemma multi-label proxy loss}, this implies a priori estimates also for the measure of mis-classified objects.}

As for binary classification, the rate can be improved at the expense of larger constants. The proofs for the following results follow as in Lemma \ref{lemma low risk smooth} and Theorem \ref{theorem a priori logistic two}.
 We further observe that $L$ in \eqref{eq multilabel loss} is smooth with Hessian
\[
D^2_z L(z,x) = - \sum_{i=1}^k \frac{\exp(\langle y_i,z\rangle)}{\sum_{j=1}^k \exp(\langle y_j,z\rangle)}y_i\otimes y_i + \sum_{i, j=1}^k\frac{\exp(\langle y_i,z\rangle)\,\exp(\langle y_j, z\rangle)}{\left(\sum_{l=1}^k\exp(\langle y_l,z\rangle)\right)^2} y_i\otimes y_j.
\]
In particular, the largest eigenvalue of $D^2_z$ is bounded above by $2\,\max_{1\leq i\leq k} |y_i|^2$. Thus the following hold.

\begin{lemma}[Functions of low risk: Rate improvement]
\begin{itemize}
\item
If $(\P, C_1,\dots, C_k)$ has complexity $\leq Q$ in Barron space, then for any $\lambda>0$ there exists a two-layer network $h_m$ with $m$ neurons such that
\[
\|h_m\|_{\B}\leq \lambda Q, \qquad \Risk(h_m) \leq \exp(-\lambda) + 2\,\max_{1\leq i\leq k} |y_i|^2 \left(\frac{\lambda Q\,\max\{1,R\}}{\sqrt{m}}\right)^2
\]
\item Specifying $\lambda = -\log \left(\frac{ Q^2\,\max\{1,R\}^2\,\max_{1\leq i \leq k}|y_i|}{m}\right)$, we find that there exists $h_m$ such that
\begin{align*}
\|h_m\|_{\B} &\leq \left|\log \left(\frac{ Q^2\,\max\{1,R\}^2\,\max_{1\leq i \leq k}|y_i|^2}{m}\right)\right| Q,\\
\Risk(h_m) &\leq \left[2+ \left|\log \left(\frac{ Q^2\,\max\{1,R\}^2\,\max_{1\leq i \leq k}|y_i|^2}{m}\right)\right|\right]\,\frac{ Q^2\,\max\{1,R\}^2\,\max_{1\leq i \leq k}|y_i|^2}{{m}}.\showlabel
\end{align*}
\end{itemize}
\end{lemma}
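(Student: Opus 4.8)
The plan is to mirror the proof of Lemma \ref{lemma low risk smooth}, replacing the scalar second-order Taylor expansion of the logistic loss by its vector-valued analogue for the cross-entropy loss $L(z,x)$, and to exploit the Hessian bound recorded just above the statement, namely that $D^2_z L(z,x)$ has largest eigenvalue at most $2\max_{1\leq i\leq k}|y_i|^2$. First I would invoke the complexity assumption to select $h^*\in\B$ with $\|h^*\|_\B\leq Q$ and unit margin $\langle h^*(x),y_x\rangle \geq \max_{y\in\Y(x)}\langle h^*(x),y\rangle + 1$ for $\P$-a.e.\ $x$. Scaling to $\lambda h^*$ multiplies the margin by $\lambda$, and I would apply the vector-valued $L^2$ direct approximation theorem (Theorem \ref{direct approximation theorem barron}) to $\lambda h^*$ to obtain a two-layer network $h_m$ with $m$ neurons such that $\|h_m\|_\B\leq \|\lambda h^*\|_\B \leq \lambda Q$ and $\|h_m-\lambda h^*\|_{L^2(\P)}\leq \frac{\lambda Q\,\max\{1,R\}}{\sqrt m}$.

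Next I would Taylor expand $L(\cdot,x)$ in its first argument around $\lambda h^*(x)$ and integrate against $\P$, writing $\Risk(h_m)-\Risk(\lambda h^*)$ as a first-order term $\int_{\R^d}\langle \n_z L(\lambda h^*(x),x),\,h_m(x)-\lambda h^*(x)\rangle\,\P(\d x)$ plus a quadratic remainder controlled by the Hessian bound as $\tfrac12\cdot 2\max_i|y_i|^2\,\|h_m-\lambda h^*\|_{L^2(\P)}^2=\max_i|y_i|^2\big(\tfrac{\lambda Q\max\{1,R\}}{\sqrt m}\big)^2$. For the zeroth-order term I would bound $\Risk(\lambda h^*)\leq \exp(-\lambda)$ via the exponential-tail estimate already used in the margin functional computation: on the correctly classified set $L(\lambda h^*(x),x)=\log\big(1+\sum_{y\in\Y(x)}\exp(\lambda[\langle h^*(x),y\rangle-\langle h^*(x),y_x\rangle])\big)\leq (k-1)\exp(-\lambda)$, the factor $k-1$ being absorbed as elsewhere in the paper.

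The step I expect to be the main obstacle is the first-order term, since $\lambda h^*$ is not a critical point of $\Risk$. Two mechanisms control it, exactly as in Lemma \ref{lemma low risk smooth}. On the one hand, the gradient $\n_z L(\lambda h^*(x),x)=y_x-\sum_i p_i(\lambda h^*(x))\,y_i$ vanishes exponentially in $\lambda$ on the correctly classified set, because the softmax weights $p_i$ concentrate on the true label at margin $\lambda$; this makes the term $O\big(\exp(-\lambda)\,\|h_m-\lambda h^*\|_{L^1(\P)}\big)$, which at the scales relevant after optimization is of strictly smaller order than the quadratic remainder. Alternatively, one absorbs it by perturbing $h_m$ by a constant as in Lemma \ref{lemma low risk smooth}, at the cost of altering the norm bound by a factor $1+\delta_\lambda$ with $\delta_\lambda=O(\exp(-\lambda))$; with this caveat the two contributions combine into the stated coefficient $2\max_i|y_i|^2$, giving the first bullet.

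Finally, the second bullet follows by the substitution $\lambda=-\log\big(\frac{Q^2\max\{1,R\}^2\max_i|y_i|}{m}\big)$, which balances $\exp(-\lambda)$ against the $O(1/m)$ quadratic term. Feeding this choice of $\lambda$ into $\|h_m\|_\B\leq\lambda Q$ and into the risk bound, and collecting the resulting logarithmic factor, is the same routine computation as in Theorem \ref{theorem a priori logistic two}, so I would not grind through it in detail.
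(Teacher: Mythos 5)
Your proposal follows precisely the route the paper indicates for this lemma (which it states without a written-out proof, deferring to Lemma \ref{lemma low risk smooth}, Theorem \ref{theorem a priori logistic two}, and the Hessian computation for \eqref{eq multilabel loss}): apply the vector-valued direct approximation theorem to $\lambda h^*$, Taylor-expand the cross-entropy loss to second order using the eigenvalue bound $2\max_{1\leq i\leq k}|y_i|^2$, control the zeroth- and first-order terms by the exponential concentration of the softmax on the unit-margin competitor, and then substitute the stated value of $\lambda$. This matches the intended argument, including your correct observations that the factor $k-1$ in the tail estimate and the $1+\delta_\lambda$ perturbation of the norm bound are being absorbed into the stated constants exactly as in the binary case.
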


\begin{theorem}[A priori estimates: Rate improvement]\label{theorem a priori cross-entropy smooth}
Consider the regularized empirical risk functional
\[
\widehat \Risk_{n,\lambda} (a, w, b) = \frac1n\sum_{i=1}^n L\big(- y_{x_i}\,f_{(a,w,b)}(x_i)\big) + \left(\frac{\lambda}m\,\sum_{i=1}^m|a_i|\,\big[|w_i| + |b_i|\big]\right)^2.
\]
where $\lambda = \frac{\max\{1,R\}}{\sqrt{m}}$.
For any $\delta \in (0, 1)$, with probability at least $1-\delta$ over the choice of iid data points $x_i$ sampled from $\P$, the minimizer $(\hat a, \hat w, \hat b)$ satisfies
\begin{align*}
\Risk (\hat h_m) &\leq 2\left(1 + 2\,\left|\log\left(\frac{Q^2\,\max\{1,R\}^2\max_{1\leq i\leq k}|y_i|^2}{m}\right)\right|^2\right)\,\frac{Q^2\,\max\{1,R\}^2\max_{1\leq i\leq k}|y_i|^2}{m}\\
	&\hspace{1cm} + 2Q\,\max\{1,R\}\,\max_{1\leq i\leq k}|y_i| \left(1 + \left|\log\left(\frac{4Q^2\,\max\{1,R\}^2\max_{1\leq i\leq k}|y_i|^2}m\right)\right| \right)\sqrt{\frac{2\,\log(2/\delta)}n}\\
	&\hspace{1cm} + 4k\,\max\{1,R\} \max_{1\leq i\leq k}|y_i| \sqrt{\frac{2\,\log(2d+2)}n}.\showlabel
\end{align*}
\end{theorem}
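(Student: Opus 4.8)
The plan is to combine the rate-improved low-risk Lemma immediately preceding this theorem with the vector-valued Rademacher contraction estimate, following the template of Theorem \ref{theorem a priori logistic two} almost verbatim but keeping track of the factors $\max_i|y_i|$ and $k$ that enter in the multi-label setting. The proof splits into two steps: bounding the empirical risk of the minimizer, and transferring the bound to the population risk via a generalization estimate.

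\textbf{Step 1 (empirical risk and norm bound).} Take $h^*\in\B$ with $\|h^*\|_\B\leq Q$ and $\langle h^*(x),y_x\rangle\geq\max_{y\in\Y(x)}\langle h^*(x),y\rangle+1$ on $\spt\,\P$. Applying the rate-improved low-risk Lemma with the optimized value $\lambda = -\log\big(\frac{Q^2\max\{1,R\}^2\max_i|y_i|^2}{m}\big)$, I obtain a two-layer network $h_m$ satisfying $\widehat\Risk_n(h_m)\leq\big(2+|\log(\cdots)|\big)\frac{Q^2\max\{1,R\}^2\max_i|y_i|^2}{m}$. Since the present functional regularizes with the \emph{squared} path-norm and $\lambda^2=\max\{1,R\}^2/m$, feeding $h_m$ into $\widehat\Risk_{n,\lambda}^*$ controls the regularized empirical risk at the minimizer, which in turn yields the a priori norm bound $\|\hat h_m\|_\B^2\leq\widehat\Risk_{n,\lambda}(\hat h_m)/\lambda^2$, producing the logarithmic factor $\big(1+2|\log(\cdots)|^2\big)$ multiplying $Q^2$. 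This is exactly parallel to Theorem \ref{theorem a priori logistic two}; the only bookkeeping difference is the extra $\max_i|y_i|^2$ in the argument of the logarithm and in the leading constant.

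\textbf{Step 2 (generalization).} Here I use the vector-valued contraction computation displayed just before Theorem \ref{theorem a priori cross-entropy lipschitz}, which gives $\Rad(\{L\circ h:h\in\H_Q\};S)\leq\sqrt2\,[G]\,k\,\Rad(\H_Q;S)$. The Lipschitz constant of $z\mapsto L(z,x)$ with respect to the Euclidean norm is $\max_i|y_i|$, and by the Rademacher bound of Lemma \ref{lemma rademacher barron} (in its rescaled form) $\Rad(\H_Q;S)\leq 2Q\max\{1,R\}\sqrt{\log(2d+2)/n}$. Combining these with Lemma \ref{lemma rademacher bound} and the $C^0$-bound $\|\hat h_m\|_{C^0}\leq\|\hat h_m\|_\B\max\{1,R\}$ transfers the empirical bound to the population risk, adding the term $4k\max_i|y_i|\max\{1,R\}\sqrt{2\log(2d+2)/n}$ together with the $\sqrt{2\log(2/\delta)/n}$ term weighted by the norm bound from Step 1.

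\textbf{The main obstacle} I anticipate is not any single inequality but the consistent propagation of the constants $\max_i|y_i|$ and $k$ through the logarithmic factors: the optimization of $\mu$ (equivalently $\lambda$) in Step 1 must be performed so that the surviving factor matches the stated $\big(1+2|\log(Q^2\max\{1,R\}^2\max_i|y_i|^2/m)|^2\big)$ and the norm bound argument contains $4Q^2\max\{1,R\}^2\max_i|y_i|^2/m$, and one must verify that the same $\mu$ remains in the regime where the Hessian bound $\lambda_{\max}(D_z^2L)\leq 2\max_i|y_i|^2$ and the estimate $\delta_\mu\leq\exp(-\mu)\leq 1$ are applicable, exactly as in the binary case. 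The underlying analytic content is identical to Theorem \ref{theorem a priori logistic two}; the multi-label generalization is genuinely routine once the contraction lemma supplies the factor $\sqrt2\,k\max_i|y_i|$ in the Rademacher term.
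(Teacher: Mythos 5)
Your proposal follows exactly the route the paper intends: the paper gives no separate proof for this theorem, stating only that it ``follows as in Lemma \ref{lemma low risk smooth} and Theorem \ref{theorem a priori logistic two}'' together with the Hessian bound $2\max_i|y_i|^2$ and the vector-valued contraction lemma, and your two steps (optimized competitor plus norm bound, then Rademacher generalization with the factor $\sqrt{2}\,k\max_i|y_i|$) are precisely that argument written out. The only caveat is the bookkeeping of the logarithmic constants, where the paper's own intermediate lemma and final statement are not perfectly consistent with each other; your version is as defensible as either.
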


{Using \ref{lemma multi-label proxy loss}, we can also show that the measure of mis-classified objects obeys the same a priori bound (up to a factor $\log(2)^{-1}$).}

\section{Problems with infinite complexity}\label{section infinite complexity}

We have shown above that classification problems have finite complexity if and only if the classes have positive distance. This includes many, but not all classification problems of practical importance. To keep things simple, we only discuss binary classification using two-layer neural networks. When classes are allowed to meet, two things determine how difficult mostly correct classification is:

\begin{enumerate}
\item The geometry of the boundary between classes. If, for example, $\overline C_+\cap \overline C_-$ cannot be expressed locally as the level set of a Barron function, then {\em every} two-layer neural network classifier must necessarily misclassify some data points, even in the infinite width limit.

\item The geometry of the data distribution. If most data points are well away from the class boundary, it is easier to classify a lot of data samples correctly than if a lot of data is concentrated by the class boundary. 
\end{enumerate}

To distinguish the two situations, we introduce the following concept.

\begin{definition}
We say that a binary classification problem is weakly solvable in a hypothesis class $\H$ if there exists $h\in \H$ such that $y_x\cdot h(x)>0$ everywhere.
\end{definition}

Note that some sample calculations for margin functionals in Section \ref{section margin} are only valid for strongly solvable classification problems and cannot be salvaged at any expense if a problem fails to be weakly solvable.
Weakly solvable classification problems and concentration at the boundary can be studied easily even in one dimension.

\begin{example}\label{example 1d touching}
Let $\P_\alpha = \frac{\alpha+1}2\cdot |x|^\alpha \cdot \L^1_{(-1,1)}$ be the data distribution with density $|x|^\alpha$ on $(-1,1)$ for $\alpha>-1$, normalized to a probability measure. Assume the two classes are $C_- = (-1,0)$ and $C_+ = (0,1)$. For any of the risk functionals discussed above, the best classifier in the ball of radius $Q$ Barron space is $f_Q(x) = Q\frac x2 = Q \frac{\sigma(x) - \sigma(-x)}2$. We compute
\begin{align*}
\Risk(f_Q) &= \frac{\alpha+1}2\int_{-1}^1L\left(-\frac {Q|x|}2\right)\,|x|^\alpha\dx\\
	&= (\alpha+1)\,\left(\frac Q2\right)^{-(1+\alpha)}\int_{0}^1 L\left(-\frac {Qx}2\right)\,\left(\frac{Qx}2\right)^\alpha\,\frac{Q}2\dx\\
	&= (\alpha+1)\,\left(\frac Q2\right)^{-(1+\alpha)}\int_{0}^{Q/2} L(-z)\,|z|^\alpha \dz\\
	&\sim \left[(\alpha+1)\,\int_{0}^{\infty} L(-z)\,|z|^\alpha \dz\right]\left(\frac 2Q\right)^{\alpha +1}
\end{align*}
for any loss function of the form discussed above. Thus all data points are classified correctly and the risk of any standard functional decays as $Q^{-(\alpha+1)}$ as the norm of the classifier increases -- more slowly the more the closer $\alpha$ is to $-1$, i.e.\ the more the data distributions concentrates at the decision boundary.
\end{example}

Since the risk of classification problems scales the same way in the norm of the classifier independently of which loss function is used, we may consider the mathematically most convenient setting of one-sided $L^2$-approximation since we more easily obtain a $1/m$ error rate in the estimates. 

\begin{definition}
We define the {\em risk decay function}
\[
\rho:(0,\infty)\to [0,\infty), \qquad \rho(Q) = \inf_{\|h\|_\B\leq Q} \int_{\R^d} \max\{0, 1- y_xh(x)\}^2\,\P(\d x).
\]
\end{definition}

By the universal approximation theorem, any continuous function on a compact set can be approximated arbitrarily well by Barron functions \cite{cybenko1989approximation}. For any probability measure $\P$ on the Borel $\sigma$-algebra on $\R^d$, continuous functions lie dense in $L^2(\P)$ \cite[Theorem 2.11]{fonseca2007modern}, so the function $1_{C_+}- 1_{C_-}$ can be approximated arbitrarily well in $L^2(\P)$ by Barron functions. In particular,
\[
\lim_{Q\to\infty} \rho(Q) = \inf_{Q>0}\rho(Q) = \inf_{h\in \B} \int_{\R^d} \max\{0, 1- y_xh(x)\}^2\,\P(\d x) = 0.
\]
The important quantity is the rate of decay of $\rho$. Note that $\rho$ is monotone decreasing and $\rho(Q) = 0$ for some $Q>0$ if and only if the classification problem is (strongly) solvable, i.e.\ if and only if the classes are well separated. Using this formalism, a more general version of Corollary \ref{corollary mostly correct} can be proved. Similar results are obtained for $L^2$-regression in Appendix \ref{appendix regression estimates}.

\begin{lemma}[Mostly correct classification]
Let  $m\in \N$ and $(\P, C_+, C_-)$ a binary classification problem with risk decay function $\rho$. Then there exists a two-layer neural network $h_m$ with $m$ neurons such that
\begin{align}
\P\left(\left\{x\in \R^d : y_x\cdot h_m(x) < 0\right\}\right)& \leq 2\inf_{Q>0} \left[\rho(Q) + \frac{Q^2\,\max\{1,R\}^2}m\right].
\end{align}
\end{lemma}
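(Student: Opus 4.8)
The plan is to bound the misclassification probability by the $L^2$ squared-hinge risk of a single well-chosen network, and to construct that network from a near-optimal Barron competitor for $\rho$ by invoking the $L^2$ direct approximation theorem.

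First I would record the elementary reduction from misclassification to hinge loss. Writing $\phi(z)=\max\{0,1-z\}$ and $J(h):=\int_{\R^d}\phi(y_x h(x))^2\,\P(\d x)$, the point is that $y_x h(x)<0$ forces $\phi(y_x h(x))=1-y_x h(x)>1$, hence $\phi(y_x h(x))^2>1$. Markov's inequality then yields, for every measurable $h$,
\[
\P\big(\{x:y_x h(x)<0\}\big)\leq \P\big(\{x:\phi(y_x h(x))^2>1\}\big)\leq \int_{\R^d}\phi(y_x h(x))^2\,\P(\d x)=J(h),
\]
so it suffices to produce a network $h_m$ with $m$ neurons and $J(h_m)\leq 2\big[\rho(Q)+Q^2\max\{1,R\}^2/m\big]$ for an appropriate $Q$.

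Next I would fix $Q>0$ and $\eps>0$ and, by definition of the risk decay function, pick $h^*\in\B$ with $\|h^*\|_\B\leq Q$ and $J(h^*)\leq\rho(Q)+\eps$. Applying Theorem \ref{direct approximation theorem barron} to $h^*$ produces a two-layer network $h_m$ with $m$ neurons, $\|h_m\|_\B\leq\|h^*\|_\B\leq Q$, and $\|h_m-h^*\|_{L^2(\P)}\leq Q\max\{1,R\}/\sqrt m$. Since $\phi$ is $1$-Lipschitz and $|y_x|=1$ for $\P$-almost every $x$, we have $|\phi(y_x h_m(x))-\phi(y_x h^*(x))|\leq|h_m(x)-h^*(x)|$ pointwise, so the triangle inequality in $L^2(\P)$ gives
\[
\sqrt{J(h_m)}\leq\sqrt{J(h^*)}+\|h_m-h^*\|_{L^2(\P)}\leq\sqrt{\rho(Q)+\eps}+\frac{Q\max\{1,R\}}{\sqrt m}.
\]
Squaring and bounding the cross term by $2ab\leq a^2+b^2$ produces $J(h_m)\leq 2(\rho(Q)+\eps)+2Q^2\max\{1,R\}^2/m$; this is precisely where the factor $2$ in the statement comes from.

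Combining the two displays shows that the constructed $h_m$ satisfies $\P(\{y_x h_m(x)<0\})\leq 2\big[\rho(Q)+Q^2\max\{1,R\}^2/m\big]+2\eps$, and letting $\eps\to0$ and minimizing over $Q$ gives the claim. The main obstacle will be that $h_m$ depends on $Q$ (and on $\eps$), so exhibiting a \emph{single} network attaining the infimum over $Q$ requires that the infimum is realized. I would argue this from the fact that $g(Q):=\rho(Q)+Q^2\max\{1,R\}^2/m$ is coercive, since its quadratic term forces $g(Q)\to\infty$ as $Q\to\infty$, and lower semicontinuous, the latter following from the compactness of closed Barron balls in $L^2(\P)$ together with the lower semicontinuity of $J$; one then selects $Q$ realizing $\inf_Q g$ together with an $h^*$ realizing $\rho(Q)$ in the construction. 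Failing such an attainment argument, the same estimate still holds up to an arbitrarily small additive error, which is all that the applications require.
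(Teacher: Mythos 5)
Your proposal is correct and follows essentially the same route as the paper's proof: bound the misclassification probability by the squared hinge risk, insert a (near-)optimal Barron competitor for $\rho(Q)$, apply the $L^2$ direct approximation theorem, and absorb the cross term at the cost of a factor of $2$. The only difference is that you treat the attainment of the two infima more carefully than the paper, which simply takes $h^Q$ realizing $\rho(Q)$ and states the bound with $\inf_{Q>0}$ without comment; your extra care is harmless and, as you note, at worst costs an arbitrarily small additive error.
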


\begin{proof}
Let $h^Q\in \B$ such that $\|h^Q\|_\B\leq Q$ and 
\[
\rho(Q) = \int_{\R^d} \max\{0, 1- y_xh^Q(x)\}^2\,\P(\d x).
\]
Choose $h_m$ like in the direct approximation theorem. Then
\begin{align*}
\P\left(\left\{x\in \R^d : y_x\cdot h_m(x) < 0\right\}\right)& \leq \int_{\R^d} \max\{0, 1- y_xh_m(x)\}^2\,\P(\d x)\\
	&\leq\int_{\R^d} \max\big\{0, \:1-y_xh^Q(x) + y_x(h^Q- h_m)(x)\big\}^2\,\P(\d x)\\
	&\leq 2\int_{\R^d} \max\{0, 1-y_xh^Q(x)\}^2 + |h^Q- h_m|^2(x)\,\P(\d x).
\end{align*}
\end{proof}

\begin{example}
Assume that $\rho(Q) = c\,Q^{-\gamma}$ for some $\gamma>0$. Then 
\begin{align*}
\bar Q \in \argmin_{Q>0} \left[\rho(Q) + \frac{Q^2\,\max\{1,R\}^2}m\right]&\quad\LRa\quad \rho'(Q) + \frac{2Q\,\max\{1,R\}^2}m = 0\\
	&\quad\LRa\quad -c\gamma\,Q^{-\gamma-2}+ \frac{2\,\max\{1,R\}^2}m =0\\
	&\quad\LRa \quad Q = \left(\frac{2\,\max\{1,R\}^2}{c\gamma\,m}\right)^{\frac1{\gamma+2}}
\end{align*}
and thus 
\begin{align*}
\inf_{Q>0} \left[\rho(Q) + \frac{Q^2}m\right] 
	&= c\left(\frac{2\,\max\{1,R\}^2}{c\gamma\,m}\right)^{\frac\gamma{\gamma+2}} + \left(\frac{2\,\max\{1,R\}^2}{c\gamma\,m}\right)^{-\frac2{\gamma+2}}\frac1m \\
	&= \left[\left(\frac{2\,\max\{1,R\}^2}\gamma\right)^\frac{\gamma}{\gamma+2} + \left(\frac\gamma{2\,\,\max\{1,R\}^2}\right)^\frac{2}{\gamma+2}\right] {c^{\frac2{\gamma+2}}}\,{m^{-\frac\gamma{\gamma+2}}}.
\end{align*}
The correct classification bound deteriorates as $\gamma\to 0$ and asymptotically recovers the bound $m^{-1}$ for strongly solvable problems in the limit $\gamma\to \infty$.
\end{example}

As seen in Example \ref{example 1d touching}, the constant $\gamma$ can be arbitrarily close to zero even in one dimension if the data distribution has large mass close to the decision boundary. Thus we do not expect to be able to prove specific lack of `curse of dimensionality' results since even weakly solvable one-dimensional problems can be very hard to solve. Stricter geometric assumptions need to be imposed to obtain more precise results, like the boundary behaviour of a probability density with respect to Lebesgue measure or the Hausdorff measure on a manifold, or $L^\infty$ bounds.

We present a priori estimates for regularized loss functionals in the setting of Theorem \ref{theorem a priori logistic}. If $\rho$ decays quickly, one can consider a regularized loss functional with quadratic penalty $\widehat\Risk_{n,\lambda}^*$ to obtain a faster rate in $m$ at the expense of larger constants. 

If $\rho$ decays slowly, the linear penalty in $\widehat\Risk_{n,\lambda}$ seems preferable since the rate in $m$ cannot be improved beyond a certain threshold, and the constant in front of the probabilistic term $\sqrt{{\log(2/\delta)}\,{n^{-1}}}$ is smaller.

\begin{lemma}[Functions of low loss]
Assume that $h^*\in \B$ and $L$ has Lipschitz constant $[L]$. 
For any $Q>0$ there exists a two-layer network $h_m$ with $m$ neurons such that
\[
\|h_m\|_{\B}\leq Q, \qquad \Risk(h_m) \leq \inf_{\|h\|_\B\leq Q} \Risk(h) + \frac{[L]\cdot Q\,\max\{1,R\}}{\sqrt{m}}
\]
\end{lemma}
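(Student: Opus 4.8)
The plan is to mirror the proof of Lemma \ref{lemma low risk Lipschitz}, the only new feature being that the fixed solver is replaced by a (near-)optimal competitor in the closed ball $\overline{B_Q}\subseteq\B$. First I would fix $\eps>0$ and choose $h^Q\in\B$ with $\|h^Q\|_\B\le Q$ and $\Risk(h^Q)\le \inf_{\|h\|_\B\le Q}\Risk(h)+\eps$, which exists by definition of the infimum. Applying the $L^2$-version of the Direct Approximation Theorem (Theorem \ref{direct approximation theorem barron}) to $h^Q$ then produces a two-layer network $h_m$ with $m$ neurons satisfying $\|h_m\|_\B\le\|h^Q\|_\B\le Q$ and $\|h_m-h^Q\|_{L^2(\P)}\le Q\,\max\{1,R\}/\sqrt m$; since $\P$ is a probability measure, the Cauchy--Schwarz inequality upgrades this to the same bound for $\|h_m-h^Q\|_{L^1(\P)}$.

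Next I would estimate the risk pointwise. Using $|y_x|\le 1$ together with the $[L]$-Lipschitz bound on $L$, one has $L(-y_xh_m(x))\le L(-y_xh^Q(x))+[L]\,|h_m(x)-h^Q(x)|$ for $\P$-almost every $x$. Integrating against $\P$ and inserting the $L^1$-estimate gives
\[
\Risk(h_m)\le \Risk(h^Q)+[L]\,\frac{Q\,\max\{1,R\}}{\sqrt m}\le \inf_{\|h\|_\B\le Q}\Risk(h)+\eps+[L]\,\frac{Q\,\max\{1,R\}}{\sqrt m}.
\]

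The one delicate point is discarding the residual $\eps$, since $h_m$ was built from an $h^Q$ depending on $\eps$. I expect this to be the main (and essentially only) obstacle, and I would resolve it by showing the infimum is attained: on the compact set $\spt\P\subseteq[-R,R]^d$, the functions with $\|h\|_\B\le Q$ are uniformly bounded and uniformly Lipschitz, hence precompact in $C^0(\spt\P)$ by Arzel\`a--Ascoli; the Barron norm is lower semicontinuous under uniform convergence and $\Risk$ is continuous there because $L$ is Lipschitz, so a genuine minimizer $h^Q$ exists and one may take $\eps=0$ from the outset. With the minimizer in hand the displayed estimate yields exactly the claimed bound, and the argument is otherwise identical to that of Lemma \ref{lemma low risk Lipschitz}.
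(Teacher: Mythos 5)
Your proof is correct and follows essentially the same route as the paper, which simply states that the lemma ``is proved exactly like Lemma \ref{lemma low risk Lipschitz}'': choose a competitor in the ball $\overline{B_Q}$, apply the $L^2$ direct approximation theorem, and use the $[L]$-Lipschitz bound together with $|y_x|\leq 1$ and Cauchy--Schwarz. Your extra step justifying that the infimum over $\{\|h\|_\B\leq Q\}$ is attained (compactness of Barron balls in $C^0(\spt\P)$ plus lower semicontinuity of the norm) is a point the paper silently assumes --- it later writes $\min$ and $\argmin$ in Theorem \ref{theorem a priori unrealizable} without comment --- so making it explicit is a legitimate and welcome addition rather than a deviation.
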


The Lemma is proved exactly like Lemma \ref{lemma low risk Lipschitz}. The quantity $\inf_{\|h\|_\B\leq \lambda Q} \Risk(h)$ is related to $\rho(\lambda Q)$ and the two agree if $L(z) = \max\{0, 1+z\}^2$ (which is not a Lipschitz function).

\begin{theorem}[A priori estimates]\label{theorem a priori unrealizable}
Let $L$ be a loss function with Lipschitz-constant $[L]$. Consider the regularized (empirical) risk functional
\[
\widehat \Risk_{n,\lambda} (a, w,b) = \frac1n\sum_{i=1}^n L\big(- y_{x_i}\,f_{(a,w,b)}(x_i)\big) + \frac{\lambda}m\,\sum_{i=1}^m|a_i|\,\big[|w_i|+|b_i|\big]
\]
where $\lambda = \frac{\max\{1,R\}}{\sqrt m}$.
For any $\delta \in (0, 1)$, with probability at least $1-\delta$ over the choice of iid data points $x_i$ sampled from $\P$, the minimizer $(\hat a, \hat w, \hat b)$ satisfies
\begin{align*}
\Risk(\hat a, \hat w,\hat b)&\leq \inf_{Q>0} \left[\min_{\|h\|_\B\leq Q}\Risk(h) + \frac{2Q\,\max\{1,R\}}{\sqrt{m}}\right] + 2\,\max\{1,R\}\sqrt{\frac{\log(2d+2)}n}\\
	&\qquad + \left[L(0) + [L]\,\inf_{Q>0} \left[\sqrt{m}\,\frac{\min_{\|h\|_\B\leq Q}\Risk(h)}{\max\{1,R\}} + 2Q\right]\right]\,\sqrt{\frac{\log(2/\delta)}n}
\end{align*}
\end{theorem}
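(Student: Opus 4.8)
The plan is to reproduce the two-step template of Theorem~\ref{theorem a priori hinge}, with the exact interpolant replaced by the near-minimizer supplied by the ``Functions of low loss'' lemma and with extra care devoted to the fact that, in the unrealizable regime, the Barron norm of the empirical minimizer is genuinely data-dependent. Fix $Q>0$ and let $h^Q\in\B$ satisfy $\|h^Q\|_\B\le Q$ and $\Risk(h^Q)=\min_{\|h\|_\B\le Q}\Risk(h)$. The ``Functions of low loss'' lemma (applied to the population distribution $\P$) produces a fixed two-layer network $h_m^Q$ with path-norm at most $Q$ such that
\[
\Risk(h_m^Q)\le \min_{\|h\|_\B\le Q}\Risk(h)+\frac{[L]\,Q\,\max\{1,R\}}{\sqrt m}.
\]
Because $h_m^Q$ does not depend on the sample, it is a legitimate competitor for the empirical minimization.

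Next I would use optimality of $(\hat a,\hat w,\hat b)$ against $h_m^Q$. Since the penalty equals the path-norm and the latter is at most $Q$,
\[
\widehat\Risk_{n,\lambda}(\hat a,\hat w,\hat b)\le \widehat\Risk_{n,\lambda}(h_m^Q)\le \widehat\Risk_n(h_m^Q)+\lambda Q.
\]
To turn the right-hand side into a deterministic quantity I would pass from $\widehat\Risk_n(h_m^Q)$ to $\Risk(h_m^Q)$ by a concentration estimate for the single fixed function $h_m^Q$ (equivalently, Lemma~\ref{lemma rademacher bound} on the ball $\H_Q$, whose uniform loss bound $\bar c_Q\le L(0)+[L]\,Q\,\max\{1,R\}$ and Rademacher complexity are controlled by $Q$ alone). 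Combining with the approximation bound yields, with high probability,
\[
\widehat\Risk_{n,\lambda}(\hat a,\hat w,\hat b)\le \min_{\|h\|_\B\le Q}\Risk(h)+\frac{2Q\,\max\{1,R\}}{\sqrt m}+\bigl(\text{a term of order }\sqrt{\log(2/\delta)/n}\,\bigr).
\]
As the empirical loss is nonnegative, dividing by $\lambda=\max\{1,R\}/\sqrt m$ gives the central norm bound
\[
\|\hat h_m\|_\B\le \frac1\lambda\,\widehat\Risk_{n,\lambda}(\hat a,\hat w,\hat b)\le \sqrt m\,\frac{\min_{\|h\|_\B\le Q}\Risk(h)}{\max\{1,R\}}+2Q=:P.
\]

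With the deterministic radius $P$ secured, I would finish with a single uniform deviation step exactly as in Theorem~\ref{theorem a priori hinge}: apply Lemma~\ref{lemma rademacher bound} to $\H_P$, using the Barron Rademacher estimate of Lemma~\ref{lemma rademacher barron} for the complexity term and the pointwise bound $\|h\|_{L^\infty(\P)}\le\|h\|_\B\max\{1,R\}$ to control the uniform loss bound $\bar c$ (which is $L(0)$ plus a multiple of $[L]\,P$). This produces
\[
\Risk(\hat h_m)\le \widehat\Risk_n(\hat h_m)+2\,\max\{1,R\}\sqrt{\frac{\log(2d+2)}n}+\bar c\,\sqrt{\frac{\log(2/\delta)}n}.
\]
Bounding $\widehat\Risk_n(\hat h_m)\le\widehat\Risk_{n,\lambda}(\hat a,\hat w,\hat b)$ by the estimate from the previous step, inserting the expression for $\bar c$, and finally taking the infimum over $Q>0$ gives the three-term bound in the statement.

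The main obstacle is the circularity created by the data-dependence of $\|\hat h_m\|_\B$: the uniform Rademacher bound can only be applied on a norm ball known in advance to contain the minimizer, yet the natural bound on that norm passes through the random number $\widehat\Risk_n(h_m^Q)$. The resolution is the order of operations above --- first replace the competitor's \emph{empirical} risk by its \emph{population} value $\min_{\|h\|_\B\le Q}\Risk(h)$ (which is legitimate because $h_m^Q$ is sample-independent), thereby fixing the deterministic radius $P$, and only afterwards invoke the uniform deviation bound on $\H_P$. Two probabilistic estimates are used, so their failure probabilities are merged by a union bound and absorbed into the single parameter $\delta$; all constants collapse to those of Theorem~\ref{theorem a priori hinge} once $\min_{\|h\|_\B\le Q}\Risk(h)=0$, i.e.\ in the strongly solvable case.
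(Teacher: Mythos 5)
Your proposal follows essentially the same route as the paper's proof: compare the empirical minimizer against the sample-independent competitor supplied by the ``Functions of low loss'' lemma, extract a deterministic Barron-norm radius from the penalty term, and conclude with the uniform Rademacher deviation bound on that ball. You are in fact more explicit than the paper at the one delicate point --- passing from the competitor's \emph{empirical} risk to the population value $\min_{\|h\|_\B\le Q}\Risk(h)$ requires a separate concentration estimate for the single fixed function $h_m^Q$, union-bounded with the uniform step, which the paper's one-line chain of inequalities silently absorbs; carried out literally, this step also adds an $O\bigl(\sqrt{m}\,\sqrt{\log(2/\delta)/n}\bigr)$ contribution to the norm radius $P$ that neither you nor the paper track in the final constants.
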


A more concrete bound under assumptions on $\rho$ in the case of $L^2$-regression can be found in Appendix \ref{appendix regression estimates}.

\begin{proof}
Let $Q>0$ and choose $h^* \in \argmin_{\|h\|_\B\leq Q}\Risk(h)$. Let $h_m$ be like in the direct approximation theorem. Then
\[
\widehat\Risk_{n,\lambda}(\hat a, \hat w, \hat b) \leq \Risk_{n,\lambda} (a, w,b)  \leq \min_{\|h\|_\B\leq Q}\Risk(h) + 2\lambda\,Q.
\]
We find that
\begin{align*}
\|h_{(\hat a, \hat w, \hat b)}\|_\B &\leq \inf_{Q>0} \left[\sqrt{m}\,\frac{\min_{\|h\|_\B\leq Q}\Risk(h)}{\max\{1,R\}} + 2Q\right], \\
 \widehat \Risk_n(h_{(\hat a, \hat w, \hat b)}) &\leq \inf_{Q>0} \left[\min_{\|h\|_\B\leq Q}\Risk(h) + \frac{2Q\,\max\{1,R\}}{\sqrt{m}}\right].
\end{align*}
Using the Rademacher generalization bound, the result is proved.
\end{proof}

In particular, the Lemma gives insight into the fact that the penalty parameter $\lambda = \frac{\max\{1,R\}}{\sqrt{m}}$ can successfully be chosen independently of unknown quantities like $\P, Q$ and $\rho$.

\begin{remark}
The same strategies as above can of course be used to prove mostly correct classification results or a priori bounds in other function classes or for multi-label classification and different loss functions.
\end{remark}

\section{Concluding Remarks}

We have presented a simple framework for classification problems and given a priori error bounds for regularized risk functionals in the context of two-layer neural networks. The main bounds are given in 
\begin{enumerate}
\item Theorem \ref{theorem a priori hinge} for hinge loss,
\item Theorems \ref{theorem a priori logistic} and \ref{theorem a priori logistic two} for a regularized hinge loss with exponential tail,
\item Theorems \ref{theorem a priori cross-entropy lipschitz} and \ref{theorem a priori cross-entropy smooth} for multi-label classification, and 
\item Theorem \ref{theorem a priori unrealizable} for general Lipschitz-continuous loss functions in binary classification problems of infinite complexity.
\end{enumerate}

The results cover the  most relevant cases, but are not exhaustive. The same techniques can easily be extended to other cases. The extension to neural networks with multiple hidden layers is discussed in Section \ref{section multi-layer}. The very pessimistic complexity bound discussed in Theorem \ref{theorem complexity bound} suggests that there remain hard-to-solve classification problems. A major task is therefore to categorize classification problems which can be solved efficiently using neural networks and obtain complexity estimates for real data sets.

\section*{Acknowledgments}

The authors would like to thank Chao Ma and Lei Wu for helpful discussions. This work was in part supported by a gift to Princeton University from iFlytek.

\appendix
\section{Regression problems with non-Barron target functions}\label{appendix regression estimates}

As suggested in the introduction, also classification problems can be approached by $L^2$-regression where the target function takes discrete values on the support of the data distribution $\P$. If the classes $\overline C_+, \overline C_-$ have a positive spatial distance, the target function coincides with a Barron function on $\spt\,\P$, while the target function fails to be continuous and in particular Barron if the classes touch. 

We extend the analysis of infinite complexity classification problems to the general case of $L^2$-regression. For a target function $f^*\in L^2(\P)$ consider the {\em approximation error decay function}
\[
\rho(Q) = \inf_{\|f\|_\B\leq Q} \|f-f^*\|_{L^2(\P)}^2.
\]

\begin{lemma}[Approximation error decay estimates]
Assume that $f^*$ is a Lipschitz-continuous function on $[0,1]^d$ with Lipschitz constant at most $1$. Then
\begin{enumerate}
\item If $\P$ is the uniform distribution on the unit cube, there exists $f^*$ such that for all $\gamma> \frac{4}{d-2}$ there is a sequence of scales $Q_n\to \infty$ such that $\rho(Q_n) \geq \overline C\,Q_n^{-\gamma}$ for all $n\in\N$.
\item For any distribution $\P$, the estimate $\rho(Q) \leq C_d\,Q^{-2/d}$ holds.
\end{enumerate}
\end{lemma}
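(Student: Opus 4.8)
The plan is to prove the two parts by opposite methods: a constructive mollification argument for the upper bound~(2), and a metric–entropy comparison for the lower bound~(1).

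\emph{Upper bound.} Since $\|f-f^*\|_{L^2(\P)}\le\|f-f^*\|_{L^\infty([0,1]^d)}$ for every probability measure $\P$ on the cube, it suffices to approximate $f^*$ uniformly. First I would extend $f^*$ to a compactly supported Lipschitz function $g$ on $\R^d$ (a McShane extension followed by a smooth cutoff), with $g=f^*$ on $[0,1]^d$ and $\mathrm{Lip}(g)$, $\spt g$ controlled by $d$. Mollifying at scale $\eps$, the function $g_\eps=\eta_\eps*g$ satisfies $\|g_\eps-f^*\|_{L^\infty([0,1]^d)}\le C\eps$ because $g$ is Lipschitz, while a spectral estimate on $\widehat{g_\eps}=\hat\eta(\eps\,\cdot)\,\hat g$ combined with Barron's criterion $\|g_\eps\|_\B\le\int_{\R^d}|\widehat{g_\eps}(\xi)|\,|\xi|\,\d\xi$ bounds the Barron norm by $C_d\,\eps^{-d}$. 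Balancing the error $\eps$ against the norm budget $Q=C_d\eps^{-d}$ gives $\eps\sim Q^{-1/d}$, hence $\rho(Q)\le\|g_\eps-f^*\|_{L^\infty}^2\le C_d\,Q^{-2/d}$.

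\emph{Lower bound, strategy.} The idea is that bounded-norm Barron balls are entropically small in $L^2(\P)$ while the Lipschitz unit ball $\mathrm{Lip}_1$ is entropically large, so a uniformly fast approximation rate is impossible. I would use two covering-number estimates in $L^2(\P)$ with $\P$ uniform. A Varshamov--Gilbert packing of $[0,1]^d$ by $\eps^{-d}$ disjoint bumps of height $\eps$ gives $\log N(\eps,\mathrm{Lip}_1,L^2(\P))\ge c\,\eps^{-d}$. Conversely, the direct approximation theorem (Theorem~\ref{direct approximation theorem barron}) places every $f$ in the unit ball $\overline B_1\subset\B$ within $L^2(\P)$-distance $m^{-1/2}\max\{1,R\}$ of an $m$-neuron network; quantizing the $O(m)$ parameters at resolution $\sim\delta/m$ yields $\log N(\delta,\overline B_1,L^2(\P))\le C_d\,\delta^{-2}\log(1/\delta)$. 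The dimension-independent exponent $-2$, which reflects the Rademacher bound of Lemma~\ref{lemma rademacher barron}, is the decisive input.

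\emph{Lower bound, comparison and main obstacle.} Suppose toward a contradiction that $\rho_f(Q)\le C\,Q^{-\gamma}$ for every $1$-Lipschitz $f$, i.e.\ every such $f$ lies within $L^2(\P)$-distance $\sqrt C\,Q^{-\gamma/2}$ of $\overline B_Q=Q\,\overline B_1$. Choosing $Q\sim\eps^{-2/\gamma}$ makes this distance $\le\eps/2$, so any $\eps/2$-net of $\overline B_Q$ is an $\eps$-net of $\mathrm{Lip}_1$; using $N(\eps/2,Q\overline B_1)=N(\eps/(2Q),\overline B_1)$ with the two estimates above gives $c\,\eps^{-d}\le C_d\,(Q/\eps)^2\log(Q/\eps)\sim\eps^{-2-4/\gamma}\log(1/\eps)$, which fails as $\eps\to0$ precisely when $\gamma>\tfrac{4}{d-2}$. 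Hence no uniform constant works, and a condensation-of-singularities (Baire category) argument over the dyadic scales $\eps=2^{-n}$ upgrades this to a single $f^*\in\mathrm{Lip}_1$ with $\rho(Q_n)\ge\overline C\,Q_n^{-\gamma}$ along a sequence $Q_n\to\infty$, for every $\gamma>\tfrac{4}{d-2}$. The routine part is~(2); the crux of~(1) is obtaining the Barron covering number with the \emph{sharp} exponent $\delta^{-2}$ (any larger exponent shifts the threshold away from $4/(d-2)$, so the log and dimensional factors must be kept out of the exponent), together with the promotion from ``not all $f$'' to a single $f^*$ that is hard along a whole sequence of scales --- for which I would either use Baire category on the closed, nowhere-dense sets of functions obeying the rate, or diagonalize an explicit lacunary sum of dyadic bumps and estimate its distance to $\overline B_{Q_n}$ directly.
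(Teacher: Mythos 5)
For part (2) your mollification argument is exactly the paper's: mollify at scale $\eps$, use the Lipschitz bound to get $\|f_\eps-f^*\|_{L^\infty}\le C\eps$, bound $\|f_\eps\|_\B\le C_d\eps^{-d}$ via the spectral criterion as in Theorem \ref{theorem complexity bound}, and balance $\eps\sim Q^{-1/d}$. That part is fine and matches the paper.

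For part (1) the situation is different: the paper's proof, as written, only establishes the upper bound and offers no argument at all for the lower bound, so your entropy comparison is a genuine addition rather than a variant. The mechanism you propose is the right one, and your bookkeeping reproduces the stated threshold exactly: $\log N(\eps,\mathrm{Lip}_1,L^2(\P))\gtrsim\eps^{-d}$ from a Varshamov--Gilbert packing by bumps, $\log N(\delta,\overline B_1,L^2(\P))\lesssim_d\delta^{-2}\log(1/\delta)$ from the direct approximation theorem plus parameter quantization ($m\sim\delta^{-2}$ neurons, $m(d+2)$ parameters), and the comparison $\eps^{-d}\lesssim\eps^{-2-4/\gamma}\log(1/\eps)$ fails precisely when $\gamma>4/(d-2)$. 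Two caveats. First, what this yields directly is: for each $\gamma>4/(d-2)$ and each small $\eps$ there is \emph{some} $1$-Lipschitz $f_\eps$ with $\rho_{f_\eps}(Q_\eps)\ge\overline C Q_\eps^{-\gamma}$; the statement needs a single $f^*$ that is bad along a sequence $Q_n\to\infty$ simultaneously for every such $\gamma$. You correctly flag this upgrade as the crux but do not carry it out. The Baire route is viable --- the bad set is a countable union over rational $\gamma>4/(d-2)$ and integers $N$ of the sets $\{f:\rho_f(Q)\le\overline CQ^{-\gamma}\ \forall Q\ge N\}$, each closed since $f\mapsto\dist_{L^2}(f,\overline B_Q)$ is $1$-Lipschitz --- but nowhere-density requires a \emph{localized} packing argument: you must perturb an arbitrary $f_0$ in the set by small-amplitude fine-scale bumps while remaining $1$-Lipschitz, which forces you to work inside $\mathrm{Lip}_{1-\eta}$ or rescale, and this is where the argument still needs to be written down. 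Second, your covering bound for $\overline B_1$ carries a factor $\log(1/\delta)$ and a $d$-dependent constant in front of $\delta^{-2}$; as you note these do not move the exponent, but the final statement then only holds for $\gamma$ strictly above $4/(d-2)$ and along a subsequence, which is consistent with how the lemma is phrased. In short: part (2) is complete, and part (1) is a correct and well-chosen strategy whose last step (condensation to a single $f^*$) remains a sketch.
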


\begin{proof}
Without loss of generality, we may assume that $f^*$ is $1$-Lipschitz continuous on $\R^d$. We note that the mollified function $f_\eps = \eta_\eps *f^*$ satisfies
\begin{align*}
\|f_\eps - f^*\|_{L^2(\P)} &\leq \|f_\eps - f^*\|_{L^\infty([0,1]^d)}\\
	&\leq \sup_x \left|f^*(x) - \int_{\R^d} f^*(y)\eta\left(\frac{x-y}\eps\right)\,\eps^{-d}\dy\right|\\
	&\leq \sup_x \int_{\R^d} \big|f^*(x)-f^*(y)\big| \eta\left(\frac{x-y}\eps\right)\,\eps^{-d}\dy\\
	&\leq \eps \int_{\R^d}|z| \eta(z)\,\dz.
\end{align*}
where $c_{d,\eta}$ is very small for large $d$.
On the other hand $\|f_\eps\|_\B\leq C\eps^{-d}$ like in the proof of Theorem \ref{theorem complexity bound}.
\end{proof}

Thus (if the target function is at least Lipschitz continuous), we heuristically expect behaviour like $\rho(Q) \sim Q^{-\alpha}$ for some $\alpha> \frac 2d$.

\begin{lemma}[Approximation by small networks]
Assume that $\spt\,\P\subseteq [-R,R]^d$. 
For every $m\in \N$, there exists a neural network $f_m$ with $m$ neurons such that
\[\showlabel
\|f^*-f_m\|_{L^2(\P)}^2 \leq \inf_{Q>0}2\left[\rho(Q) + \frac{Q^2\,\max\{1,R\}^2}m\right].
\]
\end{lemma}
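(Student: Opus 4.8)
The plan is to combine the definition of the approximation error decay function $\rho$ with the $L^2$-version of the direct approximation theorem (Theorem~\ref{direct approximation theorem barron}), exactly as in the earlier mostly correct classification lemma. The only extra ingredient is the elementary inequality $(a+b)^2 \le 2a^2 + 2b^2$, which converts a triangle inequality on $L^2(\P)$-norms into the factor $2$ appearing on the right-hand side.

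First I would fix $Q > 0$ and choose a Barron function $h^Q \in \B$ with $\|h^Q\|_\B \le Q$ that nearly realizes the infimum defining $\rho(Q)$, say $\|h^Q - f^*\|_{L^2(\P)}^2 \le \rho(Q) + \eps$ for arbitrary $\eps > 0$ (one may take $\eps = 0$ if the infimum over the closed Barron ball is attained, but working with an approximate minimizer avoids arguing attainment). Since $\spt(\P) \subseteq [-R,R]^d$ by assumption, Theorem~\ref{direct approximation theorem barron} applies to $h^Q$ and produces a two-layer network $f_m$ with $m$ neurons satisfying $\|f_m\|_\B \le \|h^Q\|_\B \le Q$ and
\[
\|f_m - h^Q\|_{L^2(\P)} \le \frac{\|h^Q\|_\B\,\max\{1,R\}}{\sqrt{m}} \le \frac{Q\,\max\{1,R\}}{\sqrt{m}}.
\]
Combining these two estimates through the triangle inequality in $L^2(\P)$ and $(a+b)^2 \le 2a^2 + 2b^2$ gives
\[
\|f_m - f^*\|_{L^2(\P)}^2 \le 2\,\|f_m - h^Q\|_{L^2(\P)}^2 + 2\,\|h^Q - f^*\|_{L^2(\P)}^2 \le 2\left[\frac{Q^2\,\max\{1,R\}^2}{m} + \rho(Q) + \eps\right].
\]
Letting $\eps \to 0$ and then optimizing over $Q$ yields the stated bound.

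I do not anticipate any serious obstacle, since the argument is a routine pairing of the direct approximation theorem with the definition of $\rho$. The one point deserving a word of care is attainment: the network $f_m$ depends on the chosen $Q$, so the passage to $\inf_{Q>0}$ should be read as selecting, for each target value strictly above the infimum, a suitable $Q$ together with its associated network; the $\eps$-device above likewise handles the case where the infimum defining $\rho(Q)$ is not attained. If one prefers an exactly attained bound, one can observe that $Q \mapsto \rho(Q) + Q^2\max\{1,R\}^2/m$ is bounded below and tends to $+\infty$ as $Q \to \infty$, so its infimum over $(0,\infty)$ is approached along some sequence $Q_j$, which is all the construction requires.
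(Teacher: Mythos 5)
Your proposal is correct and matches the paper's own (very brief) argument: the paper likewise combines the direct approximation theorem with the inequality $\|f^*-f_m\|_{L^2}^2 \leq 2\big[\|f^*-f\|_{L^2}^2 + \|f-f_m\|_{L^2}^2\big]$ and optimizes over $Q$. Your extra care about attainment of the infima is a reasonable refinement but not a substantive departure.
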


The proof follows from the direct approximation theorem and the inequality $\|f^*-f_m\|_{L^2}^2 \leq2 \big[ \|f^*-f\|_{L^2}^2 + \|f-f_m\|_{L^2}^2\big]$. In particular, if $\rho(Q) \leq \bar c\,Q^{-\alpha}$, we can optimize $Q = \left(\frac{m\alpha\,\bar c}{2\,\max\{1,R\}^2}\right)^\frac1{2+\alpha}$ to see that there exists $f_m$ such that
\begin{align*}
\frac12\,\|f^*-f_m\|_{L^2(\P)}^2 &\leq \bar c \left(\frac{m\alpha\,\bar c}{2\,\max\{1,R\}^2}\right)^{- \frac\alpha{2+\alpha}} + \frac{\max\{1,R\}^2}m\,\left(\frac{m\alpha\,\bar c}{2\,\max\{1,R\}^2}\right)^\frac2{2+\alpha}\\
	&\leq \left[ \left(\frac 2\alpha\right)^\frac{\alpha}{2+\alpha} + \left(\frac{\alpha}2\right)^\frac2{2+\alpha} \right] \bar c^{\frac2{2+\alpha}}\,\left(\frac{\max\{1,R\}^2}{m}\right)^{\frac{\alpha}{2+\alpha}}.
	\showlabel
\end{align*}

 Let 
\[
\widehat\Risk_{n}(a,w,b) = \frac1{n}\sum_{i=1}^n\big|f^*- f_{(a,w,b)}\big|^2(x_i), \qquad \widehat \Risk_{n,\lambda}(a, w) +  \left[\frac\lambda m\sum_{i=1}^{m} |a_i|\,\|w_i\|\right]^2
\]
for $\lambda = \frac{\max\{1,R\}}{\sqrt m}$. For convenience, we abbreviate 
\[
C_\alpha =2 \left[ \left(\frac 2\alpha\right)^\frac{\alpha}{2+\alpha} + \left(\frac{\alpha}2\right)^\frac2{2+\alpha} \right] \bar c^{\frac2{2+\alpha}}.
\]

\begin{lemma}[A priori estimate]
Assume that $\rho(Q) \leq \bar c\,Q^{-\alpha}$. Then with probability at least $1-\delta$ over the choice of an iid training sample, the minimizer $(\hat a, \hat w, \hat b)$ of the regularized risk functional $\widehat \Risk_{n,\lambda}$ satisfies the a priori error estimate
\begin{align*}
\|f^*- f_{(\hat a, \hat w, \hat b)}\|_{L^2(\P)}^2 &\leq 2\,C_\alpha\,\left(\frac{\max\{1,R\}^2}{m}\right)^{\frac{\alpha}{2+\alpha}}\\
	&+ 4\,C_\alpha\left(\frac{m\alpha\,\bar c}{2\,\max\{1,R\}^2}\right)^\frac1{2+\alpha} \sqrt{\frac{\log(2d+2)}n}\\
	&+4\,C_\alpha^2\left(\frac{m\alpha\,\bar c}{2\,\max\{1,R\}^2}\right)^\frac2{2+\alpha} \max\{1,R\}^2\,\sqrt{\frac{\log(2/\delta)}n}
\end{align*}
\end{lemma}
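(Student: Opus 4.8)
The plan is to follow the template of Theorems \ref{theorem a priori logistic two} and \ref{theorem a priori unrealizable}: exhibit a finite network of small empirical regularized risk, read off an a priori bound on the Barron norm of the minimizer from the quadratic penalty, and then pass from empirical to population risk via the Rademacher generalization bound. Throughout write $M = \max\{1,R\}$ and let $\bar Q = \left(\frac{m\alpha\,\bar c}{2M^2}\right)^{1/(2+\alpha)}$ be the scale at which $\rho(Q) + Q^2M^2/m$ is balanced, exactly as in the optimization preceding the statement. At that scale the approximation-by-small-networks estimate gives a network $f_m$ with $\|f_m\|_\B \leq \bar Q$ and squared $L^2$-error at most $C_\alpha\,(M^2/m)^{\alpha/(2+\alpha)}$.

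Since the minimizer is taken over finite networks and controls the \emph{empirical} loss, I would run the approximation against $\P_n = \frac1n\sum_i \delta_{x_i}$. Fixing a population-near-optimal $h^{\bar Q}\in\B$ with $\|h^{\bar Q}\|_\B \leq \bar Q$ and $\|f^*-h^{\bar Q}\|_{L^2(\P)}^2 = \rho(\bar Q) \leq \bar c\,\bar Q^{-\alpha}$, the $L^2$-direct approximation theorem (Theorem \ref{direct approximation theorem barron}) applied to $\P_n$ produces $f_m$ with path-norm at most $\bar Q$ and $\|f_m - h^{\bar Q}\|_{L^2(\P_n)} \leq \bar Q\,M/\sqrt m$, so that $\widehat\Risk_n(f_m) \leq 2\|f^*-h^{\bar Q}\|_{L^2(\P_n)}^2 + 2\bar Q^2 M^2/m$ by the triangle inequality. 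Testing the minimizer against $f_m$ then yields $\widehat\Risk_{n,\lambda}(\hat a,\hat w,\hat b) \leq \widehat\Risk_n(f_m) + \lambda^2\bar Q^2$, which with $\lambda = M/\sqrt m$ is of order $C_\alpha(M^2/m)^{\alpha/(2+\alpha)}$.

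Because the penalty dominates the squared path-norm, which dominates $\|\hat h_m\|_\B^2$, this simultaneously keeps $\widehat\Risk_n(\hat h_m)$ small and, after dividing by $\lambda^2 = M^2/m$, gives the a priori norm bound $\|\hat h_m\|_\B \leq C\,\bar Q$. I would then apply Lemma \ref{lemma rademacher bound} on the ball $\{\|h\|_\B \leq C\bar Q\}$: the squared loss $z\mapsto |f^*-z|^2$ is not globally Lipschitz, so I restrict to this ball, on which $\|h\|_{C^0}\leq C\bar Q M$, the loss is bounded by $\bar c_0 \sim (\bar Q M)^2$, and the loss is Lipschitz with constant $\sim \bar Q M$; invoking the contraction lemma \cite[Lemma 26.9]{shalev2014understanding} and the Barron estimate $\Rad(\{\|h\|_\B\leq C\bar Q\};S) \leq 2C\bar Q M\sqrt{\log(2d+2)/n}$ of Lemma \ref{lemma rademacher barron}. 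Collecting the approximation term, the Rademacher term ($\propto \bar Q\sqrt{\log(2d+2)/n}$) and the probabilistic term ($\propto \bar Q^2 M^2\sqrt{\log(2/\delta)/n}$) and substituting $\bar Q$ reproduces the three summands claimed.

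The main obstacle is twofold, both issues arising from the non-Lipschitz quadratic loss over the empirical measure. First, the generalization step admits no global Lipschitz constant, so the norm bound must be proved \emph{before} the contraction lemma is applied and the loss bound $\bar c_0$ fed into Lemma \ref{lemma rademacher bound} must be expressed through that bound, which is what generates the quadratic factors $C_\alpha^2\bar Q^2 M^2$ in the last summand. Second, and more delicate, the hypothesis $\rho(Q)\leq \bar c\,Q^{-\alpha}$ concerns the population measure, whereas the argument above needs $\|f^*-h^{\bar Q}\|_{L^2(\P_n)}^2$; this empirical approximation error equals $\rho(\bar Q)$ only in expectation, so one must either establish its concentration (folding the fluctuation into the $\sqrt{\log(2/\delta)/n}$ term) or, more cleanly, carry the entire direct-approximation-plus-triangle-inequality estimate on $\P_n$ and rely on the generalization inequality to correct the resulting empirical bound back to the population risk.
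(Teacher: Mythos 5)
Your proposal follows essentially the same route as the paper's own (very terse) proof, which likewise takes the competitor $f_m$ with $\|f_m\|_\B\leq \left(\frac{m\alpha\bar c}{2\max\{1,R\}^2}\right)^{1/(2+\alpha)}$ and empirical error $C_\alpha(\max\{1,R\}^2/m)^{\alpha/(2+\alpha)}$, reads off the norm bound on the minimizer from the quadratic penalty, and closes with the Rademacher generalization bound as in Theorem \ref{theorem a priori logistic two}. The two subtleties you flag --- that the contraction step needs the a priori norm bound first because the square loss is only locally Lipschitz, and that $\rho$ controls the population rather than the empirical approximation error of the intermediate function $h^{\bar Q}$ --- are real and are silently glossed over in the paper's one-line proof, so your more careful treatment is if anything an improvement rather than a divergence.
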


The proof closely follows that of Theorem \ref{theorem a priori logistic two}: We use a candidate function $f_m$ such that
\[
\|f_m\|_\B\leq \left(\frac{m\alpha\,\bar c}{2\,\max\{1,R\}^2}\right)^\frac1{2+\alpha}, \qquad \|f^*-f_m\|_{L^2(\P_n)}^2 \leq C_\alpha\left(\frac{\max\{1,R\}^2}{m}\right)^{\frac{\alpha}{2+\alpha}}
\]
as an energy competitor, obtain norm bounds for the true minimizer and use Rademacher generalization bounds. Note that the estimate loses meaning as $\alpha\to0$ and that the term $m^{\frac{2}{2+\alpha}}n^{-1/2}$ may not go to zero in the overparametrized regime if $\alpha$ is too small.


\end{document}